\def\trtr{train-train~}
\def\trval{train-val~}
\def\train{{\sf train}}
\def\test{{\sf test}}
\newcommand{\bzero}{\mathbf{0}}
\setlist[itemize]{leftmargin=*}
\setlist[enumerate]{leftmargin=*}
\def\shownotes{1}  %
\newcommand{\authnote}[2]{$\ll$\textsf{\small #1 notes: #2}$\gg$}
\newcommand{\authnote}[2]{}
\colorlet{linkequation}{blue}
\title{How Important is the Train-Validation Split in Meta-Learning?}
\author{
  Yu Bai\thanks{Salesforce Research. Email: \texttt{\{yu.bai,pzhou,huan.wang,cxiong\}@salesforce.com}}
  \and
  Minshuo Chen\thanks{Georgia Tech. Email:
  \texttt{\{mchen393,tourzhao\}@gatech.edu}}
  \and
  Pan Zhou\footnotemark[1]
  \and
  Tuo Zhao\footnotemark[2]
  \and
  Jason D. Lee\thanks{Princeton University. Email:
    \texttt{jasonlee@princeton.edu}}
  \and
  Sham Kakade\thanks{University of Washington. Email: \texttt{sham@cs.washington.edu}}
  \and
  Huan Wang\footnotemark[1]
  \and
  Caiming Xiong\footnotemark[1]
}
\setlist[itemize]{leftmargin=*}
\setlist[enumerate]{leftmargin=*}
\setlist[itemize]{leftmargin=*}
\setlist[enumerate]{leftmargin=*}
\mathchardef\mhyphen="2D
\renewcommand{\sp}{{\sf tr\mhyphen val}}
\newcommand{\nosp}{{\sf tr\mhyphen tr}}
\renewcommand{\train}{{\sf train}}
\renewcommand{\test}{{\sf test}}
\newcommand{\val}{{\sf val}}
\newcommand{\alg}{{\sf Alg}}
\definecolor{C0}{HTML}{1F77B4}
\definecolor{C1}{HTML}{FF7F0E}
\definecolor{C2}{HTML}{2CA02C}
\definecolor{C3}{HTML}{D62728}
\definecolor{C4}{HTML}{9467BD}
\definecolor{C5}{HTML}{8C564B}
\begin{document}

\maketitle

\begin{abstract}
  Meta-learning aims to perform fast adaptation on a new task through learning a “prior” from multiple existing tasks. A common practice in meta-learning is to perform a train-validation split (\emph{train-val method}) where the prior adapts to the task on one split of the data, and the resulting predictor is evaluated on another split. Despite its prevalence, the importance of the train-validation split is not well understood either in theory or in practice, particularly in comparison to the more direct \emph{train-train method}, which uses all the per-task data for both training and evaluation.
  
  We provide a detailed theoretical study on whether and when the train-validation split is helpful in the linear centroid meta-learning problem. In the agnostic case, we show that the expected loss of the train-val method is minimized at the optimal prior for meta testing, and this is not the case for the train-train method in general without structural assumptions on the data. In contrast, in the realizable case where the data are generated from linear models, we show that both the train-val and train-train losses are minimized at the optimal prior in expectation. Further, perhaps surprisingly, our main result shows that the train-train method achieves a \emph{strictly better} excess loss in this realizable case, even when the regularization parameter and split ratio are optimally tuned for both methods. Our results highlight that sample splitting may not always be preferable, especially when the data is realizable by the model. We validate our theories by experimentally showing that the train-train method can indeed outperform the train-val method, on both simulations and real meta-learning tasks.
\end{abstract}

\section{Introduction}

Meta-learning, also known as ``learning to learn'', has recently emerged as a powerful paradigm for learning to adapt to unseen tasks \citep{schmidhuber1987evolutionary}. The high-level methodology in meta-learning is akin to how human beings learn new skills, which is typically done by relating to certain prior experience that makes the learning process easier.
More concretely, meta-learning does not train one model for each individual task, 
but rather learns a ``prior'' model from multiple existing tasks so that it is able to quickly adapt to unseen new tasks. Meta-learning has been successfully applied to many real problems, including few-shot image classification \citep{finn2017model,snell2017prototypical}, hyper-parameter optimization \citep{franceschi2018bilevel}, low-resource machine translation \citep{gu2018meta} and short event sequence modeling \citep{xie2019meta}.

A common practice in meta-learning algorithms is to perform a \emph{sample splitting}, where the data within each task is divided into a \emph{training split} which the prior uses to adapt to a task-specific predictor, and a \emph{validation split} on which we evaluate the performance of the task-specific predictor~\citep{nichol2018first,rajeswaran2019meta,fallah2020convergence,wang2020global}. For example, in a 5-way $k$-shot image classification task, standard meta-learning algorithms such as MAML~\citep{finn2017model} use $5k$ examples within each task as training data, %
and use additional examples (e.g. $k$ images, one for each class) as validation data. %
This sample splitting is believed to be crucial %
as it matches the evaluation criterion at meta-test time, where we perform adaptation on training data from a new task but evaluate its performance on unseen data from the same task.

Despite the aforementioned importance, performing the train-validation split has a potential drawback from the data efficiency perspective --- Because of the split, neither the training nor the evaluation stage is able to use all the available per-task data. In the few-shot image classification example, each task has a total of $6k$ examples available, but the train-validation split forces us to use these data separately in the two stages. Meanwhile, performing the train-validation split is also not the only option in practice: there exist algorithms such as Reptile~\citep{nichol2018reptile} and Meta-MinibatchProx~\citep{zhou2019efficient} that can instead use all the per-task data for training the task-specific predictor and also perform well empirically on benchmark tasks. These algorithms modify the loss function in the outer loop so that the training loss no longer matches the meta-test loss, but may have the advantage in terms of data efficiency for the overall problem of learning the best prior. So far it is theoretically unclear how these two approaches (with/without train-validation split) compare with each other, which motivates us to ask the following
\begin{quote}
  {\bf Question}: Is the train-validation split \emph{necessary} and \emph{optimal} in meta-learning?
\end{quote}

In this paper, we perform a detailed theoretical study on the importance of the train-validation split. We consider the linear centroid meta-learning problem~\citep{denevi2018learning}, where for each task we learn a linear predictor that is close to a common centroid in the inner loop, and find the best centroid in the outer loop (see Section~\ref{section:prelim} for the detailed problem setup).
We
compare two meta-learning algorithms: the {\bf train-val method} which performs the standard train-validation split, and the {\bf train-train method} which uses all the per-task data for both training and evaluation.

We summarize our contributions as follows:

\begin{compactenum}[\textbullet]
  \setlength{\itemsep}{5pt}
\item %
  We show that the train-validation split is necessary in the general agnostic setting (Section~\ref{section:agnostic}): The expected loss of the \trval method equals the meta test-time loss. In contrast, the \trtr method has a different expected loss and is not minimized at the best test-time centroid in general, for which we construct a concrete counter-example. %

\item In the perhaps more interesting realizable setting, we show the train-validation split is not necessary: When the tasks are generated from noiseless linear models, the expected loss of both the \trval and \trtr methods are minimized at the best test-time centroid (Section~\ref{section:realizable-population}).
  
\item Our main theoretical contribution shows that {\bf the train-validation split is non-optimal} in the realizable setting: The MSE (and test loss) of the two methods concentrates sharply around $C^{\set{\sp, \nosp}}/T$ when $T$ (the number of tasks) is large, where the constants depend on the \{dimension, per-task sample size, regularization parameter\}. A precise comparison of constants further shows that $C^{\nosp}<C^{\sp}$ when we optimally tune the regularization parameter in both methods (Section~\ref{section:realizable-mse}). Thus, in the realizable setting, the \trtr method performs strictly better than the \trval method, which is in stark contrast with the agnostic case. This result provides a novel insight into the effect of the train-validation split on the sample complexity of meta-learning.

\item We perform meta-learning experiments on simulations and benchmark few-shot image classification tasks, showing that the \trtr method consistently outperforms the \trval method (Section~\ref{appendix:experiments} \& Appendix~\ref{appendix:deep}). This validates our theories and presents empirical evidence that sample-splitting may not be crucial; methods that utilize the per-task data more efficiently may be preferred.

\item On the technical end, our main results in Section~\ref{section:realizable} build on concentration analyses on a group of ridge-covariance matrices, as well as tools from random matrix theory in the proportional regime, which may be of broader interest. (See Section~\ref{section:overview-techniques} for an overview of techniques.)
\end{compactenum}

\ifdefined\minusspace\vspace{-0.5em}\else\fi
\subsection{Related work}
\ifdefined\minusspace\vspace{-0.5em}\else\fi

\paragraph{Meta-learning and representation learning theory} \citet{baxter2000model} provided the first theoretical analysis of meta-learning via covering numbers, and \citet{maurer2016benefit} improved the analysis via Gaussian complexity techniques. Another recent line of theoretical work analyzed gradient-based meta-learning methods~\citep{denevi2018incremental,finn2019online,khodak2019adaptive,ji2020convergence} and showed guarantees for convex losses by using tools from online convex optimization. \citet{saunshi2020sample} proved the success of Reptile in a one-dimensional subspace setting. \citet{wang2020guarantees} compared the performance of \trtr and \trval methods for learning the learning rate. \citet{denevi2018learning} proposed the linear centroid model studied in this paper, and provided generalization error bounds for \trval method; the bounds proved also hold for \trtr method, so are not sharp enough to compare the two algorithms. \citet{wang2020globalmaml, wang2020global} studied the convergence of gradient-based meta-learning by relating to the kernelized approximation. \citet{arnold2019maml} observe that MAML adapts better with a deep model architecture both empirically and theoretically.

On the representation learning end,~\citet{du2020few, tripuraneni2020provable, tripuraneni2020theory} showed that ERM can successfully pool data across tasks to learn the representation. Yet the focus is on the accurate estimation of the common representation, not on the fast adaptation of the learned prior. Several recent work compares MAML versus ERM style approches~\citep{gao2020modeling,collins2020does}; these comparisons couple the effect of sample splitting with other factors such as whether the algorithm uses per-task adaptation.
Lastly, we remark that there are analyses for other representation learning schemes~\citep{mcnamara2017risk,galanti2016theoretical,alquier2016regret}.

\paragraph{Empirical understandings of meta-learning}
\citet{raghu2020rapid}
showed that MAML with a full finetuning inner loop mostly learns the top-layer linear classifier and does not change the representation layers much. This result partly justifies the validity of our linear centroid meta-learning problem in which the features (representations) are fixed and only a linear classifier is learned.~\citet{goldblum2020unraveling} investigated the difference of the neural representations learned by classical training (supervised learning) and meta-learning, and showed that the meta-learned representation is better for downstream adaptation and makes classes more separated. %
Additionally,~\citet{setlur2020support, yao2020don} investigated alternative ways of choosing the support set (training split) in meta-learning.

\paragraph{Multi-task learning} Multi-task learning also exploits structures and similarities across multiple tasks. The earliest idea dates back to \citet{caruana1997,thrun1998,baxter2000model}, initially in connections to neural network models. They further motivated other approaches using kernel methods \citep{evgeniou2005learning,argyriou2007multi} and multivariate linear regression models with structured sparsity \citep{liu2009blockwise,liu2015calibrated}. More recent advances on deep multi-task learning focus on learning shared intermediate representations across tasks \cite{ruder2017overview}. These multi-task learning approaches usually minimize the joint empirical risk over all tasks, and the models for different tasks are enforced to share a large amount of parameters. In contrast, meta-learning only requires the models to share the same ``prior'', and is more flexible than multi-task learning.

\ifdefined\minusspace\vspace{-0.5em}\else\fi
\section{Preliminaries}
\label{section:prelim}
\ifdefined\minusspace\vspace{-0.5em}\else\fi
In this paper, we consider the standard meta-learning setting, in which we observe data from $T\ge 1$ supervised learning tasks, and the goal is to find a prior (or ``initialization'') using the combined data, such that the $(T+1)$-th new task may be solved sample-efficiently using the prior.

\paragraph{Linear centroid meta-learning}
We instantiate our study on the \emph{linear centroid meta-learning problem} (also known as learning to learn around a common mean, \citet{denevi2018learning}), where we wish to learn a task-specific linear predictor $\wb_t\in\R^d$ in the inner loop for each task $t$, and learn a ``centroid'' $\wb_0$ in the outer loop that enables fast adaptation to $\wb_t$ within each task:
\begin{quote}
  Find the best centroid $\wb_0\in\R^d$ for adapting to a linear predictor $\wb_t$ on each task $t$.
\end{quote}

Formally, we assume that we observe training data from $T\ge 1$ tasks, where for each task index $t$, we sample a task $p_t$ (a distribution over $\R^d\times \R$) from some distribution of tasks $\Pi$, and observe $n$ examples $(\Xb_t, \yb_t)\in\R^{n\times d}\times \R^n$ that are drawn i.i.d. from $p_t$:
\begin{align}
  \label{equation:general-distribution}
  p_t\sim \Pi,~(\Xb_t, \yb_t) = \set{(\xb_{t,i}, y_{t,i})}_{i=1}^n~\text{where}~(\xb_{t,i}, y_{t,i})\simiid p_t.
\end{align}
We do not make further assumptions on $(n,d)$; in particular, we allow the underdetermined setting $n\le d$, in which there exists (one or many) interpolators $\wt{\wb}_t$ that perfectly fit the data: $\Xb_t\wt{\wb}_t = \yb_t$.

\paragraph{Inner loop: Ridge solver with biased regularization towards the centroid}
Our goal in the inner loop is to find a linear predictor $\wb_t$ that fits the data in task $t$ while being close to the given ``centroid'' $\wb_0\in\R^d$. We instantiate this through ridge regression (i.e. linear regression with $L_2$ regularization) where the regularization biases $\wb_t$ towards the centroid. Formally, for any $\wb_0\in\R^d$ and any dataset $(\Xb, \yb)$, we consider the algorithm
\begin{align*}
  & \quad \cA_\lambda(\wb_0; \Xb, \yb) \defeq \argmin_{\wb} \frac{1}{n}\norm{\Xb\wb - \yb}^2 + \lambda\norm{\wb - \wb_0}^2  \\
  & = \wb_0 + \paren{ \Xb^\top\Xb + n\lambda \Ib_d}^{-1}\Xb^\top(\yb - \Xb\wb_0),
\end{align*}
where $\lambda>0$ is the regularization strength (typically a tunable hyper-parameter). As we regularize by $\norm{\wb-\wb_0}^2$, this inner solver encourages the solution to be close to $\wb_0$, as we desire. Such a regularizer is widely used in practical meta-learning algorithms such as MetaOptNet~\citep{lee2019meta} and Meta-MinibatchProx~\citep{zhou2019efficient}. In addition, as $\lambda\to 0$, this solver recovers gradient descent fine-tuning: we have
\begin{align*}
  & \quad \cA_0(\wb_0;\Xb, \yb) \defeq \lim_{\lambda\to 0} \cA_\lambda(\wb_0; \Xb, \yb) \\
  & = \wb_0 + \Xb^\dagger (\yb-\Xb\wb_0) = \argmin\nolimits_{\Xb\wb=\yb} \norm{\wb - \wb_0}^2,
\end{align*}
where $\Xb^{\dagger}\in\R^{d\times n}$ denotes the pseudo-inverse of $\Xb$.
This is the \emph{minimum-distance} interpolator of $(\Xb, \yb)$ and also the solution found by gradient descent \footnote{with a small step-size, or gradient flow.} on $\norm{\Xb\wb - \yb}^2$ initialized at $\wb_0$. Therefore our ridge solver with $\lambda>0$ can be seen as a generalized version of the gradient descent solver used in MAML~\citep{finn2017model}.

\paragraph{Outer loop: Learning the best centroid}
In the outer loop, our goal is to find the best centroid $\wb_0$. The standard approach in meta-learning is to perform a \emph{train-validation split}, that is, (1) execute the inner solver on a first split of the task-specific data, and (2) evaluate the loss on a second split, yielding a function of $\wb_0$ that we can optimize. 
This two-stage procedure can be written as
\begin{align*}
  & \textrm{Compute}~\wb_t(\wb_0) = \cA_\lambda(\wb_0; \Xb_t^{\train}, \yb_t^{\train})~~~\textrm{and}~~~\textrm{evaluate}~\norm{\yb_t^{\val} - \Xb_t^{\val}\wb_t(\wb_0)}^2.
\end{align*}
where $(\Xb_t^{\train}, \yb_t^{\train})=\set{(\xb_{t,i}, y_{t,i})}_{i=1}^{n_1}$ and $(\Xb_t^{\val}, \yb_t^{\val})=\set{(\xb_{t,i}, y_{t,i})}_{i=n_1+1}^{n}$ are two disjoint splits of the per-task data $(\Xb_t, \yb_t)$ of size $(n_1, n_2)$, with $n_1+n_2=n$. This amounts to the
\begin{tcolorbox}
  {\bf Train-val method}: Output $\hat{\wb}_{0,T}^{\sp}$ that minimizes
  \begin{equation}
    \label{equation:split-loss}
    \begin{aligned}
      \hat{L}^{\sp}_T(\wb_0) = \frac{1}{T}\sum_{t=1}^T \ell^{\sp}_{t}(\wb_0) \defeq \frac{1}{T}\sum_{t=1}^T \frac{1}{2n_2} \norm{ \yb_t^{\val} - \Xb_t^{\val} \cA_\lambda(\wb_0; \Xb_t^{\train}, \yb_t^{\train})}^2.
    \end{aligned}
  \end{equation}
\end{tcolorbox}

We compare the \trval method to an alternative version, where we do not perform the train-validation split, but instead use \emph{all the per-task data for both training and evaluation}. Formally, this is to consider the
\begin{tcolorbox}
  {\bf Train-train method}: Output $\hat{\wb}_{0,T}^{\nosp}$ that minimizes
  \begin{equation}
    \label{equation:non-split-loss}
    \begin{aligned}
      \hat{L}^{\nosp}_T(\wb_0) = \frac{1}{T}\sum_{t=1}^T \ell^{\nosp}_{t}(\wb_0) = \frac{1}{T}\sum_{t=1}^T \frac{1}{2n} \norm{\yb_t - \Xb_t\cA_\lambda(\wb_0; \Xb_t, \yb_t)}^2.
    \end{aligned}
  \end{equation}
\end{tcolorbox}

Let $L^{\set{\sp, \nosp}}(\wb_0)= \E\brac{\ell_t^{\set{\sp, \nosp}}(\wb_0)}$ denote the corresponding expected losses. We remark that this expectation is equivalent to observing an infinite amount of tasks, but still with a finite $(n,d)$ within each task.

\paragraph{(Meta-)Test time} The meta-test time performance of any meta-learning algorithm is a joint function of the (learned) centroid $\wb_0$ and the inner algorithm $\alg$. Upon receiving a new task $p_{T+1}\sim \Pi$ and training data $(\Xb_{T+1}, \yb_{T+1})\in\R^{n\times d}\times \R^{n}$, we run the inner loop $\alg$ with prior $\wb_0$ on the training data, and evaluate it on an (unseen) test example $(\xb', y')\sim p_{T+1}$:
\begin{align*}
  L^{\test}(\wb_0; \alg) \defeq  \E\brac{
  \frac{1}{2}\paren{ \xb'^\top \alg(\wb_0; \Xb_{T+1}, \yb_{T+1})- y' }^2 }.
\end{align*}
Additionally, for both \trval and \trtr methods, we need to ensure that the inner loop used for meta-test is exactly the same as that used in meta-training. Therefore, the meta-test performance for the \trval and \trtr methods above should be evaluated as
\begin{align*}
  & L^{\test}_{\lambda, n_1}(\hat{\wb}_{0,T}^{\sp}) \defeq L^{\test}(\hat{\wb}_{0,T}^{\sp}; \cA_{\lambda, n_1}), \\
  & L^{\test}_{\lambda, n}(\hat{\wb}_{0,T}^{\nosp}) \defeq L^{\test}(\hat{\wb}_{0,T}^{\nosp}; \cA_{\lambda, n}),
\end{align*}
where $\cA_{\lambda, m}$ denotes the ridge solver with regularization strength $\lambda>0$ on $m\le n$ data points. Finally, we let
\begin{align}
  \label{equation:best-test-w}
  \wb_{0,\star}(\lambda; n) = \argmin_{\wb_0} L^{\test}_{\lambda, n}(\wb_0)
\end{align}
denote the best centroid if the inner loop uses $\cA_{\lambda, n}$. The performance of the \trval algorithm $\hat{\wb}_{0,T}^{\sp}$ should be compared against $\wb_{0,\star}(\lambda, n_1)$, whereas the \trtr algorithm $\hat{\wb}_{0,T}^{\nosp}$ should be compared against $\wb_{0,\star}(\lambda, n)$.

\section{The importance of sample splitting}
\label{section:agnostic}
We begin by analyzing the \trtr and \trval methods defined in~\eqref{equation:split-loss} and~\eqref{equation:non-split-loss}, in the agnostic setting where we do not make structural assumptions on the data distribution $p_t$.

In this case, we show that the importance of the sample splitting is clear even at the population level: the expected loss of the \trval method matches the test-time loss, whereas the expected loss of the \trtr method does not match the test-time in general and have a different minimizer.

\begin{theorem}[Properties of expected losses in the agnostic case]
  \label{theorem:agnostic}
  Suppose the task distributions satisfy $\E_{\xb\sim p_t}[\xb\xb^\top] \succ \bzero$, $\E_{\xb\sim p_t}[\norm{\xb}^4]<\infty$ and $\E_{(\xb, y)\sim p_t}[\norm{\xb y}]<\infty$ for almost surely all $p_t\sim \Pi$, but can be otherwise arbitrary. Then, we have the following:
  \begin{enumerate}[label=(\alph*)]
  \item (Unbiased loss for \trval method) For any $\lambda>0$ and any $(n_1, n_2)$ such that $n_1+n_2=n$, the expected loss of the \trval method is equal to the meta test-time loss, and thus minimized at the best test-time centroid:
    \begin{align*}
      L^{\sp}_{\lambda, n_1, n_2}(\wb_0) = L^{\test}_{\lambda, n_1}(\wb_0).
    \end{align*}
  \item (Biased loss for \trtr method) There exists a distribution of tasks $\Pi$ on $d=1$ satisfying the above conditions, on which for any $n\ge 1$ and $\lambda>0$, the expected loss of the \trtr method is not equal to the test-time loss, and the minimizers are not equal: 
  \begin{align*}
    & L^{\nosp}_{\lambda, n}(\cdot) \neq L^{\test}_{\lambda, n}(\cdot),~~~{\rm and} \\
    & \wb_{0,\star}^{\nosp} \defeq \argmin_{\wb_0} L^{\nosp}(\wb_0) \neq \argmin_{\wb_0} L^{\test}_{\lambda, n}(\wb_0).
  \end{align*}
  Further, the excess test loss of $\wb_{0,\star}^{\nosp}$ is bounded away from zero: $L^{\test}_{\lambda, n}(\wb_{0,\star}^{\nosp}) - \min_{\wb_0} L^{\test}_{\lambda, n}(\wb_0) > 0$.
  \end{enumerate}
\end{theorem}

Theorem~\ref{theorem:agnostic} makes clear the advantage of the \trval method when there is no structural assumption on the data distributions: The expected version of the \trval loss matches the meta test-time, whereas the expected version of the \trtr loss has a bias in general. By standard consistency results~\citep{van2000asymptotic}, this advantage carries on to the sampled versions as well for large $T$. In other words, the \trval method is a ``valid ERM'' (empirical risk minimization) procedure for the test-time loss, whereas the \trtr method is not a valid ERM. 

\paragraph{Proof intuitions}
The proof of part (a) follows from direct calculations, whereas the proof of part (b) is trickier as we need to construct a counter-example in which the expected loss of the \trtr method is not equal the test-time loss for \emph{any} $\lambda,n$. We provide such a construction in $d=1$, where the distribution $p_t$ has a certain asymmetry that results in a bias the \trtr loss function for any $\lambda$ and $n$. However, we expect such a bias to be present in general for any dimensions. The proof of Theorem~\ref{theorem:agnostic} can be found in Appendix~\ref{appendix:proof-agnostic}.

\section{Is sample splitting always optimal?}
\label{section:realizable}
Theorem~\ref{theorem:agnostic} states a negative result for the \trtr method, showing that its expected loss and the meta test-time loss does not have the same values and minimizers. However, such a result does not preclude the possibility that there exists a data distribution on which the minimizers coincide (even though the loss values can still be different).

In this section, we construct a simple data distribution on which \trtr method is indeed unbiased in terms of the minimizer of the expected loss, and compare its performance against the \trval method more explicitly.

\paragraph{Realizable linear model}
We consider the following instantiation of the (generic) meta-learning data distribution assumption in~\eqref{equation:general-distribution}: We assume that each task $p_t$ is specified by a $\wb_t\in\R^d$ sampled from some distribution $\Pi$ (overloading notation), and the observed data follows the noiseless linear model with ground truth parameter $\wb_t$:
\begin{equation}
  \label{equation:realizable-model}
  \yb_t = \Xb_t \wb_t.
\end{equation}
Note that when $n\ge d$ and inputs are in general position, we are able to perfectly recover $\wb_t$ (by solving linear equations), therefore the problem in the inner loop is easy. However, even in this case the outer loop problem is still non-trivial as we wish to learn the best centroid $\wb_0$.

\subsection{Population minimizers}
\label{section:realizable-population}

We first show that on the realizable linear model~\eqref{equation:realizable-model}, the test-time best centroids $\wb_{0,\star}(\lambda, n)=\argmin_{\wb_0}L^{\test}_{\lambda, n}(\wb_0)$ is the same for any $(\lambda, n)$, and both the \trtr and \trval methods are unbiased: Both expected losses are minimized at $\wb_{0,\star}$.

\begin{theorem}[Population minimizers on the realizable model]
  \label{theorem:consistency-realizable-model}
  On the realizable linear model~\eqref{equation:realizable-model}, suppose $\E[\norm{\xb}^4]<\infty$ and $\E[\norm{\wb_t}^2]<\infty$. Then the test-time meta loss for all $\lambda>0$ and all $n$ is minimized at the same point, that is, the mean of the ground truth parameters:
  \begin{align*}
    & \quad \wb_{0,\star}(\lambda, n) = \argmin_{\wb_0} L^{\test}_{\lambda, n}(\wb_0) \\
    & = \wb_{0,\star} \defeq \E_{\wb_t\sim\Pi}[\wb_t],~~~\textrm{for all}~\lambda>0,~n.
  \end{align*}
  Furthermore, for both the \trval method and the \trtr method, the expected loss is minimized at $\wb_{0,\star}$ for any $\lambda>0$, $n$, and $(n_1,n_2)$:
  \begin{align*}
    \argmin_{\wb_0} L^{\sp}_{\lambda, n_1, n_2}(\wb_0) = \argmin_{\wb_0} L^{\nosp}_{\lambda, n}(\wb_0) = \wb_{0,\star}.
  \end{align*}
\end{theorem}
Theorem~\ref{theorem:consistency-realizable-model} shows that both the \trval and \trtr methods are in expectation minimized at the same optimal parameter $\wb_{0,\star}$ which is the mean of $\wb_t$. This is a consequence of the good structure in our realizable linear model~\eqref{equation:realizable-model}: at a high level, $\wb_{0,\star}$ is indeed the best centroid since it has (on average) the closest distance to a randomly sampled $\wb_t$. The proof of Theorem~\ref{theorem:consistency-realizable-model} be found in Appendix~\ref{appendix:proof-consistency-realizable-model}.

\subsection{Precise comparison of rates}
\label{section:realizable-mse}

Theorem~\ref{theorem:consistency-realizable-model} suggests that we are now able to compare performance of the two methods based on their parameter estimation error (for estimating $\wb_{0,\star}$). Towards a fine-grained comparison between the \trtr and \trval methods, we make the following assumption on the distributions of $\Xb_t$ and $\wb_t$:
\begin{assumption}[Data distributions for realizable linear model]
  \label{assumption:realizable}
  The inputs are standard Gaussian: $\xb_{t, i}\simiid \normal(\bzero, \Ib_d)$. The true coefficient $\wb_t$ is independent of $\Xb_t$ and satisfies
  \begin{align}
    \label{equation:cov-wt}
    {\rm Cov}(\wb_t) = \E_{\wb_t} \brac{ (\wb_t - \wb_{0,\star})(\wb_t - \wb_{0,\star})^\top } = \frac{R^2}{d}\Ib_d,
  \end{align}
  for some fixed $R^2>0$, and that the individual entries $\set{w_{t,i} - w_{0,\star,i}}_{i\in[d], t\in[T]}$ are i.i.d. mean-zero and $KR^2/d$-sub-Gaussian for some absolute constant $K=O(1)$.
\end{assumption}
The sub-Gaussian assumption on $\wb_t$ allows for a sharp concentration of the MSE to its expectation (over $\wb_t$). The Gaussian input assumption allows for a precise characterization of certain ridge covariance type random matrices.

We are now ready to state our two main theorems, which provide a precise comparison of the MSEs of the \trtr and \trval methods under the realizable linear model.

\begin{theorem}[Concentration of MSEs in the realizable linear model]
  \label{theorem:concentration-mse}
    In the realizable linear model~\eqref{equation:realizable-model}, suppose Assumption~\ref{assumption:realizable} holds, $T=\wt{\Omega}(d)$, $d/n=\Theta(1)$, $n_2/n=\Theta(1)$, and $\lambda=\Theta(1)>0$. Then with probability at least $1-Td^{-10}$, the MSE of the \trtr and \trval methods has the following concentrations, respectively:
  \begin{align*}
    &  \norm{\hat{\wb}_0^\nosp - \wb_{0,\star}}^2 = \frac{R^2}{T} \paren{ C^{\nosp}_{d, n, \lambda} + \wt{O}\paren{ \sqrt{\frac{d}{T} } +  \frac{1}{\sqrt{d}} } }, \\
    & \norm{\hat{\wb}_0^\sp - \wb_{0,\star}}^2 = \frac{R^2}{T} \paren{C^{\sp}_{d, n_1, n_2, \lambda} + \wt{O}\paren{ \sqrt{\frac{d}{T}}  + \frac{1}{\sqrt{d}}  }},
  \end{align*}
  where $\wt{O}(\cdot)$ hides $\log(ndT)$ factor. Further, the constants $C^{\nosp},C^{\sp}=\Theta(1)$ and have explicit expressions:
  \begin{align*}
    & C^{\nosp}_{d, n, \lambda}  = \frac{ \frac{1}{d}\E\brac{\tr\paren{ (\hat{\bSigma}_n + \lambda\Ib_d)^{-4}\hat{\bSigma}_n^2 }} }{ \paren{ \frac{1}{d}\E\brac{\tr\paren{ (\hat{\bSigma}_n + \lambda\Ib_d)^{-2}\hat{\bSigma}_n }} }^2 }, \\
    & C^{\sp}_{d,n_1, n_2, \lambda} = 
      \frac{ \frac{1}{dn_2}\E\brac{ \tr\paren{ (\hat{\bSigma}_{n_1} + \lambda\Ib_d)^{-2} }^2 + (n_2+1) \tr\paren{ (\hat{\bSigma}_{n_1} + \lambda\Ib_d)^{-4} } }}{ \paren{\frac{1}{d} \E\brac{ \tr\paren{ (\hat{\bSigma}_{n_1} + \lambda\Ib_d)^{-2} } }}^2}
      ,
  \end{align*}
  where $\hat{\bSigma}_n\defeq \Xb_t^\top\Xb_t/n$ denotes the empirical covariance of a standard Gaussian random matrix $\Xb_t\in\R^{n\times d}$.
\end{theorem}
Theorem~\ref{theorem:concentration-mse} asserts that the MSEs of both methods concentrate around $R^2/T$ times a $\Theta(1)$ constant, when both $T,d$ are large and $T=\wt{\Omega}(d)$ (so that the error terms vanish). This allows us to compare the performances of the \trtr and \trval methods based on the constants. For a fair comparison, we look at the constants with optimal choices of $\lambda$ and the split ratio, which we state in the following
\begin{theorem}[Comparison of constants $C^{\nosp}$ and $C^{\sp}$]
  \label{theorem:comparison-mse}
  In the high-dimensional limiting regime $d,n\to\infty$, $d/n\to\gamma\in(0,\infty)$, the optimal constant of the \trtr method obtained by tuning the regularization $\lambda \in(0, \infty)$ satisfies
  \begin{align*}
    \inf_{\lambda>0} \lim_{d,n\to\infty,d/n\to\gamma} C^{\nosp}_{d, n, \lambda} = \inf_{\lambda>0} \rho_{\lambda,\gamma} \stackrel{(\star)}{\le} \max\set{1 + \frac{5}{27}\gamma, \frac{5}{27}+\gamma},
  \end{align*}
  where
  $\rho_{\lambda,\gamma} \defeq  4\gamma^2 \brac{(\gamma-1)^2 + (\gamma+1)\lambda}/(\lambda +\gamma+1 - \sqrt{(\lambda+\gamma+1)^2-4\gamma})^2 / \paren{(\lambda+\gamma+1)^2-4\gamma}^{3/2} $,
  and the inequality becomes equality at $\gamma=1$. In contrast, the optimal rate of the \trval method by tuning the regularization $\lambda\in(0,\infty)$ and split ratio $s\in(0,1)$ is
  \begin{align*}
    & \inf_{\lambda>0,s\in(0,1)} \lim_{d,n\to\infty,d/n\to\gamma} C^{\sp}_{d,ns, n(1-s),\lambda}
     = (1+\gamma)R^2.
  \end{align*}
As $\max\set{1+5\gamma/27, 5/27+\gamma}<1+\gamma$ for any $\gamma>0$, the \trtr method has a strictly better constant than the \trval method when $\lambda$ and $s$ are optimally tuned in both methods.
\end{theorem}

\paragraph{Implications} 
Theorem \ref{theorem:comparison-mse} shows that, perhaps surprisingly, the train-train method achieves a strictly better MSE (in terms of the constant) than the train-val method in the  realizable linear model\footnote{The same conclusion also holds for the excess test loss, as the Hessian of the test loss is a rescaled identity, see Appendix~\ref{appendix:moments}.}.
(See Figure~\ref{fig:simulationappendix}(a) for a visualization of the exact optimal rates and the upper bound $(\star)$.) This suggests that the train-validation split may not be crucial when the data has structural assumptions such as realizability by the model. To the best of our knowledge, this is the first theoretical result that offers a disentangled comparison of meta-learning algorithms with and without sample splitting. Note that our result features an \emph{optimal tuning of hyperparameters}: we compare the rates at the (theoretically) optimal $\lambda$ for the \trtr method and the optimal $\lambda,n_1$ for the \trval method.

We also remark that, while our theory considers the linear centroid meta-learning problem, our real data experiments in Section~\ref{section:deep} suggests that the superiority of the \trtr method may also hold on real meta-learning tasks with neural networks.

\begin{figure*}[!t]
\centering
\includegraphics[width = \textwidth]{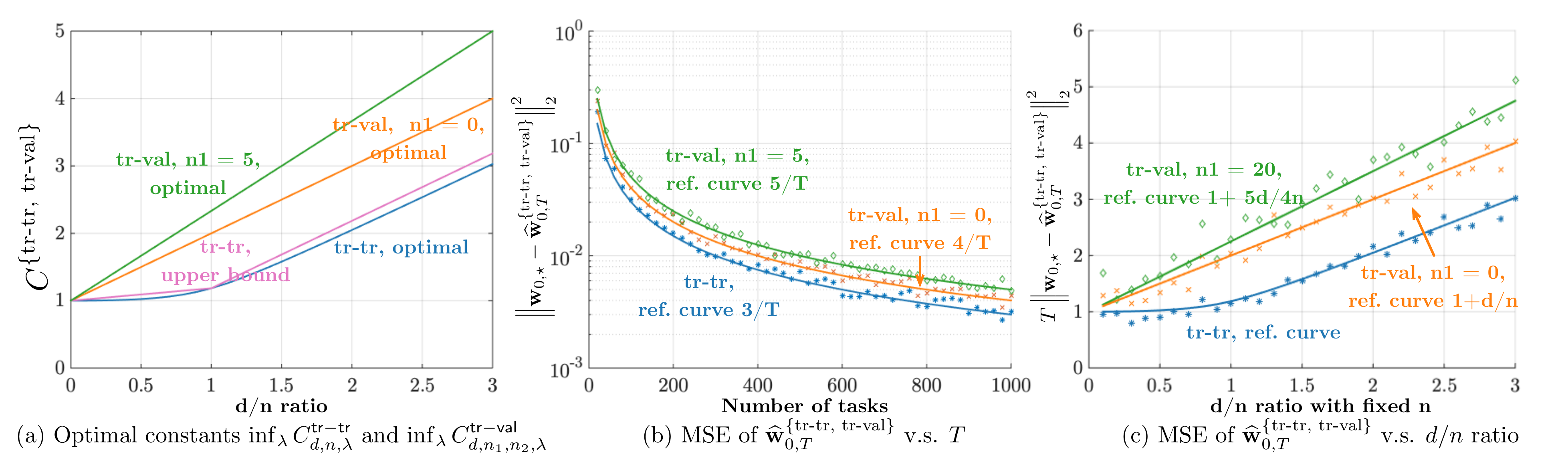}
\caption{\small Panel (a) plots the exact constants in Theorem~\ref{theorem:comparison-mse}: The optimal \trtr constant $\inf_{\lambda} C^{\nosp}_{d,n,\lambda}$ (\textcolor{C0}{blue}) and its upper bound ($\star$) (\textcolor{magenta}{magenta}), as well as the optimal \trval constant $\inf_{\lambda} C^{\sp}_{d,n_1, n_2, \lambda}$ with $n_1 = 0$ (\textcolor{C1}{orange}, optimal choice) and $n_1 = 5$ (\textcolor{C2}{green}). (Optimal $\inf_{\lambda}C^{\sp}_{d,n_1,n_2,\lambda}$ at each $n_1$ can be found in Lemma~\ref{lemma:optimally-tuned-rates}.) Curves in panel (a) are used as reference curves in plots (b) and (c). Panel (b) plots the MSE of $\hat{\wb}_{0, T}^{\{\nosp, \sp\}}$ as the total number of tasks increases from $20$ to $1000$ with an increment of $20$. We fix data dimension $d = 60$ and per-task sample size $n = 20$. For the \trval method, we experiment on $n_1 = 0$ and $n_1 = 5$. Panel (c) shows the rescaled MSE of $\hat{\wb}_{0, T}^{\{\nosp, \sp\}}$ as the ratio $d / n$ varies from $0$ to $3$ (with $n=100$ and $T = 300$).
}
\label{fig:simulationappendix}
\end{figure*}

\subsection{Overview of techniques}
\label{section:overview-techniques}

Here we provide an overview of the techniques in proving Theorem~\ref{theorem:concentration-mse} and Theorem~\ref{theorem:comparison-mse}. We defer the full proofs to Appendix~\ref{appendix:proof-concentration-mse} and Appendix~\ref{appendix:proof-comparison-mse} respectively.

\paragraph{Closed-form expressions for $\hat{\wb}_{0,T}^{\nosp}$ and $\hat{\wb}_{0,T}^{\sp}$}
Our first step is to obtain the following closed-form expressions for the estimation errors of both methods in the realizable linear model (see Lemma~\ref{lemma:closed-form-estimators}):
\begin{align*}
  & \hat{\wb}^{\nosp}_{0,T} - \wb_{0,\star} = \paren{\sum_{t=1}^T \Ab_t}^{-1} \sum_{t=1}^T \Ab_t(\wb_t - \wb_{0,\star}), \\
  & \hat{\wb}^{\sp}_{0,T} -\wb_{0,\star} = \paren{\sum_{t=1}^T \Bb_t}^{-1} \sum_{t=1}^T \Bb_t (\wb_t - \wb_{0,\star}), 
\end{align*}
where
\begin{align*}
  & \Ab_t \defeq \lambda^2 \paren{ \Xb_t^\top\Xb_t/n+ \lambda\Ib_d}^{-2} (\Xb_t^\top\Xb_t/n),\\
  & \Bb_t \defeq \lambda^2 \paren{ \Xb_t^{\train\top}\Xb_t^{\train}/n_1 + \lambda\Ib_d}^{-1} \paren{\Xb_t^{\val\top}\Xb_t^{\val} / n_2} \cdot \paren{ \Xb_t^{\train\top}\Xb_t^{\train} / n_1 + \lambda\Ib_d}^{-1}.
\end{align*}
These expressions simplify the estimation errors as the ``weighted averages'' of the $\set{\wb_t - \wb_0}$ with weighting matrices $\Ab_t$ and $\Bb_t$.

\paragraph{Sharp concentration to exact constants}
Our next step is to establish the concentration
\begin{align*}
  & \quad \norm{\hat{\wb}_{0,T}^{\nosp} - \wb_{0,\star}}^2 \stackrel{(i)}{\approx} \frac{R^2}{d} \cdot \tr\paren{ \paren{\sum_{t=1}^T \Ab_t}^{-2} \paren{\sum_{t=1}^T \Ab_t^2}} \\
  & \stackrel{(ii)}{\approx} \frac{R^2}{T} \paren{ \tr\paren{ \E[\Ab_t] } / d}^{-2} \paren{ \tr\paren{ \E[\Ab_t^2]} / d} = \frac{R^2}{T} C^{\nosp}_{d,n,\lambda}.
\end{align*}
(and a similar result for $\hat{\wb}_{0,T}^{\sp}$ using $\Bb_t$.)
Above, $(i)$ relies on the concentration of a certain quadratic form involving the $(\wb_t - \wb_0)$'s, following from the Hanson-Wright inequality (cf. Lemma~\ref{lemma:hanson-wright}), and $(ii)$ relies on the concentration of the matrices $\sum_{t=1}^T\Ab_t/T$ and $\sum_{t=1}^T\Ab_t^2/T$, using standard sub-Gaussian matrix concentration and a truncation argument (cf. Lemma~\ref{lemma:concentration-atbt}). Further calculating the expectations $\E[\Ab_t]$ and $\E[\Ab_t^2]$ gives the exact formula of $C^{\nosp}_{d,n,\lambda}$ (cf. Lemma~\ref{lemma:moments}) and finishes the proof of Theorem~\ref{theorem:concentration-mse}.

\paragraph{Optimizing and comparing $C^{\nosp}_{d,n,\lambda}$ and $C^{\sp}_{d,n_1,n_2,\lambda}$}

The constants $C^{\nosp}_{d,n,\lambda}$ and $C^{\sp}_{d,n_1,n_2,\lambda}$ involve tunable hyperparameters $\lambda$ (for both methods) and $n_1$ (for the \trval method). We use the following strategies to optimize the hyperparameters in each method, which combine to yield Theorem~\ref{theorem:comparison-mse}.

\begin{itemize}
\item For the \trval method, we show that the optimal tunable parameters for any $(n,d)$ is taken at a special case $\lambda=\infty$ and $(n_1,n_2)=(0,n)$, at which the rate only depends on $\frac{1}{n_1}\Xb_t^{\train \top}\Xb_t^{\train}$ through its rank (and thus has a simple closed-form). We state this result in Lemma~\ref{lemma:optimally-tuned-rates}. The proof builds on algebraic manipulations of the quantity $C^{\sp}_{d,n,\lambda_1,\lambda_2}$, and can be found in Appendix~\ref{appendix:optimal-hyperparam-trval}.
\item For the \trtr method, we apply random matrix theory to simplify the spectrum of $\frac{1}{n}\Xb_t^\top\Xb_t$ in the \emph{proportional limit} where $d,n\to\infty$ and $d/n$ stays as a constant ~\citep{bai2010spectral,anderson2010introduction}, and obtain a closed-form expression of the asymptotic MSE for any $\lambda>0$, which we can analytically optimize over $\lambda$. We state this result in Theorem~\ref{theorem:high-dim-limit}. The proof builds on the Stieltjes transform and its ``derivative trick''~\citep{dobriban2018high}, and is deferred to Appendix~\ref{appendix:optimal-hyperparam-trtr}.
\end{itemize}

	\begin{table*}[!t]
          \begin{threeparttable}[t]
			\caption{Comparison of \trtr and \trval on few-shot image classification (accuracy in $\%$).
                        }
			\setlength{\tabcolsep}{20pt}  
			\renewcommand{\arraystretch}{1.5} 
			\label{comparisontable} \begin{center}
				\footnotesize {
						\begin{tabularx}{\textwidth}{c|c|cc cc}	\bottomrule
							\multirow{3}{*}{\vspace{-0.7em}\rotatebox{90}{miniImage}}  & method & 1-shot 5-way & 5-shot 5-way& 1-shot 20-way & 5-shot 20-way  \\
							\cline{2-6}
							
							&	 \trval &48.76 $\pm$ 0.87&	63.56 $\pm$ 0.95 &17.52 $\pm$ 0.49& 21.32 $\pm$ 0.54 \\
							
							&\trtr & {\textbf{50.77 $\pm$ 0.90}}& {\textbf{67.43 $\pm$ 0.89}}	& {\textbf{21.17 $\pm$ 0.38}} &	{\textbf{34.30 $\pm$ 0.41}}	\\
							\bottomrule
							\multirow{3}{*}{\vspace{-0.7em}\rotatebox{90}{tieredImage}}  & method & 1-shot 5-way & 5-shot 5-way& 1-shot 10-way & 5-shot 10-way  \\
							\cline{2-6}

							&	\trval &50.61 $\pm$ 1.12& 67.30	 $\pm$ 0.98&29.18 $\pm$ 0.57& 43.15 $\pm$ 0.72 \\
							
							&\trtr & {\textbf{54.37 $\pm$ 0.93}} & {\textbf{71.45 $\pm$ 0.94}} & {\textbf{35.56 $\pm$ 0.60}} & {\textbf{54.50 $\pm$ 0.71}}  \\
							\bottomrule 
						\end{tabularx} 
				}
                            \end{center}
          \end{threeparttable}                
	\end{table*}
        
\section{Experiments}
\label{appendix:experiments}

\subsection{Simulations}\label{sec:synthetic}
We experiment on the realizable linear model studied in Section \ref{section:realizable}. Recall that the observed data of the $t$-th task are generated as
\begin{align*}
\yb_t = \Xb_t \wb_t, \quad \textrm{with} \quad \xb_{t, i} \overset{\rm iid}{\sim} \normal(0, \Ib_d).
\end{align*}
We independently generate $\wb_{t} \overset{\rm iid}{\sim} \normal(\wb_{0, \star}, \Ib_d / \sqrt{d})$, where $\wb_{0, \star}$ is the linear centroid and the corresponding $R^2 = 1$ here. The goal is to learn the linear centroid $\wb_{0, \star}$ using the \trtr method and \trval method, i.e., minimizing $\hat{L}_{T}^{\nosp}$ and $\hat{L}_{T}^{\sp}$, respectively. Recall that the optimal closed-form solutions $\hat{\wb}_{0, T}^{\{\nosp, \sp\}}$ are given in Section \ref{section:overview-techniques}. We measure the performance of the \trtr and \trval methods using the $\ell_2$-error $\|\wb_{0, \star} - \hat{\wb}_{0, T}^{\{\nosp, \sp\}}\|_2^2$.

\paragraph{Result}
Figure~\ref{fig:simulationappendix} shows the performance of the \trtr and \trval methods on simulated linear centroid meta-learning problems.
Across all simulations, we optimally tune the regularization coefficient $\lambda$ in the \trtr method, and use a sufficiently large $\lambda = 2000$ in the \trval method (according to Lemma~\ref{lemma:optimally-tuned-rates}).
Observe that the MSEs of the two methods decay at rate $O(1/T)$ (Figure~\ref{fig:simulationappendix}(b)). Further, the performance of the two methods in our simulation closely matches the theoretical result in Theorem~\ref{theorem:comparison-mse}, and the \trtr method reliably outperforms the \trval method at all $d/n$ with a moderately large $T$ (Figure~\ref{fig:simulationappendix}(c)).

In Appendix~\ref{appendix:cross-validation}, we additionally investigate the effect of averaging the loss over multiple splits in the \trval method (a ``cross-validation'' type loss). %

\subsection{Few-shot image classification}
\label{section:deep}
We further compare \trtr and \trval type methods on the benchmark few-shot image classification tasks miniImageNet~\citep{ravi2016optimization} and tieredImageNet~\citep{ren2018meta}.

% \vspace{-0.7em}
\paragraph{Methods}
We instantiate the \trtr and \trval method in the centroid meta-learning setting with a ridge solver. The methods are almost exactly the same as in our theoretical setting in~\eqref{equation:split-loss} and~\eqref{equation:non-split-loss}, with the only differences being that the parameters $\wb_t$ (and hence $\wb_0$) parametrize a deep neural network instead of a linear classifier, and the loss function is the cross-entropy instead of squared loss. Mathematically, we minimize the following two loss functions:
\begin{align*}
  &  L^{\sp}_{\lambda, n_1}(\wb_0) \defeq \frac{1}{T}\sum_{t=1}\ell_t^{\sp}(\wb_0) = \frac{1}{T}\sum_{t=1}^T \ell\Big( \argmin_{\wb_t} \ell(\wb_t; \Xb_t^{\train}, \yb_t^{\train}) + \lambda\norm{\wb_t - \wb_0}^2 ; \Xb_t^{\val}, \yb_t^{\val}\Big), \\
  &  L^{\nosp}_\lambda(\wb_0) \defeq \frac{1}{T}\sum_{t=1}^T \ell_t^{\nosp}(\wb_0) = \frac{1}{T}\sum_{t=1}^T \ell \Big( \argmin_{\wb_t} \ell(\wb_t; \Xb_t, \yb_t) + \lambda\norm{\wb_t - \wb_0}^2 ; \Xb_t, \yb_t \Big),
\end{align*}
where $(\Xb_t, \yb_t)$ is the data for task $t$ of size $n$, and $(\Xb_t^{\train}, \yb_t^{\train})$ and $(\Xb_t^{\val}, \yb_t^{\val})$ is a split of the data of size $(n_1, n_2)$. We note that both loss functions above have been considered in prior work ($L^{\sp}$ in iMAML~\citep{rajeswaran2019meta}, and $L^{\nosp}$ in Meta-MinibatchProx~\citep{zhou2019efficient}), though we use slightly different implementation details from these prior work to make sure that the two methods here are exactly the same except for whether the split is used. Additional details about the implementation can be found in Appendix~\ref{appendix:deep}.

% \vspace{-0.7em}
\paragraph{Experimental settings}
We experiment on miniImageNet~\citep{ravi2016optimization} and tieredImageNet~\citep{ren2018meta} datasets. MiniImageNet consists of $100$ classes of images from ImageNet~\citep{krizhevsky2012imagenet} and each class has $600$ images of resolution $84\times84\times 3$. We use $64$ classes for training, $16$ classes for validation, and the remaining $20$ classes for testing~\citep{ravi2016optimization}. TieredImageNet consists of $608$ classes from the ILSVRC-$12$ data set~\citep{russakovsky2015imagenet} and each image is also of resolution $84\times 84\times 3$.

We adopt the episodic training procedure \citep{finn2017model,zhou2019efficient,rajeswaran2019meta}. In meta-test, we sample a set of $N$-way $(K+1)$-shot test tasks. The first $K$ instances are for training and the remaining one is for testing.  In meta-training, we use the  ``higher way" training strategy. We set the default choice of the train-validation split ratio to be an even split $n_1=n_2=n/2$ following \cite{zhou2019efficient,rajeswaran2019meta}. For example, for a $5$-way $5$-shot classification setting, each task contains $5\times (5+1)=30$ total images, and we set $n_1=n_2=15$. (We additionally investigate the optimality of this split ratio in Appendix~\ref{appendix:split-ratio}.) We report the average accuracy over $2,000$ random test episodes with $95\%$ confidence interval.

% \vspace{-1.0em}
\paragraph{Results}
Table~\ref{comparisontable} presents the percent classification accuracy on miniImagenet and tieredImageNet. We find that the \trtr method consistently outperforms the \trval method. Specifically, on miniImageNet, \trtr method outperforms \trval by $2.01\%$ and $3.87\%$ on the $1$-shot $5$-way and $5$-shot $5$-way tasks respectively; On tieredImageNet, \trtr on average improves by about $6.40\%$ on the four testing cases. These results show the advantages of \trtr method over \trval and support our theoretical findings in Theorem~\ref{theorem:comparison-mse}.

\ifdefined\minusspace\vspace{-0.7em}\else\fi
\section{Conclusion}
\ifdefined\minusspace\vspace{-0.3em}\else\fi
We study the importance of train-validation split on the linear-centroid meta-learning problem, and show that the necessity and optimality of train-validation split depends greatly on whether the tasks are structured: the sample splitting is necessary in general situations, and not necessary and non-optimal when the tasks are nicely structured. It would be of interest to study whether similar conclusions hold on other meta-learning problems such as learning representations, or how our insights can guide the design of meta-learning algorithms with better empirical performance. 

\bibliography{bib}
\bibliographystyle{plainnat}
\appendix

\makeatletter
\def\renewtheorem#1{%
  \expandafter\let\csname#1\endcsname\relax
  \expandafter\let\csname c@#1\endcsname\relax
  \gdef\renewtheorem@envname{#1}
  \renewtheorem@secpar
}
\def\renewtheorem@secpar{\@ifnextchar[{\renewtheorem@numberedlike}{\renewtheorem@nonumberedlike}}
\def\renewtheorem@numberedlike[#1]#2{\newtheorem{\renewtheorem@envname}[#1]{#2}}
\def\renewtheorem@nonumberedlike#1{  
\def\renewtheorem@caption{#1}
\edef\renewtheorem@nowithin{\noexpand\newtheorem{\renewtheorem@envname}{\renewtheorem@caption}}
\renewtheorem@thirdpar
}
\def\renewtheorem@thirdpar{\@ifnextchar[{\renewtheorem@within}{\renewtheorem@nowithin}}
\def\renewtheorem@within[#1]{\renewtheorem@nowithin[#1]}
\makeatother

\renewtheorem{theorem}{Theorem}[section]
\renewtheorem{lemma}{Lemma}[section]
\renewtheorem{remark}{Remark}
\renewtheorem{corollary}{Corollary}[section]
\renewtheorem{observation}{Observation}[section]
\renewtheorem{proposition}{Proposition}[section]
\renewtheorem{definition}{Definition}[section]
\renewtheorem{claim}{Claim}[section]
\renewtheorem{fact}{Fact}[section]
\renewtheorem{assumption}{Assumption}[section]
\renewcommand{\theassumption}{\Alph{assumption}}
\renewtheorem{conjecture}{Conjecture}[section]

\section{Proof of Theorem~\ref{theorem:agnostic}}
\label{appendix:proof-agnostic}

\subsection{Proof of part (a)}

We need to show that
\begin{align*}
  L^{\sp}(\wb_0) = \E[\ell_t^{\sp}(\wb_0)] = L^{\test}_{\lambda, n_1}(\wb_0)
\end{align*}
for all $\wb_0$, that is, the population meta-test loss is exactly the
same as the expected loss of the \trval method. This is straightforward: as the tasks are i.i.d. and $\cA_\lambda(\wb_0; \Xb_t^{\train}, \yb_t^{\train})$ is independent of the test points $(\Xb_t^{\val}, \yb_t^{\val})$, we have for any $\wb_0$ that
\begin{align*}
  & \quad \E[\ell_t^{\sp}(\wb_0)] = \E_{p_t\sim\Pi, (\Xb_t, \yb_t)\sim p_t}\brac{ \frac{1}{2n_2}\norm{\yb_t^{\val} - \Xb_t^{\val} \cA_\lambda(\wb_0; \Xb_t^{\train}, \yb_t^{\train})}^2 } \\
  & = \E_{p_t\sim\Pi, (\Xb_t, \yb_t)\sim p_t} \brac{ \frac{1}{2}\paren{ y^{\val}_{t,1} - \xb^{\val\top}_{t,1}\cA_\lambda(\wb_0; \Xb_t^{\train}, \yb_t^{\train}) }^2 } \\
  & = \E_{p_{T+1}\sim\Pi, (\Xb_{T+1}, \yb_{T+1}), (\xb',y')\simiid p_t} \brac{ \frac{1}{2}\paren{ y' - \xb'^\top \cA_{\lambda, n_1}(\wb_0; \Xb_{T+1}, \yb_{T+1})}^2 } \\
  & = L^{\test}_{\lambda, n_1}(\wb_0).
\end{align*}
This finishes the proof of part (a). \qed

We also calculate the minimizer of the test-time loss $L^{\test}_{\lambda, n}$ (notice that here we use $n$ instead of $n_1$ training samples per task), which will be useful for our proof of part (b). We have
\begin{align*}
  & \quad L^{\test}_{\lambda, n}(\wb_0) = \E_{p_t\sim\Pi, (\Xb_t, \yb_t), (\xb',y')\simiid p_t} \brac{ \frac{1}{2}\paren{ y' - \xb'^\top \cA_{\lambda, n}(\wb_0; \Xb_{t}, \yb_{t})}^2 } \\
  & = \E_{p_t\sim\Pi, (\Xb_t, \yb_t), (\xb',y')\simiid p_t}\brac{  \frac{1}{2} \paren{ y' - \xb'^\top \brac{ \wb_0 + (\Xb_{t}^\top\Xb_{t} + n\lambda\Ib_d)^{-1}\Xb_t^\top(\yb_t - \Xb_t\wb_0) } }^2 } \\
  & = \frac{1}{2}\wb_0^\top \Mb \wb_0 - \wb_0^\top \bbb + {\rm const},
\end{align*}
where
\begin{equation}
  \label{equation:m-expectation}
  \begin{aligned}
    & \quad \Mb \defeq \E_{p_t\sim\Pi, (\Xb_t, \yb_t), (\xb',y')\simiid p_t}\brac{ \paren{ \Ib_d - (\Xb_{t}^\top\Xb_{t} + n\lambda\Ib_d)^{-1}\Xb_t^\top\Xb_t}\xb'\xb'^\top \paren{ \Ib_d - (\Xb_{t}^\top\Xb_{t} + n\lambda\Ib_d)^{-1}\Xb_t^\top\Xb_t} } \\
    & = \E_{p_t, (\Xb_t, \yb_t)} \brac{\lambda^2 (\Xb_t^\top\Xb_t/n + \lambda \Ib_d)^{-1} \bSigma_t(\Xb_t^\top\Xb_t/n + \lambda \Ib_d)^{-1} } \succ \bzero,
  \end{aligned}
\end{equation}
where $\bSigma_t\defeq \E_{x\sim p_t}[\xb\xb^\top] \succ \bzero$, and
\begin{equation}
  \label{equation:b-expectation}
  \begin{aligned}
    & \quad \bbb \defeq \E_{p_t\sim\Pi, (\Xb_t, \yb_t), (\xb',y')\simiid p_t}\brac{ \paren{ \Ib_d - (\Xb_{t}^\top\Xb_{t} + n\lambda\Ib_d)^{-1} } \xb_t'\paren{ y_t' - \xb_t'^\top (\Xb_{t}^\top\Xb_{t} + n\lambda\Ib_d)^{-1} \Xb_t^\top\yb_t } } \\
    & = \E_{p_t\sim\Pi, (\Xb_t, \yb_t), (\xb',y')\simiid p_t}\brac{ \lambda \paren{ \Xb_t^\top\Xb_t/n + \lambda\Ib_d}^{-1} \xb_t'\paren{ y_t' - \xb_t'^\top (\Xb_{t}^\top\Xb_{t}/n + \lambda\Ib_d)^{-1} \Xb_t^\top\yb_t/n } } \\
    & = \lambda\E\brac{(\Xb_t^\top\Xb_t/n + \lambda\Ib_d)^{-1}}\E_{p_t,(\xb',y')\sim p_t}[\xb' y'] - \lambda\E\brac{ (\Xb_t^\top\Xb_t/n + \lambda\Ib_d)^{-1}\bSigma_t (\Xb_t^\top\Xb_t/n + \lambda\Ib_d)^{-1}\frac{1}{n}\Xb_t^\top\yb_t }.
  \end{aligned}
\end{equation}
Noticing that $(\Xb_t^{\train^\top}\Xb_t^{\train}/n+\lambda\Ib_d)^{-1}\preceq \lambda^{-1}\Ib_d$ and by the assumptions that $\E_{(\xb,y)\sim p_t}[\opnorm{\xb\xb^\top}]< \infty$, $\E_{(\xb,y)\sim p_t}[\norm{\xb y}] < \infty$, we have $\opnorm{\Mb}< \infty$ and $\norm{\bbb_T}<\infty$. Therefore, the minimizer of $L^{\test}_{\lambda, n}$ is
\begin{align}
  \label{equation:test-minimizer}
  \wb_{0,\star}(\lambda, n) = \argmin_{\wb_0} L^{\test}_{\lambda, n}(\wb_0) = \Mb^{-1}\bbb,
\end{align}
where $\Mb\in\R^{d\times d}$ and $\bbb\in\R^d$ are defined in~\eqref{equation:m-expectation} and~\eqref{equation:b-expectation}.

\subsection{Proof of part (b)}

We construct a simple counter-example on which the minimizer of $L^{\nosp}$ is not equal to that of $L^{\test}_{\lambda, n}$ for any $\lambda>0$ and $n\ge 1$. We begin by simplifying the \trtr loss. We have
\begin{align*}
  & \quad \ell_t^{\nosp}(\wb_0) = \frac{1}{2n}\norm{ \yb_t - \Xb_t\cA_\lambda(\wb_0; \Xb_t, \yb_t) }^2 \\
  & = \frac{1}{2n}\norm{\yb_t - \Xb_t\brac{\wb_0 + (\Xb_t^\top\Xb_t + n\lambda\Ib_d)^{-1}\Xb_t^\top(\yb_t - \Xb_t\wb_0)}}^2 \\
  & = \frac{1}{2}\norm{\Ab_t\wb_0 - \cbb_t}^2,
\end{align*}
where
\begin{align*}
  \Ab_t = \frac{1}{\sqrt{n}} n\lambda\Xb_t(\Xb_t^\top\Xb_t + n\lambda\Ib_d)^{-1}~~~{\rm and}~~~\cbb_t = \frac{1}{\sqrt{n}}\paren{\Ib_n - \Xb_t(\Xb_t^\top\Xb_t + n\lambda\Ib_d)^{-1}\Xb_t^\top}\yb_t.
\end{align*}
Therefore, the minimizer of the expected loss $L^{\nosp}$ is
\begin{equation}
  \label{equation:l1-global-min}
\begin{aligned}
  & \wb_{0,\star}^{\nosp} = \argmin_{\wb_0} L^{\nosp}(\wb_0) = \paren{\E[\Ab_t^\top\Ab_t]}^{-1}\E[\Ab_t^\top\cbb_t] \\
  & = \E\brac{ \lambda^2(\Xb_t^\top\Xb_t/n + \lambda\Ib_d)^{-2}\frac{\Xb_t^\top\Xb_t}{n} }^{-1} \cdot \E\brac{ \frac{1}{n}\lambda(\Xb_t^\top\Xb_t/n+\lambda\Ib_d)^{-1}\Xb_t^\top(\Ib_n - \Xb_t(\Xb_t^\top\Xb_t+n\lambda\Ib_d)^{-1}\Xb_t^\top)\yb_t } \\
  & = \E\brac{ \lambda^2(\Xb_t^\top\Xb_t/n + \lambda\Ib_d)^{-2}\frac{\Xb_t^\top\Xb_t}{n} }^{-1} \cdot \E\brac{ \lambda^2 (\Xb_t^\top\Xb_t/n + \lambda\Ib_d)^{-2}\frac{1}{n}\Xb_t^\top\yb_t }.
\end{aligned}
\end{equation}
On the other hand, recall from~\eqref{equation:test-minimizer}
that the minimizer of the test-time loss $L^{\test}_{\lambda, n}$ is
\begin{equation}
  \label{equation:l2-global-min}
\begin{aligned}
  & \quad \wb_{0,\star}(\lambda, n) = \argmin_{\wb_0} L^{\test}_{\lambda, n}(\wb_0) \\
  & = \E\brac{ \lambda^2(\Xb_t^\top\Xb_t/n + \lambda\Ib_d)^{-1}\bSigma_t(\Xb_t^\top\Xb_t/n + \lambda\Ib_d)^{-1}}^{-1} \cdot \bigg\{ \lambda\E\brac{(\Xb_t^\top\Xb_t/n + \lambda\Ib_d)^{-1}}\E_{p_t,(\xb',y')\sim p_t}[\xb' y'] \\
  & \qquad - \lambda\E\brac{ (\Xb_t^\top\Xb_t/n + \lambda\Ib_d)^{-1}\bSigma_t (\Xb_t^\top\Xb_t/n + \lambda\Ib_d)^{-1}\frac{1}{n}\Xb_t^\top\yb_t }\bigg\}.
\end{aligned}
\end{equation}

\paragraph{Construction of the counter-example}
We now construct a distribution for which~\eqref{equation:l1-global-min} is not equal to~\eqref{equation:l2-global-min}. Let $d=1$ and let all $p_t$ be the following distribution:
\begin{align*}
  p_t:~~~(x_{t,i},y_{t,i}) = \left\{
  \begin{aligned}
    & (1, 3) ~~~\textrm{with probability}~1/2; \\
    & (3, -1) ~~~\textrm{with probability}~1/2.
  \end{aligned}
  \right.
\end{align*}
Clearly, we have $\bSigma_t=5$, $s_t\defeq\Xb_t^\top\Xb_t/n \in [1, 9]$, and $\E_{x',y'\sim p_t}[x'y']=0$. Therefore we have
\begin{align*}
  \wb_{0,\star}^{\nosp} = \E\brac{ (s_t+\lambda)^{-2}s_t }^{-1} \cdot \E\brac{(s_t+\lambda)^{-2} \frac{1}{n}\sum_{i=1}^n x_{t,i}y_{t,i}},
\end{align*}
and
\begin{align*}
  & \quad \wb_{0,\star}(\lambda, n) = -\E\brac{ 5\lambda^2(s_t + \lambda)^{-2} }^{-1} \cdot \E\brac{ 5\lambda(s_t+\lambda)^{-2}\frac{1}{n}\sum_{i=1}^n x_{t,i}y_{t,i} } \\
  & = -\E\brac{\lambda(s_t + \lambda)^{-2}}^{-1}\cdot \E\brac{ (s_t + \lambda)^{-2} \frac{1}{n}\sum_{i=1}^n x_{t,i}y_{t,i} }.
\end{align*}
We now show that $\wb_{0,\star}^{\nosp}\neq \wb_{0,\star}(\lambda, n)$ by showing that
\begin{align*}
  \E\brac{ (s_t + \lambda)^{-2} \frac{1}{n}\sum_{i=1}^n x_{t,i}y_{t,i} } = \E\brac{ \frac{x_{t,1}y_{t,1}}{(s_t+\lambda)^2} } \neq 0
\end{align*}
for any $\lambda>0$. Indeed, conditioning on $(x_{t,1},y_{t,1})=(1,3)$, we know that the sum-of-squares in $s_t$ has one term that equals $1$, and all others i.i.d. being $1$ or $9$ with probability one half. On the other hand, if we condition on $(x_{t,1}, y_{t,1})=(3,-1)$, then we know the sum in $s_t$ has one term that equals $9$ and all others i.i.d.. This means that the negative contribution in the expectation is smaller than the positive contribution, in other words
\begin{align*}
  & \E\brac{ \frac{x_{t,1}y_{t,1}}{(s_t+\lambda)^2}} = \frac{1}{2} \cdot 3\E\brac{ \frac{1}{(s_t+\lambda)^2} \bigg| (x_{t,1}, y_{t,1})=(1,3)}  \\
  & \qquad + \frac{1}{2} \cdot -3\E\brac{ \frac{1}{(s_t+\lambda)^2} \bigg| (x_{t,1}, y_{t,1})=(3,-1)} > 0.
\end{align*}
This shows $\wb_{0,\star}^{\nosp}\neq \wb_{0,\star}(\lambda, n)$.

Finally, for this distribution, the test loss $L^{\test}_{\lambda, n}(\wb_0)$ is strongly convex (since it has a positive second derivative), this further implies that the excess loss $L^{\test}_{\lambda, n}(\wb_{0,\star}^{\nosp}) - L^{\test}_{\lambda, n}(\wb_{0,\star}(\lambda, n))$ is bounded away from zero.
\qed
\section{Proof of Theorem~\ref{theorem:consistency-realizable-model}}
\label{appendix:proof-consistency-realizable-model}

We first show that $\wb_{0, \star} = \EE_{\wb_t \sim \Pi}[\wb_t]$ is a global optimizer for $L^{\nosp}_{\lambda, n}$ and $L^{\sp}_{\lambda, n_1, n_2}$ with any regularization coefficient $\lambda>0$, any $n$, and any split $(n_1, n_2)$. To do this, it suffices to check that the gradient at $\wb_{0, \star}$ is zero and the Hessian is positive definite (PD).

\noindent {\bf Optimality of $\wb_{0, \star}$ in both $L^{\nosp}_{\lambda, n}$ and $L^{\sp}_{\lambda, n_1, n_2}$}.
We first look at $L^{\nosp}$: for any $\wb_0\in\R^d$ we have
\begin{align}
L^{\nosp}_{\lambda, n}(\wb_0) & = \E[\ell^{\nosp}_{t}(\wb_0)] \notag \\
& = \frac{1}{2n} \E\left[\Big\|\Xb_t \wb_t - \Xb_t\left[\left(\Xb_t^\top \Xb_t + n\lambda\Ib_d \right)^{-1} \Xb_t^\top \left(\Xb_t \wb_t - \Xb_t \wb_0\right) + \wb_0\right]\Big\|_2^2\right] \notag \\
& = \frac{1}{2n} \E\Big[\Big\|\Xb_t\left(\Ib_d - \left(\Xb_t^\top \Xb_t + n\lambda \Ib_d\right)^{-1} \Xb_t^\top \Xb_t \right)(\wb_t - \wb_0)\Big\|_2^2\Big]. \label{eq:L1rewrite}
\end{align}
Similarly, $L^{\sp}_{\lambda, n_1, n_2}$ can be written as
\begin{align}
L^{\sp}(\wb_0) & = \E[\ell^{\sp}_{t}(\wb_0)] \notag \\
& = \frac{1}{2n_2} \E\left[\Big\|\Xb^{\val}_t \wb_t - \Xb^{\val}_t\left[\left((\Xb_t^{\train})^\top \Xb^{\train}_t + n_1 \lambda\Ib_d \right)^{-1} (\Xb_t^{\train})^\top \left(\Xb^{\train}_t \wb_t - \Xb^{\train}_t \wb_0\right) + \wb_0\right]\Big\|_2^2\right] \notag \\
& = \frac{1}{2n_2} \E\Big[\Big\|\Xb^{\val}_t\left(\Ib_d - \left((\Xb_t^{\train})^\top \Xb^{\train}_t + n_1 \lambda \Ib_d\right)^{-1} (\Xb_t^{\train})^\top \Xb^{\train}_t \right)(\wb_t - \wb_0)\Big\|_2^2 \Big]. \label{eq:L2rewrite}
\end{align}
We denote 
\begin{align*}
\Mb_t^{\nosp} & = \Xb_t\left(\Ib_d - \left(\Xb_t^\top \Xb_t + n\lambda \Ib_d\right)^{-1} \Xb_t^\top \Xb_t \right) \quad \textrm{and} \\
\Mb_t^{\sp} & = \Xb^{\val}_t\left(\Ib_d - \left((\Xb_t^{\train})^\top \Xb^{\train}_t + n_1 \lambda \Ib_d\right)^{-1} (\Xb_t^{\train})^\top \Xb^{\train}_t \right)
\end{align*}
to simplify the notations in \eqref{eq:L1rewrite} and \eqref{eq:L2rewrite}. We take gradient of $L^{\nosp}$ and $L^{\sp}$ with respect to $\wb_0$:
\begin{align}
& \nabla_{\wb_0} L^{\nosp}_{\lambda, n}(\wb_0) = -\frac{1}{n} \E \left[(\Mb_t^{\nosp})^\top \Mb_t^{\nosp} (\wb_t - \wb_0) \right], \label{eq:gradL1} \\
& \nabla_{\wb_0} L^{\sp}_{\lambda, n_1, n_2}(\wb_0) = -\frac{1}{n_2} \E \left[(\Mb_t^{\sp})^\top \Mb_t^{\sp}(\wb_t - \wb_0) \right] \label{eq:gradL2}.
\end{align}
Substituting $\wb_{0, \star}$ into \eqref{eq:gradL1} and taking expectation, we deduce
\begin{align}\label{eq:gradL1vanish}
\nabla_{\wb_0} L^{\nosp}_{\lambda, n}(\wb_{0, \star}) = -\frac{1}{n} \E\left[(\Mb_t^{\nosp})^\top \Mb_t^{\nosp} (\wb_t - \wb_{0, \star})\right] = \boldsymbol{0}.
\end{align}
To see this, observe that by definition $\E[\wb_t - \wb_{0, \star}] = \boldsymbol{0}$. Combining with $\wb_t$ being generated independently of $\Xb_t$, we obtain that the RHS of \eqref{eq:gradL1vanish} vanish. Following the same argument, we can show
\begin{align}
\nabla_{\wb_0} L^{\sp}_{\lambda, n_1, n_2}(\wb_{0, \star}) = \boldsymbol{0}, \nonumber
\end{align}
since $\Xb^{\val}_t$ is also independent of $\wb_t$. The reasoning above indicates that $\wb_{0, \star}$ is a stationary point of both $L^{\nosp}_{\lambda, n}$ and $L^{\sp}_{\lambda, n_1, n_2}$. The remaining step is to check $\nabla_{\wb_0}^2 L^{\nosp}_{\lambda, n}(\wb_{0, \star})$ and $\nabla_{\wb_0}^2 L^{\sp}_{\lambda, n_1, n_2}(\wb_{0, \star})$ are positive definite. From \eqref{eq:gradL1} and \eqref{eq:gradL2}, we derive respectively the Hessian of $L^{\nosp}_{\lambda, n}$ and $L^{\sp}_{\lambda, n_1, n_2}$ as
\begin{align}
\nabla^2_{\wb_0} L^{\nosp}_{\lambda, n}(\wb_{0, \star}) & = \frac{1}{n} \E[(\Mb_t^{\nosp})^\top \Mb_t^{\nosp}] \quad \textrm{and} \notag \\
\nabla^2_{\wb_0} L^{\sp}_{\lambda, n_1, n_2}(\wb_{0, \star}) & = \frac{1}{n_2} \E[(\Mb_t^{\sp})^\top \Mb_t^{\sp}]. \notag
\end{align}
Let $\vb \in \RR^d$ be any nonzero vector, our goal is to check $\vb^\top \nabla^2_{\wb_0} L^{\nosp}_{\lambda, n}(\wb_{0, \star}) \vb > 0$. A key observation is that $\left(\Ib_d - \left(\Xb_t^\top \Xb_t + n\lambda \Ib_d\right)^{-1} \Xb_t^\top \Xb_t \right)$ is positive definite for any $\lambda \neq 0$. To see this, let $\sigma_1 \geq \dots \geq \sigma_d$ be eigenvalues of $\frac{1}{n} \Xb_t^\top \Xb_t$, some algebra yields the eigenvalues of $\left(\Ib_d - \left(\Xb_t^\top \Xb_t + n\lambda \Ib_d\right)^{-1} \Xb_t^\top \Xb_t \right)$ are $\frac{\lambda}{\lambda + \sigma_i} > 0$ for $\lambda \neq 0$ and $i = 1, \dots, d$. Hence, we deduce
\begin{align}\label{eq:pdhessian}
\vb^\top \nabla^2_{\wb_0} L^{\nosp}(\wb_{0, \star}) \vb = \frac{1}{n} \EE[\vb^\top \Xb_t^\top \left(\Ib_d - \left(\Xb_t^\top \Xb_t + n\lambda \Ib_d\right)^{-1} \Xb_t^\top \Xb_t \right)^2 \Xb_t \vb] > 0.
\end{align}
The detailed computation of the eigenvalues of $\left(\Ib_d - \left(\Xb_t^\top \Xb_t + n\lambda \Ib_d\right)^{-1} \Xb_t^\top \Xb_t \right)$ utilizes the assumption that $\Xb_t$ is isotropic (an explicit argument is deferred to the proof of Lemma \ref{lemma:moments}). As a consequence, we have shown that $\wb_{0, \star}$ is a global optimum of $L^{\nosp}_{\lambda, n}$. The same argument applies to $L^{\sp}_{\lambda, n_1, n_2}$, and the proof is complete.

\section{Proof of Theorem~\ref{theorem:concentration-mse}}
\label{appendix:proof-concentration-mse}

The proof is organized as follows. We first derive the closed-form expressions of the \trtr and \trval estimators in terms of the matrices $\Ab_t$, $\Bb_t$ in Section~\ref{appendix:closed-form-estimators}. We compute the first and second moments of $\Ab_t$ and $\Bb_t$ in Section~\ref{appendix:moments}, present some concentration lemmas in Section~\ref{appendix:concentration-lemmas}, and then prove the main theorem in Section~\ref{appendix:proof-main-concentration-mse}.

\subsection{Closed-form expressions for the estimators}
\label{appendix:closed-form-estimators}

\begin{lemma}[Closed-form expressions for $\hat{\wb}_{0,T}^{\nosp}$ and $\hat{\wb}_{0,T}^{\sp}$]
  \label{lemma:closed-form-estimators}
  For the realizable linear model~\eqref{equation:realizable-model}, the \trtr method~\eqref{equation:non-split-loss} and the \trval method~\eqref{equation:split-loss} have closed-form expressions
  \begin{align}
    & \hat{\wb}^{\nosp}_{0,T} = \paren{\sum_{t=1}^T \Ab_t}^{-1} \sum_{t=1}^T \Ab_t\wb_t,~\label{equation:closed-form-trtr} \\
    & \hat{\wb}^{\sp}_{0,T} = \paren{\sum_{t=1}^T \Bb_t}^{-1} \sum_{t=1}^T \Bb_t \wb_t,~\label{equation:closed-form-trval}
\end{align}
where
\begin{align}
  & \Ab_t \defeq \lambda^2 \paren{ \frac{\Xb_t^\top\Xb_t}{n} + \lambda\Ib_d}^{-2} \frac{\Xb_t^\top\Xb_t}{n},~\label{equation:at} \\
  & \Bb_t \defeq \lambda^2 \paren{ \frac{\Xb_t^{\train\top}\Xb_t^{\train}}{n_1} + \lambda\Ib_d}^{-1} \frac{\Xb_t^{\val\top}\Xb_t^{\val}}{n_2} \paren{ \frac{\Xb_t^{\train\top}\Xb_t^{\train}}{n_1} + \lambda\Ib_d}^{-1}.~\label{equation:bt}
\end{align}
\end{lemma}
\begin{proof}
We consider the \trtr method first. Substituting $\cA_{\lambda}(\wb_0; \Xb, \yb) = \wb_0 + (\Xb^\top\Xb + n\lambda \Ib_d)^{-1}\Xb^\top(\yb - \Xb\wb_0)$ into \eqref{equation:non-split-loss} yields
\begin{align}
\hat{\wb}^{\nosp}_{0, T} = \argmin_{\wb_0} \frac{1}{T} \sum_{t=1}^T \frac{1}{2n} \norm{\yb_t - \Xb_t \left(\wb_0 + (\Xb_t^\top\Xb_t + n\lambda \Ib_d)^{-1}\Xb_t^\top(\yb_t - \Xb_t\wb_0)\right)}^2. \nonumber
\end{align}
The optimization problem above is quadratic in $\wb_0$. Therefore, by setting the gradient with respect to $\wb_0$ equal to zero, we derive
\begin{align}
\hat{\wb}_{0, T}^{\nosp} & = \argmin_{\wb_0} \frac{1}{T} \sum_{t=1}^T \frac{1}{2n} \norm{\left(\Ib_d - \Xb_t (\Xb_t^\top \Xb_t + n\lambda \Ib_d)^{-1} \Xb_t^\top\right) \yb_t - \Xb_t \left(\Ib_d - (\Xb_t^\top \Xb_t + n\lambda \Ib_d)^{-1} \Xb_t^\top \Xb_t \right) \wb_0}^2 \nonumber \\
& \overset{(i)}{=} \argmin_{\wb_0} \frac{1}{T} \sum_{t=1}^T \frac{1}{2n} \norm{\Xb_t \left(\Ib_d - (\Xb_t^\top \Xb_t + n\lambda \Ib_d)^{-1} \Xb_t^\top \Xb_t \right) \wb_t - \Xb_t \left(\Ib_d - (\Xb_t^\top \Xb_t + n\lambda \Ib_d)^{-1} \Xb_t^\top \Xb_t \right) \wb_0}^2 \nonumber \\
& \overset{(ii)}{=} \argmin_{\wb_0} \frac{1}{T} \sum_{t=1}^T \frac{1}{2n} \norm{\lambda \Xb_t \left(\Xb_t^\top \Xb_t/n + \lambda \Ib_d \right)^{-1} (\wb_t - \wb_0)}^2 \nonumber \\
& = \left(\sum_{t=1}^T \lambda^2 \left(\frac{\Xb_t^\top \Xb_t}{n} + \lambda \Ib_d \right)^{-1} \frac{\Xb_t^\top \Xb_t}{n} \left(\frac{\Xb_t^\top \Xb_t}{n} + \lambda \Ib_d \right)^{-1}\right)^{-1} \nonumber \\
& \qquad \cdot \sum_{t=1}^T \lambda^2 \left(\frac{\Xb_t^\top \Xb_t}{n} + \lambda \Ib_d \right)^{-1} \frac{\Xb_t^\top \Xb_t}{n} \left(\frac{\Xb_t^\top \Xb_t}{n} + \lambda \Ib_d \right)^{-1} \wb_t \nonumber \\
& \overset{(iii)}{=} \left(\sum_{t=1}^T \Ab_t\right)^{-1} \sum_{t=1}^T \Ab_t \wb_t, \nonumber
\end{align}
where step $(i)$ invokes the data generating assumption $\yb_t = \Wb_t \wb_t$, step $(ii)$ simplifies $\Ib_d - (\Xb_t^\top \Xb_t + n\lambda \Ib_d)^{-1} \Xb_t^\top \Xb_t$ as $\lambda n (\Xb_t^\top \Xb_t + n\lambda \Ib_d)$ by writing $\Ib_d = (\Xb_t^\top \Xb_t + n\lambda \Ib_d)^{-1} (\Xb_t^\top \Xb_t + n\lambda \Ib_d)$, and step $(iii)$ follows from plugging in the definition of $\Ab_t$ in \eqref{equation:at} and the fact that $(\Xb_t^\top \Xb_t + n\lambda \Ib_d)^{-1}$ and $\Xb_t^\top \Xb_t$ commute.

Next we consider the \trval method. The argument is analogous to the \trtr method. In particular, we recall from \eqref{equation:split-loss}:
\begin{align}
\hat{\wb}_{0, T}^{\sp} = \argmin_{\wb_0} \frac{1}{T} \sum_{t=1}^T \frac{1}{2n_2} \norm{\yb_t^{\val} - \Xb^{\val}_t \left(\wb_0 + ((\Xb^{\train}_t)^\top\Xb^{\train}_t + n_1\lambda \Ib_d)^{-1}(\Xb^{\train}_t)^\top(\yb^{\train}_t - \Xb^{\train}_t \wb_0)\right)}^2. \nonumber
\end{align}
The optimization problem above is still quadratic in $\wb_0$. Using the same rearrangement technique for the \trtr method, we deduce
\begin{align}
\hat{\wb}_{0, T}^{\sp} & = \argmin_{\wb_0} \frac{1}{T} \sum_{t=1}^T \frac{1}{2n_2} \norm{\lambda \Xb^{\val}_t \left((\Xb_t^{\train})^\top \Xb^{\train}_t/n_1 + \lambda \Ib_d \right)^{-1} (\wb_t - \wb_0)}^2 \nonumber \\
& = \left(\sum_{t=1}^T \lambda^2 \left(\frac{(\Xb^{\train}_t)^\top \Xb^{\train}_t}{n_1} + \lambda \Ib_d \right)^{-1} \frac{(\Xb^{\val}_t)^\top \Xb^{\val}_t}{n_2} \left(\frac{(\Xb^{\train}_t)^\top \Xb^{\train}_t}{n_1} + \lambda \Ib_d \right)^{-1}\right)^{-1} \nonumber \\
& \qquad \cdot \sum_{t=1}^T \lambda^2 \left(\frac{(\Xb^{\train}_t)^\top \Xb^{\train}_t}{n_1} + \lambda \Ib_d \right)^{-1} \frac{(\Xb^{\val}_t)^\top \Xb^{\val}_t}{n_2} \left(\frac{(\Xb^{\train}_t)^\top \Xb^{\train}_t}{n_1} + \lambda \Ib_d \right)^{-1} \wb_t \nonumber \\
& = \left(\sum_{t=1}^T \Bb_t\right)^{-1} \sum_{t=1}^T \Bb_t \wb_t, \nonumber
\end{align}
where the last equality follows by substituting the definition of matrix $\Bb_t$ in \eqref{equation:bt}. The proof is complete.
\end{proof}

\subsection{Moments of $\Ab_t$ and $\Bb_t$}
\label{appendix:moments}

\begin{lemma}[Moments of $\Ab_t$ and $\Bb_t$]
  \label{lemma:moments}
  Suppose $\xb_{t,i}\simiid \normal(0,\Ib_d)$ and $\Ab_t$, $\Bb_t$ are defined as in~\eqref{equation:at} and~\eqref{equation:bt}. Then, we have
  \begin{align*}
    & \E\brac{\Ab_t} = f_A(n,d,\lambda) \cdot \Ib_d,~~~\E\brac{\Ab_t^2} = f_{A^2}(n,d,\lambda) \cdot \Ib_d,\\
    & \E\brac{\Bb_t} = f_{B}(n_1, n_2,d,\lambda) \cdot \Ib_d,~~~\E\brac{\Bb_t^2} = f_{B^2}(n_1, n_2, d, \lambda) \cdot \Ib_d,
  \end{align*}
  where
  \begin{align*}      
    & f_A(n,d,\lambda) \defeq \frac{1}{d}\E\brac{\tr\paren{ \lambda^2(\hat{\bSigma}_n + \lambda\Ib_d)^{-2}\hat{\bSigma}_n }} = \frac{1}{d} \E\brac{ \sum_{i=1}^d \sigma_i^{(n)}\lambda^2/(\sigma_i^{(n)} + \lambda)^2 }, \\
    & f_{A^2}(n,d,\lambda) \defeq \frac{1}{d}\E\brac{\tr\paren{ \lambda^4(\hat{\bSigma}_n + \lambda\Ib_d)^{-4}\hat{\bSigma}_n^2 }}  = \frac{1}{d} \E\brac{ \sum_{i=1}^d (\sigma_i^{(n)})^2\lambda^4/(\sigma_i^{(n)} + \lambda)^4 } , \\
    & f_B(n_1, n_2, d, \lambda) \defeq \frac{1}{d} \E\brac{ \tr\paren{ \lambda^2(\hat{\bSigma}_{n_1} + \lambda\Ib_d)^{-2} } } = \frac{1}{d} \E\brac{ \sum_{i=1}^d \lambda^2 /(\sigma_i^{(n_1)} + \lambda)^2 }, \\
    & f_{B^2}(n_1, n_2, d, \lambda) \defeq \frac{1}{dn_2}\E\brac{ \tr\paren{\lambda^2 (\hat{\bSigma}_{n_1} + \lambda\Ib_d)^{-2} }^2 + (n_2+1) \tr\paren{ \lambda^4(\hat{\bSigma}_{n_1} + \lambda\Ib_d)^{-4} } } \\
    & \qquad \qquad \qquad = \frac{1}{dn_2} \E\brac{ \paren{\sum_{i=1}^d \lambda^2 /(\sigma_i^{(n_1)} + \lambda)^2}^2 + (n_2+1)\sum_{i=1}^d \lambda^4/(\sigma_i^{(n_1)} + \lambda)^4 },
  \end{align*}
  where $\hat{\bSigma}_n$ denotes the empirical covariance matrix $\Xb_t^\top\Xb_t/n$ where $\Xb_t\in\R^{n\times d}$ has i.i.d. $\normal(0,1)$ entries, and $\sigma_1^{(n)}\ge \dots\ge\sigma_d^{(n)}\ge 0$ is its eigenvalues. 
\end{lemma}
\begin{proof}
The proof manipulates the isotropicity of $\Xb_t$. We begin with the first moment computation.

$\bullet$ {\bf First moment of $\Ab_t$ and $\Bb_t$}.
We rewrite $\Ab_t$ in a symmetric form to ease the analysis:
\begin{align}
\EE[\Ab_t] \notag & = \EE\left[\left(\Ib_d - \left(\Xb_t^\top \Xb_t + n\lambda \Ib_d\right)^{-1} \Xb_t^\top \Xb_t \right)^\top \frac{\Xb_t^\top \Xb_t}{n} \left(\Ib_d - \left(\Xb_t^\top \Xb_t + n\lambda \Ib_d\right)^{-1} \Xb_t^\top \Xb_t \right)\right] \notag \\
& \overset{(i)}{=} \frac{1}{n} \EE\left[\Vb_t \left(\Ib_d - (\Db_t^\top \Db_t + n\lambda \Ib_d)^{-1} \Db_t^\top \Db_t \right)^\top \Db_t^\top \Db_t \left(\Ib_d - (\Db_t^\top \Db_t + n\lambda \Ib_d)^{-1} \Db_t^\top \Db_t \right) \Vb_t^\top \right], \label{eq:MTMSVD}
\end{align}
where the equality $(i)$ is obtained by plugging in the SVD of $\Xb_t = \Ub_t \Db_t \Vb_t^\top$ with  $\Ub_t \in \RR^{n \times n}$, $\Db_t \in \RR^{n \times d}$, and $\Vb_t \in \RR^{d \times d}$. A key observation is that $\Ub_t$ and $\Vb_t$ are independent, since $\Xb_t$ is isotropic, i.e., homogeneous in each orthogonal direction. To see this, for any orthogonal matrices $\Qb \in \RR^{n \times n}$ and $\Pb \in \RR^{d \times d}$, we know $\Xb_t$ and $\Qb \Xb_t \Pb^\top$ share the same distribution. Moreover, we have $\Qb \Xb_t \Pb^\top= (\Qb \Ub_t) \Db_t (\Pb \Vb_t)^\top$ as the SVD. This shows that the left and right singular matrices are independent and both uniformly distributed on all the orthogonal matrices of the corresponding dimensions ($\RR^{n\times n}$ and $\RR^{d\times d}$, respectively).

Recall that we denote $\sigma_1^{(n)} \geq \dots \geq \sigma_d^{(n)}$ as the eigenvalues of $\frac{1}{n} \Xb_t^\top \Xb_t$. Thus, we have $\Db_t^\top \Db_t = {\rm Diag}(n\sigma_1^{(n)}, \dots, n\sigma_d^{(n)})$. We can further simplify \eqref{eq:MTMSVD} as
\begin{align}
& \quad~ \frac{1}{n}\EE\left[\Vb_t \left(\Ib_d - (\Db_t^\top \Db_t + n\lambda \Ib_d)^{-1} \Db_t^\top \Db_t \right)^\top \Db_t^\top \Db_t \left(\Ib_d - (\Db_t^\top \Db_t + n\lambda \Ib_d)^{-1} \Db_t^\top \Db_t \right) \Vb_t^\top \right] \notag \\
& = \frac{1}{n} \EE\left[\Vb_t {\rm Diag}\left(\frac{n\lambda^2 \sigma_1^{(n)}}{(\sigma_1^{(n)} + \lambda)^2}, \dots, \frac{n\lambda^2 \sigma_d^{(n)}}{(\sigma_d^{(n)} + \lambda)^2}\right) \Vb_t^\top \right] \label{eq:svdAt} \\
& = \EE\left[\sum_{i=1}^d \frac{\lambda^2 \sigma_i^{(n)}}{(\sigma_i^{(n)} + \lambda)^2} \vb_{t, i} \vb_{t, i}^\top\right].
\label{eq:simplifiedMTM}
\end{align}
We will utilize the isotropicity of $\Xb_t$ to find \eqref{eq:simplifiedMTM}. Recall that we have shown that $\Vb_t$ is uniform on all the orthogonal matrices. Let $\Pb \in \RR^{d \times d}$ be any permutation matrix, then $\Vb_t\Pb$ has the same distribution as $\Vb_t$. For this permuted data matrix $\Vb_t\Pb$, \eqref{eq:simplifiedMTM} becomes
\begin{align}
\EE\left[\sum_{i=1}^d \frac{\lambda^2 \sigma_i^{(n)}}{(\sigma_i^{(n)} + \lambda)^2} \vb_{t, \tau_p(i)} \vb_{t, \tau_p(i)}^\top\right] ~~\textrm{with}~ \tau_p(i) ~\textrm{denotes the permutation of the $i$-th element in}~\Pb. \notag
\end{align}
Summing over all the permutations $\Pb$ (and there are totally $d!$ instances), we deduce
\begin{align}
d! \EE[\Ab_t] & = \sum_{\textrm{all permutation}~\tau_p} \EE\left[\sum_{i=1}^d \frac{\lambda^2 \sigma_i^{(n)}}{(\sigma_i^{(n)} + \lambda)^2} \vb_{t, \tau_p(i)} \vb_{t, \tau_p(i)}^\top\right] \notag \\
& = (d-1)! \EE\left[\sum_{j=1}^d \left[\sum_{i=1}^d \frac{\lambda^2 \sigma_i^{(n)}}{(\sigma_i^{(n)} + \lambda)^2}\right] \vb_{t, j} \vb_{t, j}^\top\right] \notag \\
& = (d-1)! \EE\left[\Vb_t {\rm Diag}\left(\sum_{i=1}^d \frac{\lambda^2 \sigma_i^{(n)}}{(\lambda + \sigma_i^{(n)})^2}, \dots, \sum_{i=1}^d \frac{\lambda^2 \sigma_i^{(n)}}{(\lambda + \sigma_i^{(n)})^2}\right) \Vb_t^\top \right] \notag \\
& = (d-1)! \EE\left[\sum_{i=1}^d \frac{\lambda^2 \sigma_i^{(n)}}{(\lambda + \sigma_i^{(n)})^2} \Vb_t \Vb_t^\top \right].
\label{eq:permutehessian}
\end{align}
Dividing $(d-1)!$ on both sides of \eqref{eq:permutehessian} yields
\begin{align}
\EE[\Ab_t] = \frac{1}{d} \EE \left[\sum_{i=1}^d \frac{\lambda^2 \sigma_i^{(n)}}{(\lambda + \sigma_i^{(n)})^2}\right]\Ib_d. \label{eq:hessianLnosplit}
\end{align}

Similar to the computation of $\Ab_t$, we compute $\EE[\Bb_t]$ as follows.
\begin{align}
\EE[\Bb_t] & = \EE\bigg[\left(\Ib_d - \left((\Xb_t^{\train})^\top \Xb^{\train}_t + n_1\lambda \Ib_d\right)^{-1} (\Xb_t^{\train})^\top \Xb^{\train}_t \right)^\top \frac{(\Xb_t^{\val})^\top \Xb^{\val}_t}{n_2} \notag \\
& \qquad~ \cdot \left(\Ib_d - \left((\Xb^{\train})_t^\top \Xb^{\train}_t + n_1\lambda \Ib_d\right)^{-1} (\Xb_t^{\train})^\top \Xb^{\train}_t \right)\bigg] \notag \\
& \overset{(i)}{=} \EE\bigg[\left(\Ib_d - \left((\Xb_t^{\train})^\top \Xb^{\train}_t + n_1\lambda \Ib_d\right)^{-1} ((\Xb_t^{\train})^\top \Xb^{\train}_t \right)^\top \notag \\
& \qquad~ \cdot \left(\Ib_d - \left((\Xb^{\train})_t^\top \Xb^{\train}_t + n_1\lambda \Ib_d\right)^{-1} (\Xb_t^{\train})^\top \Xb^{\train}_t \right)\bigg] \notag \\
& \overset{(ii)}{=} \EE\Big[\Vb^{\train}_t \left(\Ib_d - ((\Db^{\train}_t)^\top \Db^{\train}_t + n_1\lambda \Ib_d)^{-1} (\Db_t^{\train})^\top \Db^{\train}_t \right)^2 (\Vb_t^{\train})^\top \Big], \label{eq:MTMSVDsplit}
\end{align}
where $(i)$ uses the data generating assumption $\EE[(\Xb_t^{\val})^\top \Xb_t^{\val}] = n_2 \Ib_d$ and the independence between $\Xb_t^{\train}$ and $\Xb_t^{\val}$, and $(ii)$ follows from the SVD of $\Xb_t^{\train} = \Ub_t^{\train} \Db_t^{\train} (\Vb_t^{\train})^\top$.

Here we denote $\sigma_1^{(n_1)} \geq \dots \geq \sigma_d^{(n_1)}$ as the eigenvalues of $\frac{1}{n_1} (\Xb_t^{\train})^\top \Xb_t^{\train}$. Thus, we have $(\Db_t^{\train})^\top \Db^{\train}_t = {\rm Diag}(n_1\sigma_1^{(n_1)}, \dots, n_1\sigma_d^{(n_1)})$. We can now further simplify \eqref{eq:MTMSVDsplit} as
\begin{align}
& \quad \EE\Big[\Vb^{\train}_t \left(\Ib_d - ((\Db^{\train}_t)^\top \Db^{\train}_t + n_1\lambda \Ib_d)^{-1} (\Db_t^{\train})^\top \Db^{\train}_t \right)^2 (\Vb_t^{\train})^\top \Big] \notag \\
& \overset{(i)}{=} \EE\bigg[\Vb_t^{\train} {\rm Diag}\left(\frac{\lambda^2}{(\sigma_1^{(n_1)} + \lambda)^2}, \dots, \frac{\lambda^2}{(\sigma_d^{(n_1)} + \lambda)^2}\right) (\Vb_t^{\train})^\top \bigg] \label{eq:svdbt} \\
& \overset{(ii)}{=} \frac{1}{d} \EE\left[\sum_{i=1}^d \frac{\lambda^2}{(\lambda + \sigma_i^{(n_1)})^2}\right] \Ib_d. \label{eq:hessianLsplit}
\end{align}
Step $(i)$ follows from the same computation in \eqref{eq:svdAt}, and step $(ii)$ uses the permutation trick in \eqref{eq:permutehessian}.

\noindent $\bullet$ {\bf Second moment of $\Ab_t$ and $\Bb_t$}.
Using the SVD of $\Xb_t$ and the computation in \eqref{eq:svdAt}, we derive
\begin{align}
\EE [\Ab_t^2] & = \EE\bigg[\Vb_t {\rm Diag}\left(\frac{\lambda^2 \sigma_1^{(n)}}{(\sigma_1^{(n)} + \lambda)^2}, \dots, \frac{\lambda^2 \sigma_d^{(n)}}{(\sigma_d^{(n)} + \lambda)^2}\right) \Vb_t^\top \cdot \Vb_t {\rm Diag}\left(\frac{\lambda^2 \sigma_1^{(n)}}{(\sigma_1^{(n)} + \lambda)^2}, \dots, \frac{\lambda^2 \sigma_d^{(n)}}{(\sigma_d^{(n)} + \lambda)^2}\right) \Vb_t^\top \bigg] \notag \\
& = \EE\left[\Vb_t {\rm Diag}\left(\frac{\lambda^4 (\sigma_1^{(n)})^2}{(\sigma_1^{(n)} + \lambda)^4}, \dots, \frac{\lambda^4 (\sigma_d^{(n)})^2}{(\sigma_d^{(n)} + \lambda)^4}\right) \Vb_t^\top\right] \notag \\
& \overset{(i)}{=} \frac{1}{d} \EE\left[{\rm Diag}\left(\frac{\lambda^4 (\sigma_1^{(n)})^2}{(\sigma_1^{(n)} + \lambda)^4}, \dots, \frac{\lambda^4 (\sigma_d^{(n)})^2}{(\sigma_d^{(n)} + \lambda)^4}\right)\right]  \Ib_d, \nonumber
\end{align}
where step $(i)$ applies the permutation trick in \eqref{eq:permutehessian} to tackle $\EE\left[\Vb_t {\rm Diag}\left(\frac{\lambda^4 (\sigma_1^{(n)})^2}{(\sigma_1^{(n)} + \lambda)^4}, \dots, \frac{\lambda^4 (\sigma_d^{(n)})^2}{(\sigma_d^{(n)} + \lambda)^4}\right) \Vb_t^\top\right]$.

For $\EE[\Bb_t^2]$, the computation is a bit more complex. Using the SVD of $\Xb_t^{\train}$ as in \eqref{eq:svdbt}, we obtain
\begin{align}
\EE[\Bb_t^2] & = \frac{1}{n_2^2} \EE\bigg[\Vb_t^{\train} {\rm Diag}\left(\frac{\lambda}{\sigma_1^{(n_1)} + \lambda}, \dots, \frac{\lambda}{\sigma_d^{(n_1)} + \lambda}\right) (\Vb_t^{\train})^\top (\Xb_t^{\val})^\top \notag \\
& \qquad~ \cdot \Xb_t^{\val} \Vb_t^{\train} {\rm Diag}\left(\frac{\lambda^2}{(\sigma_1^{(n_1)} + \lambda)^2}, \dots, \frac{\lambda^2}{(\sigma_d^{(n_1)} + \lambda)^2}\right) (\Vb_t^{\train})^\top (\Xb_t^{\val})^\top \notag \\
& \qquad~ \cdot \Xb_t^{\val} \Vb_t^{\train} {\rm Diag}\left(\frac{\lambda}{\sigma_1^{(n_1)} + \lambda}, \dots, \frac{\lambda}{\sigma_d^{(n_1)} + \lambda}\right) (\Vb_t^{\train})^\top \bigg]. \label{eq:diagonalbt2}
\end{align}
We claim that $\EE[\Bb_t^2]$ is diagonal. To see this, we take expectation with respect to $\Xb_t^{\val}$ first in \eqref{eq:diagonalbt2}. Since $\Vb_t^{\train}$ is an orthogonal matrix, $\Xb_t^{\val}\Vb_t^{\train}$ has the same distribution as $\Xb_t^{\val}$ and independent of $\Xb_t$. We verify that any off-diagonal element is zero in the following matrix
\begin{align}
\Tb \defeq ~& \EE_{\Xb_t^{\val}} \bigg[(\Vb_t^{\train})^\top (\Xb_t^{\val})^\top \Xb_t^{\val} \Vb_t^{\train} {\rm Diag}\left(\frac{\lambda^2}{(\sigma_1^{(n_1)} + \lambda)^2}, \dots, \frac{\lambda^2}{(\sigma_d^{(n_1)} + \lambda)^2}\right) \notag \\
& \qquad~ \cdot (\Vb_t^{\train})^\top (\Xb_t^{\val})^\top\Xb_t^{\val} \Vb_t^{\train}\bigg]. \notag
\end{align} 
We denote $\Xb_t^{\val}\Vb_t^{\train} = [\xb_1, \dots, \xb_n]^\top \in \RR^{n_2 \times d}$ with $\xb_i \overset{\rm iid}{\sim} \normal(\bzero, \Ib_d)$. For $k \neq \ell$, the $(k, \ell)$-th entry $T_{k, \ell}$ of $\Tb$ is
\begin{align}
T_{k, \ell} & = \EE\left[\sum_j \left(\frac{\lambda^2}{(\sigma_j^{(n_1)} + \lambda)^2} \left(\sum_i x_{k, i} x_{j, i}\right) \left(\sum_i x_{j, i} x_{\ell, i}\right)\right)\right] \notag \\
& = \EE\left[\sum_j \frac{\lambda^2}{(\sigma_j^{(n_1)} + \lambda)^2} \left(\sum_{m, n} x_{k, m} x_{j, m} x_{j, n} x_{\ell, n}\right)\right] \notag \\
& \overset{(i)}{=} 0, \notag
\end{align}
where $x_{i, j}$ denotes the $j$-th element of $\xb_i$. Equality $(i)$ holds, since either $x_{k, m}$ or $x_{\ell, n}$ only appears once in each summand. Therefore, we can write $\Tb = {\rm Diag}\left(T_{1, 1}, \dots, T_{d, d}\right)$ with $T_{k, k}$ being
\begin{align}
T_{k, k} & = \EE\left[\sum_j \frac{\lambda^2}{(\sigma_j^{(n_1)} + \lambda)^2} \left(\sum_{m, n} x_{k, m} x_{j, m} x_{j, n} x_{\ell, n}\right)\right] \notag \\
& = \EE\left[\frac{\lambda^2}{(\sigma_k^{(n_1)} + \lambda)^2} \left(\sum_{m, n} x_{k, m} x_{k, m} x_{k, n} x_{k, n}\right)\right]. \notag
\end{align}
Observe that $T_{k, k}$ only depends on $\sigma_k^{(n_1)}$. Plugging back into \eqref{eq:diagonalbt2}, we have
\begin{align}
\EE[\Bb_t^2] & = \frac{1}{n_2^2} \EE\bigg[\Vb_t^{\train} {\rm Diag}\left(\frac{\lambda}{\sigma_1^{(n_1)} + \lambda}, \dots, \frac{\lambda}{\sigma_d^{(n_1)} + \lambda}\right) (\Vb_t^{\train})^\top (\Xb_t^{\val})^\top \notag \\
& \qquad~ \cdot \Xb_t^{\val} \Vb_t^{\train} {\rm Diag}\left(\frac{\lambda^2}{(\sigma_1^{(n_1)} + \lambda)^2}, \dots, \frac{\lambda^2}{(\sigma_d^{(n_1)} + \lambda)^2}\right) (\Vb_t^{\train})^\top (\Xb_t^{\val})^\top \notag \\
& \qquad~ \cdot \Xb_t^{\val} \Vb_t^{\train} {\rm Diag}\left(\frac{\lambda}{\sigma_1^{(n_1)} + \lambda}, \dots, \frac{\lambda}{\sigma_d^{(n_1)} + \lambda}\right) (\Vb_t^{\train})^\top \bigg] \notag \\
& = \frac{1}{n_2^2} \EE\bigg[\Vb_t^{\train} {\rm Diag}\left(\frac{\lambda}{\sigma_1^{(n_1)} + \lambda}, \dots, \frac{\lambda}{\sigma_d^{(n_1)} + \lambda}\right) {\rm Diag}(T_{1, 1}, \dots, T_{d,d}) \notag \\
& \qquad~ \cdot {\rm Diag}\left(\frac{\lambda}{\sigma_1^{(n_1)} + \lambda}, \dots, \frac{\lambda}{\sigma_d^{(n_1)} + \lambda}\right) (\Vb_t^{\train})^\top \bigg] \notag \\
& = \frac{1}{n_2^2} \EE\bigg[\Vb_t^{\train} {\rm Diag}\left(\frac{\lambda^2 T_{1, 1}}{(\sigma_1^{(n_1)} + \lambda)^2}, \dots, \frac{\lambda^2 T_{d, d}}{(\sigma_d^{(n_1)} + \lambda)^2}\right) (\Vb_t^{\train})^\top \bigg] \notag \\
& \overset{(i)}{=} c \Ib_d, \label{eq:bt2c}
\end{align}
where equality $(i)$ utilizes the permutation trick in \eqref{eq:hessianLnosplit}. To this end, it suffices to find $c$ as
\begin{align}
c & = \frac{1}{d} \EE[\Bb_t^2] \nonumber \\
& = \frac{1}{dn_2^2} {\rm tr}\bigg(\EE\bigg[\Vb_t^{\train} {\rm Diag}\left(\frac{\lambda}{\sigma_1^{(n_1)} + \lambda}, \dots, \frac{\lambda}{\sigma_d^{(n_1)} + \lambda}\right) (\Vb_t^{\train})^\top (\Xb_t^{\val})^\top \notag \\
& \qquad~ \cdot \Xb_t^{\val} \Vb_t^{\train} {\rm Diag}\left(\frac{\lambda^2}{(\sigma_1^{(n_1)} + \lambda)^2}, \dots, \frac{\lambda^2}{(\sigma_d^{(n_1)} + \lambda)^2}\right) (\Vb_t^{\train})^\top (\Xb_t^{\val})^\top \notag \\
& \qquad~ \cdot \Xb_t^{\val} \Vb_t^{\train} {\rm Diag}\left(\frac{\lambda}{\sigma_1^{(n_1)} + \lambda}, \dots, \frac{\lambda}{\sigma_d^{(n_1)} + \lambda}\right) (\Vb_t^{\train})^\top \bigg]\bigg) \notag \\
& = \frac{1}{dn_2^2} {\rm tr}\bigg(\EE\bigg[\Xb_t^{\val} \Vb_t^{\train} {\rm Diag}\left(\frac{\lambda^2}{(\sigma_1^{(n_1)} + \lambda)^2}, \dots, \frac{\lambda^2}{(\sigma_d^{(n_1)} + \lambda)^2}\right) (\Vb_t^{\train})^\top (\Xb_t^{\val})^\top \notag \\
& \qquad~ \cdot \Xb_t^{\val} \Vb_t^{\train} {\rm Diag}\left(\frac{\lambda^2}{(\sigma_1^{(n_1)} + \lambda)^2}, \dots, \frac{\lambda^2}{(\sigma_d^{(n_1)} + \lambda)^2}\right) (\Vb_t^{\train})^\top (\Xb_t^{\val})^\top \bigg]\bigg). \label{eq:squaretrace}
\end{align}
Observe again that $\Xb_t^{\val} \Vb_t^{\train} \in \RR^{n_2 \times d}$ is a Gaussian random matrix. We rewrite \eqref{eq:squaretrace} as
\begin{align}
c & = \frac{1}{dn_2^2} \EE\left[\left(\sum_{i, j=1}^{n_2} \vb_i^\top {\rm Diag}\left(\frac{\lambda^2}{(\sigma_1^{(n_1)} + \lambda)^2}, \dots, \frac{\lambda^2}{(\sigma_d^{(n_1)} + \lambda)^2}\right) \vb_j\right)^2 \right], \label{eq:simplifiedsquaretrace}
\end{align}
where $\vb_i \overset{\rm iid}{\sim} \normal(\bzero, \Ib_d)$ is i.i.d. Gaussian random vectors for $i = 1, \dots, n_2$. To compute \eqref{eq:simplifiedsquaretrace}, we need the following result.
\begin{claim}\label{claim}
Given any symmetric matrix $\Tb \in \RR^{d \times d}$ and i.i.d. standard Gaussian random vectors $\vb, \ub \overset{\rm iid}{\sim} \normal(\bzero, \Ib_d)$, we have
\begin{align}
\EE \left[(\vb^\top \Tb \vb)^2\right] & = 2 \|\Tb\|_{\sf Fr}^2 + {\rm tr}^2(\Tb)\quad \textrm{and} \label{eq:claimresult1} \\
\EE \left[(\vb^\top \Tb \ub)^2 \right] & = \|\Tb\|_{\sf Fr}^2. \label{eq:claimresult2}
\end{align}
\end{claim}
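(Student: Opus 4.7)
The claim consists of two standard identities about Gaussian quadratic and bilinear forms; both follow from diagonalization plus elementary moment calculations for standard Gaussians. The key facts are: if $z \sim \normal(0,1)$ then $\E[z^2]=1$ and $\E[z^4]=3$; and if $\Qb$ is orthogonal and $\vb \sim \normal(\bzero, \Ib_d)$ then $\Qb^\top \vb \sim \normal(\bzero, \Ib_d)$.

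\textbf{Proof of \eqref{eq:claimresult1}.} Since $\Ab$ is symmetric, write its spectral decomposition $\Ab = \Qb\Lambdab \Qb^\top$ with $\Lambdab = {\rm Diag}(\lambda_1,\dots,\lambda_d)$. Setting $\zb \defeq \Qb^\top \vb$, rotational invariance gives $\zb \sim \normal(\bzero, \Ib_d)$, so $\vb^\top \Ab\vb = \sum_{i} \lambda_i z_i^2$. Then
\begin{align*}
\E\bigl[(\vb^\top \Ab\vb)^2\bigr] = \sum_{i,j} \lambda_i \lambda_j \,\E[z_i^2 z_j^2] = \sum_{i} 3\lambda_i^2 + \sum_{i\ne j} \lambda_i\lambda_j = 2\sum_i \lambda_i^2 + \Bigl(\sum_i \lambda_i\Bigr)^{\!2},
\end{align*}
which equals $2\|\Ab\|_{\sf Fr}^2 + {\rm tr}^2(\Ab)$ since the Frobenius norm and trace are invariant under the orthogonal change of basis.

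\textbf{Proof of \eqref{eq:claimresult2}.} Condition on $\vb$. Then $\vb^\top \Ab \ub$ is a linear function of $\ub \sim \normal(\bzero, \Ib_d)$ and is therefore a scalar Gaussian with mean zero and variance $\vb^\top \Ab \Ab^\top \vb = \vb^\top \Ab^2 \vb$ (using symmetry of $\Ab$). Hence by the tower property
\begin{align*}
\E\bigl[(\vb^\top \Ab\ub)^2\bigr] = \E_{\vb}\bigl[\vb^\top \Ab^2 \vb\bigr] = {\rm tr}(\Ab^2) = \|\Ab\|_{\sf Fr}^2,
\end{align*}
where the second equality uses the standard identity $\E[\vb^\top \Mb \vb] = {\rm tr}(\Mb)$ for $\vb \sim \normal(\bzero, \Ib_d)$.

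\textbf{Obstacles.} There are essentially none: both identities are textbook Gaussian moment calculations. The only small care point is writing \eqref{eq:claimresult1} through eigenvalues rather than attempting to directly expand $(\vb^\top \Ab\vb)^2$ entry-wise, which would require tracking fourth-order Isserlis/Wick contractions and is more tedious. Using the spectral decomposition immediately reduces everything to $\E[z^4]=3$ and $\E[z_i^2 z_j^2]=1$ for $i\ne j$, and for \eqref{eq:claimresult2} the conditioning argument avoids any direct fourth-moment computation altogether.
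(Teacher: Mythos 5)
Your proof is correct, and it takes a genuinely different (and arguably cleaner) route than the paper's. For \eqref{eq:claimresult1}, the paper expands $(\vb^\top\Ab\vb)^2 = \sum_{i,j,k,\ell} v_i v_j v_k v_\ell A_{i,j}A_{k,\ell}$ entry-wise and sorts the surviving terms using $\E[v_i^4]=3$ and $\E[v_i^2 v_j^2]=1$; you instead diagonalize $\Ab$ and use rotational invariance to reduce to a weighted sum of independent $\chi^2_1$ variables, which replaces the bookkeeping of Wick contractions over four indices with a two-index computation. Both ultimately rest on the same fourth-moment facts, but the spectral route is tidier and makes the invariance of $\|\Ab\|_{\sf Fr}^2$ and $\tr(\Ab)$ under conjugation do the organizational work. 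For \eqref{eq:claimresult2}, the paper writes $\E[(\vb^\top\Ab\ub)^2] = \tr(\E[\ub\ub^\top\Ab\vb\vb^\top\Ab])$ and uses $\E[\vb\vb^\top]=\E[\ub\ub^\top]=\Ib_d$ together with the cyclic trace property; you instead condition on $\vb$, observe that $\vb^\top\Ab\ub\mid\vb \sim \normal(0,\vb^\top\Ab^2\vb)$, and apply the tower property. These are essentially two phrasings of the same calculation, though your conditioning argument makes the role of Gaussianity of $\ub$ (versus any mean-zero isotropic vector) explicit, while the paper's trace identity shows the result actually holds for any independent mean-zero isotropic $\vb,\ub$. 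No gaps; both of your steps are valid.
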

\begin{proof}[Proof of Claim \ref{claim}]
We show \eqref{eq:claimresult1} first. We denote $T_{i, j}$ as the $(i, j)$-th element of $\Tb$ and $v_i$ as the $i$-th element of $\vb$. Expanding the quadratic form, we have
\begin{align}
\EE \left[(\vb^\top \Tb \vb)^2\right] & = \EE\left[\sum_{i, j, k, \ell \leq d} v_i v_j v_k v_{\ell} T_{i, j} T_{k, \ell} \right] \notag \\
& = \EE\left[\sum_{i \leq d} v_i^4 T_{i, i}^2 \right] + \EE \left[\sum_{i \neq j} v_i^2 v_j^2 (T_{i, j}^2 + T_{i, i} T_{j, j} + T_{i, j} T_{j, i})\right] \notag \\
& = 3 \sum_{i \leq d} T_{i, i}^2 + \sum_{i \neq j} (T_{i, j}^2 + T_{i, i} T_{j, j} + T_{i, j} T_{j, i}) \notag \\
& = {\rm tr}^2 (\Tb) + 2 \sum_{i \leq d} T_{i, i}^2 + \sum_{i \neq j} (T_{i, j}^2 + T_{i, j} T_{j, i}) \notag \\
& = {\rm tr}^2 (\Tb) + 2 \|\Tb\|_{\sf Fr}^2.\notag
\end{align}
Next, we show \eqref{eq:claimresult2} by the cyclic property of race.
\begin{align}
\EE \left[(\vb^\top \Tb \ub)^2 \right] = {\rm tr}\left(\EE\left[\ub\ub^\top \Tb \vb \vb^\top \Tb \right]\right) = {\rm tr}(\Tb^2) = \|\Tb\|_{\sf Fr}^2. \notag
\end{align}
\end{proof}
We back to the computation of \eqref{eq:simplifiedsquaretrace} using Claim \ref{claim}.
\begin{align}
c & = \frac{1}{dn_2^2} \EE\left[\sum_{i, j=1}^{n_2} \left(\vb_i^\top {\rm Diag}\left(\frac{\lambda^2}{(\sigma_1^{(n_1)} + \lambda)^2}, \dots, \frac{\lambda^2}{(\sigma_d^{(n_1)} + \lambda)^2}\right) \vb_j\right)^2 \right] \notag \\
& = \frac{1}{dn_2^2} \EE \left[\sum_{i=1}^{n_2} \left(\vb_i^\top {\rm Diag}\left(\frac{\lambda^2}{(\sigma_1^{(n_1)} + \lambda)^2}, \dots, \frac{\lambda^2}{(\sigma_d^{(n_1)} + \lambda)^2}\right) \vb_i\right)^2 \right] \notag \\
& \qquad + \frac{1}{dn_2^2} \EE \left[\sum_{i\neq j} \left(\vb_i^\top {\rm Diag}\left(\frac{\lambda^2}{(\sigma_1^{(n_1)} + \lambda)^2}, \dots, \frac{\lambda^2}{(\sigma_d^{(n_1)} + \lambda)^2}\right) \vb_j\right)^2 \right] \notag \\
& = \frac{1}{dn_2} \EE \left[{\rm tr}^2 \left({\rm Diag}\left(\frac{\lambda^2}{(\sigma_1^{(n_1)} + \lambda)^2}, \dots, \frac{\lambda^2}{(\sigma_d^{(n_1)} + \lambda)^2}\right)\right)\right] \notag \\
& \qquad + \frac{2}{dn_2} \EE\left[\left\|{\rm Diag}\left(\frac{\lambda^2}{(\sigma_1^{(n_1)} + \lambda)^2}, \dots, \frac{\lambda^2}{(\sigma_d^{(n_1)} + \lambda)^2}\right)\right\|_{\sf Fr}^2\right] \notag \\
& \qquad + \frac{n_2-1}{dn_2} \EE \left[\left\|{\rm Diag}\left(\frac{\lambda^2}{(\sigma_1^{(n_1)} + \lambda)^2}, \dots, \frac{\lambda^2}{(\sigma_d^{(n_1)} + \lambda)^2}\right)\right\|_{\sf Fr}^2\right] \notag \\
& = \frac{1}{dn_2} \left(\EE \left[\sum_{i=1}^d \frac{\lambda^2}{(\sigma_i^{(n_1)} + \lambda)^2} \right]^2 + (n_2+1) \EE\left[\sum_{i=1}^d \frac{\lambda^4}{(\sigma_i^{(n_1)} + \lambda)^4}\right]\right). \label{eq:cvalue}
\end{align}
Substituting the value of $c$ in \eqref{eq:cvalue} into \eqref{eq:bt2c}, we derive the desired result
\begin{align}
\EE[\Bb_t^2] = \frac{1}{dn_2} \left(\EE \left[\sum_{i=1}^d \frac{\lambda^2}{(\sigma_i^{(n_1)} + \lambda)^2} \right]^2 + (n_2+1) \EE\left[\sum_{i=1}^d \frac{\lambda^4}{(\sigma_i^{(n_1)} + \lambda)^4}\right]\right). \notag
\end{align}
The proof is complete.
\end{proof}

\begin{lemma}[Bounds on moments]
  \label{lemma:moments-bounds}
  Let $f_{\set{A,A^2}}(n,d,\lambda)$ and $f_{\set{B,B^2}}(n_1, n_2, d, \lambda)$ be defined as in Lemma~\ref{lemma:moments}. Suppose $d/n=\gamma=\Theta(1)$ and $n_1/n=s=\Theta(1)$, $\lambda=\Theta(1)>0$. Then we have
  \begin{align*}
    \underline{c}_A\le f_A(n,d,\lambda), f_{A^2}(n,d,\lambda) \le \overline{c}_A,
  \end{align*}
  and
  \begin{align*}
    \underline{c}_B \le f_B(n_1,n_2,d,\lambda), f_{B^2}(n_1, n_2, d, \lambda) \le \overline{c}_B,
  \end{align*}
  where $\underline{c}_A,\overline{c}_A>0$ depend only on $\gamma,\lambda$ but not $d$, and $\underline{c}_B, \overline{c}_B>0$ depend only on $\gamma,s,\lambda$ but not $d$.
\end{lemma}
\begin{proof}
  The upper bounds follow straightforwardly from the closed-form expressions established in Lemma~\ref{lemma:moments}: we have
  \begin{align*}
    & f_A(n,d,\lambda) = \frac{1}{d} \E\brac{ \tr\paren{ \lambda^2 \paren{\hat{\bSigma}_n + \lambda\Ib_d}^{-2}\hat{\bSigma}_n} } \le \frac{1}{d} \E\brac{ \tr\paren{ \hat{\bSigma}_n} } = 1. \\
    & f_{A^2}(n,d,\lambda) = \frac{1}{d} \E\brac{ \tr\paren{\lambda^4\paren{ \hat{\bSigma}_n + \lambda\Ib_d }^{-2}\hat{\bSigma}_n^2} } \le \frac{1}{d}\E\brac{ \tr\paren{\hat{\bSigma}_n^2} } \stackrel{(i)}{=} \frac{1}{d} \cdot \paren{\frac{d^2+(n+1)d}{n}} = \frac{d+n+1}{n} \le 2 + \gamma,
  \end{align*}
where (i) used the fact that
\begin{align*}
  \E\brac{ \tr\paren{\hat{\bSigma}_n^2} } = \E\brac{ \lfro{\hat{\bSigma}_n}^2 } = \frac{1}{n^2}\cdot \E\brac{ \sum_{i,j=1}^n (\xb_i^\top\xb_j)^2 } = \frac{1}{n^2} \cdot \brac{ n(n-1)d + n(d^2+2d) } = \frac{d^2+(n+1)d}{n}.
\end{align*}
Therefore, we can take $\overline{c}_A=2+\gamma$. Similarly, we have
\begin{align*}
  & f_B(n_1, n_2, d, \lambda) = \frac{1}{d}\E\brac{ \sum_{i=1}^d \lambda^2 / (\sigma_i^{(n_1)} + \lambda)^2 } \le 1, \\
  & f_{B^2}(n_1, n_2, d, \lambda) = \frac{1}{dn_2}  \E\brac{ \paren{ \sum_{i=1}^d \lambda^2 / (\sigma_i^{(n_1)} + \lambda)^2 }^2 + (n_2+1) \sum_{i=1}^d \lambda^4/(\sigma_i^{(n_1)} + \lambda)^4 } \\
  & \qquad \le \frac{1}{dn_2} \brac{ d^2 + d(n_2+1) } = \frac{d+n_2+1}{n_2} = \frac{d + n(1-s) + 1}{n(1-s)} \le \frac{\gamma+2-s}{1-s}.
\end{align*}
Therefore we can take $\overline{c}_B=(2+\gamma-s)/(1-s)$.

For the lower bounds, it suffices to prove the lower bounds for $f_A(n,d,\lambda)$ and $f_B(n_1, n_2, d, \lambda)$ (as $\frac{1}{d}\tr(\Mb^2)\ge (\frac{1}{d}\tr(\Mb))^2$ always holds for any PSD matrix $\Mb\in\R^{d\times d}$). For this we apply the same Stieltjes calculation as in the proof of Theorem~\ref{theorem:high-dim-limit} to conclude that
\begin{align*}
  \lim_{d,n\to\infty, d/n\to\gamma} f_A(n,d,\lambda) = \frac{\lambda^2}{2\gamma} \paren{ \frac{\lambda+1+\gamma}{\sqrt{(\lambda+1+\gamma)^2-4\gamma}} - 1} > 0.
\end{align*}
Also note that $f_A(\ceil{d/\gamma}, d, \lambda) > 0$ for any $d\ge 1$. Therefore, we have
\begin{align*}
  \inf_{d\ge 1} f_A(\ceil{d/\gamma}, d, \lambda) \defeq \wt{c}_A > 0.
\end{align*}
Taking $\underline{c}_A=\min\set{\wt{c}_A, \wt{c}_A^2}>0$ (which only depends on $\gamma$, we get $\min\set{f_A(n,d,\lambda), f_{A^2}(n,d,\lambda)} \ge \underline{c}_A$, the desired result. Similarly, for $f_B$, we have
\begin{align*}
  & \quad \lim_{d,n_1\to\infty, d/n_1\to\gamma/s} \frac{1}{d} \E\brac{ \tr\paren{ \lambda^2 (\lambda\Ib_d + \hat{\bSigma}_{n_1})^{-2}} } = \lambda^2 \cdot \brac{-\frac{d}{d\lambda_1} s(\lambda_1, \lambda_2)|_{\lambda_1=\lambda, \lambda_2=1}} , \\
  & = \lambda^2 \cdot \frac{1}{4\gamma/s\cdot \lambda^2\sqrt{(\lambda+1+\gamma/s)^2-4\gamma/s}} \brac{ 2(\gamma-1)\sqrt{(\lambda+1+\gamma/s)^2-4\gamma/s} + 2\lambda(1+\gamma) + 2(1-\gamma)^2} > 0,
\end{align*}
where $s(\lambda_1, \lambda_2)$ is the generalized Stieltjes transform defined in~\eqref{equation:stj-def}. (The detailed calculation can be found in Section~\ref{section:calculation-stj-dlambda1}.) As this limit is strictly positive, ssing a similar argument as the above, we get that there exists some $\underline{c}_B>0$ which only depends on $\gamma/s=\Theta(1)>0$, such that $\min\set{f_B(n_1,n_2,d,\lambda), f_{B^2}(n_1, n_2,d,\lambda)}\ge \underline{c}_B$.
\end{proof}

\subsubsection{Calculations of $\frac{d}{d\lambda_1} s(\lambda_1, \lambda_2)$}
\label{section:calculation-stj-dlambda1}
Recall by~\eqref{equation:stj-def} that
\begin{align*}
  s(\lambda_1, \lambda_2) = \frac{\gamma-1-\lambda_1/\lambda_2 + \sqrt{(\lambda_1/\lambda_2+1+\gamma)^2-4\gamma}}{2\gamma\lambda_1}.
\end{align*}
At $\lambda_1=\lambda$ and $\lambda_2=1$, the above can be simplied as
\begin{align*}
  s(\lambda, 1) = \frac{\gamma-1}{2\gamma\lambda} - \frac{1}{2\gamma} + \frac{1}{2\gamma}\sqrt{(1 + (1+\gamma)/\lambda)^2 - 4\gamma/\lambda^2}.
\end{align*}
Differentiating with respect to $\lambda$, we get
\begin{align*}
  & \quad -\frac{d}{d\lambda_1} s(\lambda_1, \lambda_2)|_{\lambda_1=\lambda, \lambda_2=1} = -\frac{d}{d\lambda} s(\lambda, 1) = \frac{\gamma-1}{2\gamma\lambda^2} + \frac{(\lambda+1+\gamma)(1+\gamma) - 4\gamma}{2\gamma\lambda^2\sqrt{(\lambda+1+\gamma)^2 - 4\gamma}} \\
  & = \frac{1}{2\gamma\lambda^2\sqrt{(\lambda+1+\gamma)^2 - 4\gamma}} \brac{ (\gamma-1)\sqrt{(\lambda+1+\gamma)^2 - 4\gamma} +\lambda(1+\gamma) + (\gamma-1)^2 }.
\end{align*}
The above is clearly positive at $\gamma\ge 1$. At $\gamma<1$, we have
\begin{align*}
  & \quad  (1-\gamma)\sqrt{(\lambda+1+\gamma)^2-4\gamma} = \sqrt{\lambda^2 + 2\lambda(1+\gamma) + (1-\gamma)^2} \\
  & < (1-\gamma) \paren{ \sqrt{\paren{\lambda(1+\gamma)/(1-\gamma)}^2 + 2\lambda(1+\gamma) + (1-\gamma)^2} }= (1-\gamma)\paren{ \lambda(1+\gamma)/(1-\gamma) + 1-\gamma} \\
  & = \lambda(1+\gamma) + (1-\gamma)^2.
\end{align*}
Therefore, the quantity inside the bracket in the preceding display is also strictly positive. This shows that $-\frac{d}{d\lambda_1}s(\lambda_1, \lambda_2)|_{\lambda_1=\lambda,\lambda_2=1}>0$ for all $\gamma>0$.

\subsection{Concentration lemmas}
\label{appendix:concentration-lemmas}

\begin{lemma}[Concentration of $\Ab_t$ and $\Bb_t$]
  \label{lemma:concentration-atbt}
  Let $\Ab_t$ and $\Bb_t$ be defined as in~\eqref{equation:at} and~\eqref{equation:bt}. Then with probability at least $1-Td^{-10}$, we have the following bounds: $\bzero \preceq \Ab_t \preceq \wt{O}(C_a)\Ib_d$ and $\bzero \preceq \Bb_t \preceq \wt{O}(C_b)\Ib_d$, and
  \begin{align*}
        & \opnorm{ \frac{1}{T}\sum_{t=1}^T \Ab_t - \E\brac{\Ab_t} } \le \wt{O}\paren{ C_a\sqrt{\frac{d}{T}} + d^{-4} }~~~{\rm and}~~~\opnorm{ \frac{1}{T}\sum_{t=1}^T \Ab_t^2 - \E\brac{\Ab_t^2} } \le \wt{O}\paren{ C_a^2\sqrt{\frac{d}{T}} + d^{-4}}, \\
    & \opnorm{ \frac{1}{T}\sum_{t=1}^T \Bb_t - \E\brac{\Bb_t} } \le \wt{O}\paren{ C_b\sqrt{\frac{d}{T}} + d^{-4} }~~~{\rm and}~~~\opnorm{ \frac{1}{T}\sum_{t=1}^T \Bb_t^2 - \E\brac{\Bb_t^2} } \le \wt{O}\paren{ C_b^2\sqrt{\frac{d}{T}} + d^{-4}} ,
  \end{align*}
  where $C_a\defeq 1 + \max\set{d/n, \sqrt{d/n}}$, $C_b\defeq 1 + \max\set{d/n_2, \sqrt{d/n_2}}$, and $\wt{O}(\cdot)$ hides the logarithmic factor $\log(ndT)$.  
\end{lemma}
\begin{proof}  
  We first prove the result for $\Bb_t$. We use a truncation argument. Recall that by definition of $\Bb_t$ we have
  \begin{align*}
    \bzero \preceq \Bb_t \preceq \frac{\Xb_t^{\val\top}\Xb_t^{\val}}{n_2}.
  \end{align*}
  Since $\xb_{t,i}\sim\normal(\bzero, \Ib_d)$, applying the standard sub-Gaussian covariance concentration~\citep[Exercise 4.7.3]{vershynin2018high}, we have with probability at least $1-d^{-10}$ that
  \begin{align*}
    \Bb_t \preceq \frac{1}{n_2}\Xb_t^{\val\top}\Xb_t^{\val} \preceq \Ib_d + \opnorm{\frac{1}{n_2}\Xb_t^{\val\top}\Xb_t^{\val} - \Ib_d  }\Ib_d \preceq \paren{1 + C\sqrt{\frac{d + \log d}{n_2}} + C\frac{d + \log d}{n_2}} \Ib_d \preceq KC_b \Ib_d,
  \end{align*}
  where $C_b\defeq1 + \max\set{d/n_2, \sqrt{d/n_2}}$ and $K=O(1)$ is an absolute constant.
  Let $\Eb_t\defeq \set{ \Bb_t \preceq KC_b\Ib_d }$ denote this event. We have $\P(\Eb_t) \ge 1-d^{-10}$. Let $\Eb\defeq \bigcup_{t=1}^T \Eb_t$ denote the union event. Note that on the event $\Eb$ we have
  \begin{align*}
    \frac{1}{T}\sum_{t=1}^T \Bb_t = \frac{1}{T}\sum_{t=1}^T \Bb_t \indic{\Eb_t} .
  \end{align*}

  \paragraph{Concentration of $\Bb_t\indic{\Eb_t}$}
  On the event $\Eb_t$, $\Bb_t$ are bounded matrices:
  \begin{align*}
    \bzero \preceq \Bb_t\indic{\Eb_t} \preceq C_b \Ib_d.
  \end{align*}
  In particular, this means that for any $\vb\in\R^d$ with unit norm $\norm{\vb}=1$, the random variable
  \begin{align*}
    \vb^\top \Bb_t\indic{\Eb_t}  \vb - \vb^\top \E[\Bb_t\indic{\Eb_t} ] \vb
  \end{align*}
  is mean-zero and $C^2$-sub-Gaussian. Therefore by the standard sub-Gaussian concentration, we get
  \begin{align*}
    \P\paren{ \abs{ \vb^\top \paren{\frac{1}{T}\sum_{t=1}^T \Bb_t\indic{\Eb_t} } \vb - \vb^\top \E\brac{\Bb_t\indic{\Eb_t} }\vb} \ge t } \le 2\exp\paren{-Tt^2/C_b^2}.
  \end{align*}
  Using the fact that for any symmetric matrix $\Mb$,
  \begin{align*}
    \opnorm{\Mb} \le 2\sup_{\vb\in N_{1/4}(\mathbb{S}^{d-1})} \abs{\vb^\top \Mb \vb},
  \end{align*}
  where $N_{1/4}(\mathbb{S}^{d-1})$ is a $1/4$-covering set of the unit sphere with $|N_{1/4}(\mathbb{S}^{d-1})|\le 9^d$~\citep[Exercise 4.4.3]{vershynin2018high}, we get
  \begin{align*}
    & \quad \P\paren{ \opnorm{ \frac{1}{T}\sum_{t=1}^T \Bb_t\indic{\Eb_t}  - \E\brac{\Bb_t\indic{\Eb_t} } } \ge t } \\
    & \le \abs{ N_{1/4}(\mathbb{S}^{d-1}) } \cdot \sup_{\norm{\vb}=1} \P\paren{ \abs{ \vb^\top \paren{\frac{1}{T}\sum_{t=1}^T \Bb_t\indic{\Eb_t} } \vb - \vb^\top \E\brac{\Bb_t\indic{\Eb_t} }\vb} \ge t } \\
    & \le \exp\paren{ -Tt^2/C_b^2 + 3d }.
  \end{align*}
  Taking $t=O(C_b\sqrt{\frac{d + \log(1/d^{10})}{T}})=\wt{O}(C_b\sqrt{d/T})$, the above probability is upper bounded by $d^{-10}/2$. In other words, with probability at least $1-d^{-10}/2$, we get
  \begin{align*}
    \opnorm{ \frac{1}{T}\sum_{t=1}^T \Bb_t\indic{\Eb_t}  - \E\brac{\Bb_t\indic{\Eb_t} } } \le \wt{O}\paren{ C_b\sqrt{\frac{d}{T}} }.
  \end{align*}

  \paragraph{Bounding difference betwen $\E[\Bb_t]$ and $\E[\Bb_t\indic{\Eb_t}]$}
  We have
  \begin{align*}
    & \opnorm{\E[\Bb_t] - \E[\Bb_t\indic{\Eb_t}]} \le \E\brac{ \opnorm{\Bb_t} \indic{\Eb_t^c} } \le \paren{ \E\brac{ \opnorm{\Bb_t}^2 } \cdot \P(\Eb_t^c)}^{1/2} \\
    & \le \sqrt{ \E\brac{ \max_i \norm{\xb_{t,i}}^2 } \cdot d^{-10} } \le \sqrt{({d + C\log n_2) \cdot d^{-10} }} = \wt{O}(d^{-4.5}).
  \end{align*}
  where the last inequality is by standard Gaussian norm concentration (e.g.~\citep[Appendix A.3]{bai2019beyond}). 
  
  \paragraph{Concentration of $\Bb_t$}
  Combining the preceding two parts, we get that with probability at least $1-Td^{-10}$ that
  \begin{align*}
    & \quad \opnorm{\frac{1}{T}\sum_{t\le T} \Bb_t - \E[\Bb_t]} \\
    & \le \opnorm{\frac{1}{T}\sum_{t\le T} \Bb_t\indic{\Eb_t} - \E[\Bb_t\indic{\Eb_t}]} + \opnorm{\E[\Bb_t] - \E[\Bb_t\indic{\Eb_t}]} \le \wt{O}\paren{C_b\sqrt{d/T} + d^{-4.5}}.
  \end{align*}
  
  \paragraph{Concentration for $\Bb_t^2$, $\Ab_t$, and $\Ab_t^2$}
  For $\Bb_t^2$, using a similar analysis as the above, we get
  \begin{align*}
    \opnorm{ \frac{1}{T}\sum_{t=1}^T \Bb_t^2 - \E\brac{\Bb_t^2} } \le \wt{O}\paren{ C_b^2\sqrt{\frac{d}{T}} + d^{-4}}.
  \end{align*}
  For $\Ab_t$, we note that the bound
  \begin{align*}
    \bzero \preceq \Ab_t \preceq C_a\Ib_d,~~~{\rm where}~C_a=1 + O\paren{\max\set{d/n, \sqrt{d/n}}}
  \end{align*}
  holds. Therefore using the same argument as above, we get the desired concentration bounds for $\Ab_t$.
\end{proof}

We also need the following Hanson-Wright inequality.
\begin{lemma}[Restatement of Theorem 6.2.1,~\citep{vershynin2018high}]
  \label{lemma:hanson-wright}
  Let $\zb\in\R^D$ be a random vector with independent, mean-zero, and $O(K^2)$-sub-Gaussian entries, and let $\Cb\in\R^{D\times D}$ be a fixed matrix. Then we have with probability at least $1-\delta$ that
  \begin{align*}
    \abs{ \zb^\top \Cb\zb - \E\brac{\zb^\top \Cb\zb} } \le O\paren{  K^2 \max\set{\lfro{\Cb}\sqrt{\log(2/\delta)}, \opnorm{\Cb}\log(2/\delta)} } \le O\paren{K^2\lfro{\Cb}\log(2/\delta)}.
  \end{align*}
\end{lemma}

\subsection{Proof of main theorem}
\label{appendix:proof-main-concentration-mse}

We are now ready to prove Theorem~\ref{theorem:concentration-mse}. We first prove the result for $\hat{\wb}^{\nosp}_{0,T}$. Define the matrix
\begin{align*}
  \bSigma_T \defeq \paren{ \sum_{t=1}^T \Ab_t }^{-2} \sum_{t=1}^T \Ab_t^2 = \frac{1}{T}\cdot \paren{ \frac{\sum_{t=1}^T \Ab_t}{T} }^{-2} \frac{\sum_{t=1}^T \Ab_t^2}{T},
\end{align*}
which will be key to our analysis. Observe that
\begin{align*}
  \hat{\wb}_{0,T}^{\nosp} - \wb_{0,\star} = \paren{\sum_{t=1}^T \Ab_t}^{-1} \sum_{t=1}^T \Ab_t(\wb_t - \wb_{0,\star}),
\end{align*}
Therefore, conditioned on $\Ab_t$ (and only looking at the randomness of $\wb_t$), we have
\begin{align*}
  & \quad \E_{\wb_t}\brac{ {\rm MSE}(\hat{\wb}_{0,T}^{\nosp}) } = \E_{\wb_t}\brac{ \norm{\hat{\wb}_{0,T}^{\nosp} - \wb_{0,\star}}^2 } \\
  & = \sum_{t=1}^T \tr\paren{ \paren{ \sum_{t=1}^T \Ab_t}^{-1} \Ab_t \cdot \Cov(\wb_t) \Ab_t^\top \paren{ \sum_{t=1}^T \Ab_t}^{-1} } = \frac{R^2}{d} \tr\paren{ \paren{ \sum_{t=1}^T \Ab_t}^{-2} \sum_{t=1}^T \Ab_t^2} = \frac{R^2}{d}\tr(\bSigma_T).
\end{align*}

\paragraph{Concentration of $\Ab_t$}
By Lemma~\ref{lemma:moments-bounds} and Lemma~\ref{lemma:concentration-atbt}, we have with probability at least $1-Td^{-10}$ that
\begin{align*}
  \underline{c}_A\Ib_d \preceq \E[\Ab_t] \preceq \overline{c}_A\Ib_d~~~{\rm and}~~~\opnorm{\frac{1}{T}\sum_{t=1}^T \Ab_t - \E[\Ab_t]} \le \wt{O}\paren{C_a\sqrt{d/T} + d^{-4}},
\end{align*}
where $\underline{c}_A,\overline{c}_A,C_a>0$ are $\Theta(1)$ constants that depend only on $\gamma$. Therefore, taking $d=\wt{\Omega}(\max\set{\underline{c}_A^{-1/2}, 1})$ and $T\ge \wt{\Omega}(\max\set{4C_a^2/\underline{c}_A^2d, 4C_a^2/d})=\wt{\Omega}(d)$, we get that $\opnorm{\frac{1}{T}\sum_{t=1}^T \Ab_t - \E[\Ab_t]} \le \min\set{\underline{c}_A/2, 1/2}\le 1/2$ and $\lambda_{\min}(\frac{1}{T}\sum_{t=1}^T\Ab_t) \ge \underline{c}_A/2>0$. On the similar concentration event for $\Ab_t^2$ (in Lemma~\ref{lemma:concentration-atbt}), for $T\ge \wt{\Omega}(d)$ we also have $\opnorm{\frac{1}{T}\sum_{t=1}^T \Ab_t^2 - \E[\Ab_t^2]} \le 1/2$ and $\lambda_{\min}(\frac{1}{T}\sum_{t=1}^T \Ab_t^2) \ge \underline{c}_A/2>0$.

\paragraph{Concentration of MSE around expectation}
We first bound the concentration between the MSE and its expectation $R^2/d\cdot \tr(\bSigma_T)$. Define
\begin{align*}
  \zb = \begin{bmatrix}
    \wb_1 - \wb_{0,\star} \\
    \vdots \\
    \wb_T - \wb_{0,\star}
  \end{bmatrix} \in \R^{dT}~~~{\rm and}~~~\Ub = \begin{bmatrix}
    \paren{\sum_{t\le T} \Ab_t}^{-1} \Ab_1 \\
    \vdots \\
    \paren{\sum_{t\le T} \Ab_t}^{-1} \Ab_T
  \end{bmatrix} \in \R^{dT\times d}.
\end{align*}
Then we have $\hat{\wb}_{0,T}^{\nosp} - \wb_{0,\star}=\Ub^\top \zb$ and thus
$\norm{\hat{\wb}_{0,T}^{\nosp} - \wb_{0,\star}}^2 = \zb^\top(\Ub\Ub^\top)\zb$. By Assumption~\ref{assumption:realizable}, $\zb\in\R^{dT}$ has i.i.d. mean-zero $O(R^2/d)$-sub-Gaussian entries. Therefore, applying the Hanson-Wright inequality (Lemma~\ref{lemma:hanson-wright}) with $\Cb=\Ub\Ub^\top$, we get that with probability at least $1-\delta$ we have
\begin{equation}
  \label{equation:mse-concentration}
\begin{aligned}
  & \quad \abs{ \norm{\hat{\wb}_{0,T}^{\nosp} - \wb_{0,\star}}^2 - \frac{R^2}{d}\tr(\bSigma_T) } = \abs{ \zb^\top \Cb \zb - \E\brac{\zb^\top \Cb\zb} } \\
  & \le \wt{O}\paren{ \frac{R^2}{d} \lfro{\Cb}} = \wt{O}\paren{ \frac{R^2}{d}\lfro{\Ub^\top\Ub} }\\
  & = \wt{O}\paren{ \frac{R^2}{d} \lfro{ \paren{\sum_{t\le T} \Ab_t}^{-2} \sum_{t\le T}\Ab_t^2 } } \\
  & = \wt{O}\paren{ \frac{R^2}{dT} \lambda_{\min}\paren{\frac{1}{T}\sum_{t\le T} \Ab_t  }^{-2} \cdot \sqrt{d}\opnorm{\frac{1}{T}\sum_{t\le T}\Ab_t^2} } = \wt{O}\paren{ \frac{R^2}{T} \cdot \frac{1}{\sqrt{d}} }.
\end{aligned}
\end{equation}

\paragraph{Concentration of $\tr(\bSigma_T)$}
Recall that $\Ab_t$ are i.i.d. PSD matrices in $\R^{d\times d}$. We have
\begin{align*}
  & \quad \frac{R^2}{d} \cdot \tr(\bSigma_T) = \frac{R^2}{Td} \<\paren{ \frac{\sum_{t=1}^T \Ab_t}{T} }^{-2} , \frac{\sum_{t=1}^T \Ab_t^2}{T} \> \\
  & = \frac{R^2}{T} \Bigg\{ \underbrace{\frac{1}{d}\<\E[\Ab_1]^{-2}, \E[\Ab_1^2] \>}_{\rm I} + \underbrace{\frac{1}{d}\<\paren{\frac{\sum_{t=1}^T \Ab_t}{T} }^{-2} - \E[\Ab_1]^{-2}, \E[\Ab_1^2] \>}_{\rm II} + \\
  & \qquad \underbrace{\frac{1}{d}\<\paren{\frac{\sum_{t=1}^T \Ab_t}{T} }^{-2}, \frac{\sum_{t=1}^T \Ab_t^2}{T} - \E[\Ab_1^2]\>}_{\rm III} \Bigg\}.
\end{align*}
By Lemma~\ref{lemma:moments} and~\ref{lemma:moments-bounds}, term I is the main $\Theta(1)$ term:
\begin{align*}
  & \quad {\rm I} = \frac{1}{d}\< f_{A}(n,d,\lambda)^{-2}\Ib_d, f_{A^2}(n,d,\lambda)\Ib_d \> = f_{A^2}(n,d,\lambda) / f_A(n,d,\lambda)^2 \\
  & = \frac{\frac{1}{d}\E\brac{\tr\paren{\lambda^4(\hat{\bSigma}_n+\lambda\Ib_d)^{-4}\hat{\bSigma}_n^2}} }{ \paren{ \frac{1}{d}\E\brac{\tr\paren{\lambda^2(\hat{\bSigma}_n+\lambda\Ib_d)^{-2}\hat{\bSigma}_n}} }^2 }
    \eqdef C_{d,n,\lambda}^{\nosp} = \Theta(1).
\end{align*}
For term II we have
\begin{align*}
  & \quad \abs{\rm II} \le \opnorm{ \paren{\sum_{t=1}^T \Ab_t/T }^{-2} - \E[\Ab_1]^{-2} } \cdot \opnorm{\E[\Ab_1^2]} \\
  & \le \lambda_{\min}\paren{ \sum_{t=1}^T \Ab_t/T }^{-2} \opnorm{ \paren{\sum_{t=1}^T \Ab_t/T }^{2} - \E[\Ab_1]^{2} } \lambda_{\min}\paren{ \E[\Ab_1] }^{-2} \cdot \opnorm{ \E[\Ab_1^2] } \\
  & \le \wt{O}\paren{ \opnorm{ \paren{\sum_{t=1}^T \Ab_t/T }^{2} - \E[\Ab_1]^{2} } } \\
  &\le \wt{O}\paren{ \max\set{\opnorm{\sum_{t=1}^T \Ab_t/T}, \opnorm{ \E[\Ab_1] }} \cdot \opnorm{ \paren{\sum_{t=1}^T \Ab_t/T } - \E[\Ab_1] }} \le \wt{O}\paren{\sqrt{d/T} + d^{-4}}.
\end{align*}
Similarly we also have $\abs{\rm III}\le \wt{O}(\sqrt{d/T})$. Combining terms I, II, III, we get that (on the concentration event)
\begin{align*}
  \frac{R^2}{d}\tr\paren{\bSigma_T} = \frac{R^2}{T} \paren{C^{\nosp}_{d,n,\lambda} + \wt{O}\paren{\sqrt{d/T} + d^{-4}}}.
\end{align*}
This further combined with~\eqref{equation:mse-concentration} gives
\begin{align*}
  \norm{\hat{\wb}_{0,T}^{\nosp} - \wb_{0,\star}}^2 = \frac{R^2}{d}\tr\paren{\bSigma_T} + \frac{R^2}{T}\cdot \wt{O}(1/\sqrt{d}) = \frac{R^2}{T} \paren{C^{\nosp}_{d,n,\lambda} + \wt{O}\paren{\sqrt{d/T}} + \wt{O}\paren{1/\sqrt{d}}}.
\end{align*}
This proves the desired result for the \trtr method.

For the \trval method, observe that all the above analysis still holds if we replace $\Ab_t$ with $\Bb_t$ (and using the concentration for $\Bb_t$ guaranteed in Lemma~\ref{lemma:concentration-atbt}), we obtain a similar conclusion
\begin{align*}
  \norm{\hat{\wb}_{0,T}^{\sp} - \wb_{0,\star}}^2 = \frac{R^2}{T} \paren{C^{\sp}_{d,n_1,n_2,\lambda} + \wt{O}\paren{\sqrt{d/T}} + \wt{O}\paren{1/\sqrt{d}}},
\end{align*}
where
\begin{align*}
  & \quad C^{\sp}_{d,n_1,n_2,\lambda} \defeq f_{B^2}(d,n_1,n_2,\lambda) / f_B(d,n_1,n_2,\lambda)^2 \\
  & = \frac{   \frac{1}{dn_2}\E\brac{ \tr\paren{\lambda^2 (\hat{\bSigma}_{n_1} + \lambda\Ib_d)^{-2} }^2 + (n_2+1) \tr\paren{ \lambda^4(\hat{\bSigma}_{n_1} + \lambda\Ib_d)^{-4} } }   }{  \paren{  \frac{1}{d} \E\brac{ \tr\paren{ \lambda^2(\hat{\bSigma}_{n_1} + \lambda\Ib_d)^{-2} } }  }^2  }.
\end{align*}
This is the desired result.
\qed

\section{Proof of Theorem~\ref{theorem:comparison-mse}}
\label{appendix:proof-comparison-mse}

The proof is organized as follows. We optimize the hyperparameter $(\lambda, n_1)$ for the \trval method in Section~\ref{appendix:optimal-hyperparam-trval}. We derive the exact limit of the $C^{\nosp}_{d,n,\lambda}$ in the proportional limit $d,n\to\infty$, $d/n\to\gamma$ and optimize over $\lambda$ in Section~\ref{appendix:optimal-hyperparam-trtr}. We prove the main theorem in Section~\ref{appendix:proof-main-comparison-mse}.

\subsection{Optimizing the hyperparameters for the \trval method}
\label{appendix:optimal-hyperparam-trval}

\begin{lemma}[Optimal constant of the \trval method]
  \label{lemma:optimally-tuned-rates}
  For any $(n,d)$ and any split ratio $(n_1, n_2)=(n_1, n-n_1)$, the
  optimal constant (by tuning the regularization $\lambda>0$) of the \trval method is achieved at
  \begin{align*}
    \inf_{\lambda>0} C^{\sp}_{d, n_1, n_2, \lambda} =
    \lim_{\lambda\to\infty} C^{\sp}_{d, n_1, n_2, \lambda} = \frac{d+n_2+1}{n_2}.
  \end{align*}
  Further optimizing the rate over $n_2$,  the best rate is taken at $(n_1,n_2)=(0,n)$, in which the rate is
  \begin{align*}
    \inf_{\lambda>0,~n_2\in[n]} C^{\sp}_{d, n_1, n_2, \lambda} = \frac{(d+n+1)R^2}{n}.
  \end{align*}
\end{lemma}

\paragraph{Discussion: Using all data as validation}
Lemma~\ref{lemma:optimally-tuned-rates} suggests that the optimal constant of the \trval method is obtained at $\lambda=\infty$ and $(n_1,n_2)=(0,n)$. In other words, the optimal choice for the \trval method is to \emph{use all the data as validation}. In this case, since there is no training data, the inner solver reduces to the identity map: $\cA_{\infty, 0}(\wb_0;\Xb_t, \yb_t)=\wb_0$, and the outer loop reduces to learning a single linear model $\wb_0$ on all the tasks combined. We remark that while the optimality of such a split ratio is likely an artifact of the data distribution we assumed (noiseless realizable linear model) and may not generalize to other meta-learning problems, we do find experimentally that using more data as validation (than training) can also improve the performance on real meta-learning tasks (see Table~\ref{comparisontable2}).

\begin{proof-of-lemma}[\ref{lemma:optimally-tuned-rates}]
Fix $n_1\in[n]$ and $n_2=n-n_1$. Recall from
Theorem~\ref{theorem:concentration-mse} (with the eigenvalue-based expressions in Lemma~\ref{lemma:moments}) that
\begin{align*}
  C^{\sp}_{d,n_1,n_2,\lambda} = \frac{d}{n_2} \cdot \frac{\E\brac{ \paren{\sum_{i=1}^d \lambda^2/(\sigma_i^{(n_1)} + \lambda)^2}^2 + (n_2+1)\sum_{i=1}^d \lambda^4/(\sigma_i^{(n_1)} + \lambda)^4}}{\paren{ \E\brac{ \sum_{i=1}^d \lambda^2/(\sigma_i^{(n_1)} + \lambda)^2 } }^2}.
\end{align*}
Clearly, as $\lambda\to\infty$, we have
\begin{align*}
  \lim_{\lambda\to\infty} C^{\sp}_{d,n_1,n_2,\lambda}  = \frac{d}{n_2} \cdot \frac{d^2+(n_2+1)d}{d^2} = \frac{(d+n_2+1)}{n_2}.
\end{align*}
It remains to show that the above quantity is a lower bound for $C^{\sp}_{d,n_1,n_2,\lambda}$ for any $\lambda>0$, which is equivalent to
\begin{align}
  \label{equation:l2-ineq}
  \frac{\E\brac{ \paren{\sum_{i=1}^d \lambda^2/(\sigma_i^{(n_1)} + \lambda)^2}^2 + (n_2+1)\sum_{i=1}^d \lambda^4/(\sigma_i^{(n_1)} + \lambda)^4}}{\paren{ \E\brac{ \sum_{i=1}^d \lambda^2/(\sigma_i^{(n_1)} + \lambda)^2 } }^2} \ge \frac{d+n_2+1}{d},~~~\textrm{for all}~\lambda>0.
\end{align}
We now prove~\eqref{equation:l2-ineq}. For $i\in[n_1]$, define random variables
\begin{align*}
  X_i \defeq \frac{\lambda^2}{(\sigma_i^{(n_1)} + \lambda)^2}\in[0,1]~~~{\rm and}~~~Y_i \defeq 1 - X_i \in [0,1].
\end{align*}
Then the left-hand side of~\eqref{equation:l2-ineq} can be rewritten as
\begin{align*}
  & \quad \frac{\E\brac{ \paren{d - n_1 + \sum_{i=1}^{n_1}X_i}^2 + (n_2+1)\paren{ d - n_1 + \sum_{i=1}^{n_1}X_i^2 } }}{\paren{ \E\brac{ d - n_1 + \sum_{i=1}^n X_i } }^2} \\
  & = \frac{ \E\brac{ \paren{d - \sum_{i=1}^{n_1}Y_i}^2 + (n_2+1)\paren{d - 2\sum_{i=1}^{n_1}Y_i + \sum_{i=1}^{n_1} Y_i^2} }}{ \paren{\E\brac{ d - \sum_{i=1}^{n_1} Y_i }}^2 } \\
  & = \frac{d^2+(n_2+1)d - 2(d+n_2+1)\E\brac{\sum Y_i} + \E\brac{ (\sum Y_i)^2 } + (n_2+1)\E\brac{\sum Y_i^2}}{d^2 - 2d\E\brac{\sum Y_i} + \paren{\E\brac{ \sum Y_i }}^2}
\end{align*}
By algebraic manipulation, inequality~\eqref{equation:l2-ineq} is equivalent to showing that
\begin{align}
  \label{equation:l2-ineq-simplify}
  \frac{\E\brac{ (\sum Y_i)^2 } + (n_2+1)\E\brac{\sum Y_i^2}}{\paren{\E\brac{ \sum Y_i }}^2} \ge \frac{d+n_2+1}{d}.
\end{align}
Clearly, $\E[(\sum Y_i)^2] \ge (\E[\sum Y_i])^2$. By Cauchy-Schwarz we also have
\begin{align*}
  \E\brac{\sum Y_i^2} \ge \frac{1}{n_1} \E\brac{ \paren{\sum Y_i}^2 } \ge \frac{1}{n_1}\paren{ \E\brac{\sum Y_i} }^2.
\end{align*}
Therefore we have
\begin{align*}
  \frac{\E\brac{ (\sum Y_i)^2 } + (n_2+1)\E\brac{\sum Y_i^2}}{\paren{\E\brac{ \sum Y_i }}^2} \ge 1 + \frac{n_2+1}{n_1} \ge 1 + \frac{n_2+1}{d} = \frac{d+n_2+1}{d},
\end{align*}
where we have used that $n_1\le n\le d$. This shows~\eqref{equation:l2-ineq-simplify} and consequently~\eqref{equation:l2-ineq}.
\end{proof-of-lemma}

\subsection{Optimizing the hyperparameters for the  \trtr method (in the proportional limit)}
\label{appendix:optimal-hyperparam-trtr}

\begin{theorem}[Exact constant of the \trtr method in the proportional limit]
  \label{theorem:high-dim-limit}
  In the high-dimensional limiting regime $d,n\to\infty$, $d/n\to\gamma$  where $\gamma\in(0,\infty)$ is a fixed shape parameter,  for any $\lambda>0$  
  \begin{align*}
\lim\nolimits_{d,n\to\infty, d/n=\gamma} C^{\nosp}_{d,n,\lambda}
    = \rho_{\lambda,\gamma}  .
  \end{align*}
  where $\rho_{\lambda,\gamma} =  4\gamma^2 \brac{(\gamma-1)^2 + (\gamma+1)\lambda}/(\lambda +1+\gamma - \sqrt{(\lambda+\gamma+1)^2-4\gamma})^2 / \paren{(\lambda+\gamma+1)^2-4\gamma}^{3/2} $.
\end{theorem}

\begin{proof-of-theorem}[\ref{theorem:high-dim-limit}]
Let $\hat{\bSigma}_n \defeq \frac{1}{n} \Xb_t\Xb_t^\top$
denote the sample covariance matrix of the inputs in a single task
($t$). By Theorem~\ref{theorem:concentration-mse} (with the eigenvalue-based expressions in Lemma~\ref{lemma:moments}), we have
\begin{equation}
  \label{equation:l1-rate-rewrite}
  \begin{aligned}
    & \quad C^{\nosp}_{d,n,\lambda}
    = \frac{\frac{1}{d}\E\brac{ \sum_{i=1}^d \sigma_i(\hat{\bSigma}_n)^2 / (\sigma_i(\hat{\bSigma}_n) + \lambda)^4}}{ \paren{ \frac{1}{d}\E\brac{ \sum_{i=1}^d \sigma_i(\hat{\bSigma}_n) / (\sigma_i(\hat{\bSigma}_n) + \lambda)^2 } }^2 } \\
    & = \underbrace{\frac{1}{d}\E\brac{ \tr\paren{(\hat{\bSigma}_n + \lambda \Ib_d)^{-4}\hat{\bSigma}_n^2} }}_{{\rm I}_{n,d}} \Big/ \bigg\{ \underbrace{\frac{1}{d}\E\brac{ \tr\paren{ (\hat{\bSigma}_n + \lambda \Ib_d)^{-2}\hat{\bSigma}_n } }}_{{\rm II}_{n,d}}\bigg\}^2.
  \end{aligned}
\end{equation}
We now evaluate quantities ${\rm I}_{n,d}$ and ${\rm II}_{n,d}$ in the high-dimensional limit of $d,n\to\infty$, $d/n\to\gamma\in(0,\infty)$. Consider the (slightly generalized) Stieltjes transform of $\hat{\bSigma}_n$ defined for all $\lambda_1, \lambda_2>0$:
\begin{align}
  \label{equation:stj-def}
  s(\lambda_1, \lambda_2) \defeq \lim_{d, n\to\infty,~d/n\to\gamma} \frac{1}{d}\E\brac{ \tr\paren{ (\lambda_1 \Ib_d + \lambda_2 \hat{\bSigma}_n)^{-1} } }.
\end{align}
As the entries of $\Xb_t$ are i.i.d. $\normal(0, 1)$, the above limiting Stieltjes transform is the Stieltjes form of the Marchenko-Pastur law, which has a closed form (see, e.g.~\citep[Equation (7)]{dobriban2018high})
\begin{equation}
  \label{equation:stj}
  \begin{aligned}
    & \quad s(\lambda_1, \lambda_2) = \lambda_2^{-1} s(\lambda_1/\lambda_2, 1)
    = \frac{1}{\lambda_2} \cdot \frac{\gamma - 1 - \lambda_1/\lambda_2 + \sqrt{(\lambda_1/\lambda_2 + 1 + \gamma)^2 - 4\gamma}}{2\gamma\lambda_1/\lambda_2} \\
    & = \frac{ \gamma - 1 - \lambda_1/\lambda_2 + \sqrt{(\lambda_1/\lambda_2 + 1 + \gamma)^2 - 4\gamma} }{2\gamma\lambda_1}.
  \end{aligned}
\end{equation}
Now observe that differentiating $s(\lambda_1, \lambda_2)$ yields quantity II (known as the derivative trick of Stieltjes transforms). Indeed, we have
\begin{equation}
  \label{equation:exchange-derivative}
\begin{aligned}
  & \quad -\frac{d}{d\lambda_2} s(\lambda_1, \lambda_2) = -\frac{d}{d\lambda_2} \lim_{d, n\to\infty,~d/n\to\gamma} \frac{1}{d}\E\brac{ \tr\paren{ (\lambda_1 \Ib_d + \lambda_2 \hat{\bSigma}_n)^{-1} } } \\
  & = \lim_{d, n\to\infty,~d/n\to\gamma} \frac{1}{d}\E\brac{ -\frac{d}{d\lambda_2} \tr\paren{ (\lambda_1 \Ib_d + \lambda_2 \hat{\bSigma}_n)^{-1}}} \\
  & = \lim_{d, n\to\infty,~d/n\to\gamma} \frac{1}{d}\E\brac{ \tr\paren{ (\lambda_1 \Ib_d + \lambda_2 \hat{\bSigma}_n)^{-2}\hat{\bSigma}_n}}.
\end{aligned}
\end{equation}
(Above, the exchange of differentiation and limit is due to the uniform convergence of the derivatives, which holds at any $\lambda_1,\lambda_2>0$. See Section~\ref{appendix:exchange} for a detailed justification.) Taking $\lambda_1=\lambda$ and $\lambda_2=1$, we get
\begin{align*}
  \lim_{d,n\to\infty,~d/n\to\gamma} {\rm II}_{n,d} = \lim_{d, n\to\infty,~d/n\to\gamma} \frac{1}{d}\E\brac{ \tr\paren{ (\lambda \Ib_d + \hat{\bSigma}_n)^{-2}\hat{\bSigma}_n}} = -\frac{d}{d\lambda_2} s(\lambda_1, \lambda_2)|_{\lambda_1=\lambda,\lambda_2=1}.
\end{align*}
Similarly we have
\begin{align*}
  \lim_{d,n\to\infty,~d/n\to\gamma} {\rm I}_{n,d} = \lim_{d,n\to\infty, d/n\to\gamma} \frac{1}{d}\E\brac{ \tr\paren{(\lambda \Ib_d + \hat{\bSigma}_n)^{-4}\hat{\bSigma}_n^2} } = -\frac{1}{6} \frac{d}{d\lambda_1}\frac{d^2}{d\lambda_2^2} s(\lambda_1, \lambda_2)|_{\lambda_1=\lambda,\lambda_2=1}.
\end{align*}
Evaluating the right-hand sides from differentiating the closed-form expression~\eqref{equation:stj}, we get
\begin{align*}
  & \lim_{d,n\to\infty,~d/n\to\gamma} {\rm II}_{n, d} = \frac{1}{2\gamma}\cdot \frac{\lambda+1+\gamma}{\sqrt{(\lambda+1+\gamma)^2 - 4\gamma}} - \frac{1}{2\gamma}, \\
  & \lim_{d,n\to\infty,~d/n\to\gamma} {\rm I}_{n, d} = \frac{(\gamma-1)^2 + (\gamma+1)\lambda}{\paren{(\lambda+1+\gamma)^2 - 4\gamma}^{5/2}}.
\end{align*}
Substituting back to~\eqref{equation:l1-rate-rewrite} yields that
\begin{align*}
  & \quad \lim_{d,n\to\infty,~d/n\to\gamma} C^{\nosp}_{d,n,\lambda} = \lim_{d,n\to\infty,~d/n\to\gamma} \cdot {\rm I}_{n,d} / {\rm II}_{n,d}^2 \\
  & = \frac{4\gamma^2\brac{(\gamma-1)^2 + (\gamma+1)\lambda}}{((\lambda+1+\gamma)^2 - 4\gamma)^{5/2} \cdot \paren{ \frac{\lambda+1+\gamma}{\sqrt{(\lambda+1+\gamma)^2 - 4\gamma}} - 1 }^2} \\
  & = \frac{4\gamma^2\brac{(\gamma-1)^2 + (\gamma+1)\lambda}}{((\lambda+1+\gamma)^2 - 4\gamma)^{3/2} \cdot \paren{ \lambda+1+\gamma - \sqrt{(\lambda+1+\gamma)^2 - 4\gamma}}^2}. 
\end{align*}
This proves the desired result.
\end{proof-of-theorem}

\subsubsection{Exchanging derivative and expectation / limit}
\label{appendix:exchange}
Here we rigorously establish the exchange of the derivative and the expectation / limit used in~\eqref{equation:exchange-derivative}. For convenience of notation let $\bSigma=\hat{\bSigma}_n=\Xb_t^\top\Xb_t/n$ denote the empirical covariance matrix of $\Xb_t$. We wish to show that
\begin{align*}
  \frac{d}{d\lambda_2} \lim_{d,n\to\infty,d/n\to\gamma} \frac{1}{d}\E\brac{\tr\paren{ (\lambda_1\Ib_d + \lambda_2 \bSigma)^{-1} }} = \lim_{d,n\to\infty,d/n\to\gamma} \frac{1}{d}\E\brac{ \frac{d}{d\lambda_2} \tr\paren{ (\lambda_1\Ib_d + \lambda_2 \bSigma)^{-1} }}.
\end{align*}
This involves the exchange of derivative and limit, and then the exchange of derivative and expectation. 

\paragraph{Exchange of derivative and expectation}
First, we show that for any fixed $(d,n)$,
\begin{align*}
  \frac{d}{d\lambda_2} \E\brac{ \tr\paren{(\lambda_1 \Ib_d + \lambda_2 \bSigma)^{-1}}} = \E\brac{\frac{d}{d\lambda_2} \tr\paren{(\lambda_1 \Ib_d + \lambda_2 \bSigma)^{-1}} }.
\end{align*}
By definition of the derivative, we have
\begin{align*}
  \frac{d}{d\lambda_2}\E\brac{ \tr\paren{(\lambda_1 \Ib_d + \lambda_2 \bSigma)^{-1}}}  = \lim_{t\to 0} \E\brac{ \frac{\tr\paren{(\lambda_1 \Ib_d + \lambda_2 \bSigma + t\bSigma)^{-1}} - \tr\paren{(\lambda_1 \Ib_d + \lambda_2 \bSigma)^{-1}}}{t} }.
\end{align*}
For any $\Ab\succ \bzero$, the function $t\mapsto \tr((\Ab+t\Bb)^{-1})$ is continuously differentiable at $t=0$ with derivative $-\tr(\Ab^{-2}\Bb)$, and thus locally Lipschitz around $t=0$ with Lipschitz constant $|\tr(\Ab^{-2}\Bb)|+1$. Applying this in the above expectation with $\Ab=\lambda_1\Ib_d+\lambda_2\bSigma\succeq \lambda_1\Ib_d$ and $\Bb=\bSigma$, we get that for sufficiently small $|t|$, the fraction inside the expectation is upper bounded by $|\tr(\lambda_1^{-2}\bSigma)|+1<\infty$ uniformly over $t$. Thus by the Dominated Convergence Theorem, the limit can be passed into the expectation, which yields the expectation of the derivative.

\paragraph{Exchange of derivative and limit}
Define $f_{n,d}(\lambda_2)\defeq \frac{1}{d}\E\brac{\tr\paren{(\lambda_1\Ib_d+\lambda_2\bSigma)^{-1}}}$. It suffices to show that
\begin{align*}
  \frac{d}{d\lambda_2} \lim_{d,n\to\infty,d/n\to\gamma} f_{n,d}(\lambda_2) = \lim_{d,n\to\infty,d/n\to\gamma} f_{n,d}'(\lambda_2),
\end{align*}
where
\begin{align*}
  f_{n,d}'(\lambda_2) = \E\brac{\frac{d}{d\lambda_2} \frac{1}{d} \tr\paren{(\lambda_1 \Ib_d + \lambda_2 \bSigma)^{-1}} } = -\frac{1}{d}\E\brac{\tr\paren{(\lambda_1 \Ib_d + \lambda_2 \bSigma)^{-2}\bSigma} }
\end{align*}
by the result of the preceding part.

As $f_{n,d}(\lambda_2)\to s(\lambda_1,\lambda_2)$ pointwise over $\lambda_2$ by properties of the Wishart matrix~\citep{bai2010spectral} and each individual $f_{n,d}$ is differentiable, it suffices to show that the derivatives $f_{n,d}'(\wt{\lambda}_2)$ converges uniformly for $\wt{\lambda}_2$ in a neighborhood of $\lambda_2$. Observe that can rewrite $f_{n,d}'$ as
\begin{align*}
  f_{n,d}'(\wt{\lambda}_2) = - \E_{\hat{\mu}_{n,d}}\brac{ \E_{\lambda\sim\hat{\mu}_{n,d}}\brac{g_{\wt{\lambda}_2}(\lambda)} },
\end{align*}
where $\hat{\mu}_{n,d}$ is the empirical distribution of the eigenvalues of $\bSigma$, and
\begin{align*}
  g_{\wt{\lambda}_2}(\lambda) \defeq \frac{\lambda}{(\lambda_1+\wt{\lambda}_2\lambda)^2} \le \frac{1}{\lambda_1\wt{\lambda}_2}~~~\textrm{for all}~\lambda\ge 0.
\end{align*}
Therefore, as $\hat{\mu}_{n,d}$ converges weakly to the Marchenko-Pastur distribution with probability one and $g_{\wt{\lambda}_2}$ is uniformly bounded for $\wt{\lambda}_2$ in a small neighborhood of $\lambda_2$, we get that $f_{n,d}'(\wt{\lambda}_2)$ does converge uniformly to the expectation of $g_{\wt{\lambda}_2}(\lambda)$ under the Marchenko-Pastur distribution. This shows the desired exchange of derivative and limit.

\subsection{Proof of main theorem}
\label{appendix:proof-main-comparison-mse}

We are now ready to prove the main theorem (Theorem~\ref{theorem:comparison-mse}).

\paragraph{Part I: Optimal constant for $L^{\nosp}$}
By Theorem~\ref{theorem:high-dim-limit}, we have
\begin{align*}
  & \quad  \inf_{\lambda>0} \lim_{d,n\to\infty, d/n=\gamma} C^{\nosp}_{d,n,\lambda} \\
    & = \inf_{\lambda>0} \underbrace{\frac{4\gamma^2 \brac{(\gamma-1)^2 + (\gamma+1)\lambda}}{(\lambda+1+\gamma - \sqrt{(\lambda+\gamma+1)^2-4\gamma})^2 \cdot \paren{(\lambda+\gamma+1)^2-4\gamma}^{3/2}}}_{\defeq f(\lambda, \gamma)}.
\end{align*}
In order to bound $\inf_{\lambda>0}f(\lambda, \gamma)$, picking any $\lambda=\lambda(\gamma)$ gives $f(\lambda(\gamma), \gamma)$ as a valid upper bound, and our goal is to choose $\lambda$ that yields a bound as tight as possible. Here we consider the choice
\begin{align*}
  \lambda = \lambda(\gamma) = \max\set{1 - \gamma/2, \gamma-1/2} = (1-\gamma/2)\indic{\gamma \le 1} + (\gamma-1/2)\indic{\gamma > 1}
\end{align*}
which we now show yields the claimed upper bound.

\paragraph{Case 1: $\gamma \le 1$} Substituting $\lambda=1-\gamma/2$ into $f(\lambda, \gamma)$ and simplifying, we get
\begin{align*}
  f(1-\gamma/2, \gamma) = \frac{2(\gamma^2-3\gamma+4)}{(2-\gamma/2)^3} \eqdef g_1(\gamma).
\end{align*}
Clearly, $g_1(0)=1$ and $g_1(1) = 32/27$. Further differentiating $g_1$ twice gives
\begin{align*}
  g_1''(\gamma) = \frac{\gamma^2+7\gamma+4}{(2-\gamma/2)^5} > 0~~~\textrm{for all}~\gamma\in[0, 1]. 
\end{align*}
Thus $g_1$ is convex on $[0,1]$, from which we conclude that
\begin{align*}
  g_1(\gamma) \le (1-\gamma)\cdot g_1(0) + \gamma \cdot g_1(1) = 1+\frac{5}{27}\gamma. 
\end{align*}

\paragraph{Case 2: $\gamma > 1$} Substituting $\lambda=\gamma-1/2$ into $f(\lambda,\gamma)$ and simplifying, we get
\begin{align*}
  f(\gamma-1/2, \gamma) = \frac{2\gamma^2(4\gamma^2-3\gamma+1)}{(2\gamma-1/2)^3} \eqdef g_2(\gamma).
\end{align*}
We have $g_2(1)=g_1(1)=32/27$. Further differentiating $g_2$ gives
\begin{align*}
  g_2'(\gamma) = -\frac{1}{(4\gamma-1)^2} - \frac{6}{(4\gamma-1)^3} - \frac{6}{(4\gamma-1)^4} + 1 < 1~~~\textrm{for all}~\gamma > 1.
\end{align*}
Therefore we have for all $\gamma>1$ that
\begin{align*}
  g_2(\gamma) = g_2(1) + \int_1^\gamma g_2'(t)dt \le g_2(1) + \gamma - 1 = \gamma + \frac{5}{27}.
\end{align*}

Combining Case 1 and 2, we get
\begin{align*}
  & \quad \inf_{\lambda>0} f(\lambda, \gamma) \le g_1(\gamma) \\
  & \le \indic{\gamma\le 1} + g_2(\gamma) \indic{\gamma > 1} \le \paren{1+\frac{5}{27}\gamma}\indic{\gamma\le 1} + \paren{\frac{5}{27} + \gamma}\indic{\gamma > 1} \\
  & = \max\set{ 1+\frac{5}{27}\gamma, \frac{5}{27} + \gamma}.
\end{align*}
This is the desired upper bound for $L^{\nosp}$.

\paragraph{Equality at $\gamma=1$}
We finally show that the above upper bound becomes an equality when $\gamma=1$. At $\gamma=1$, we have
\begin{align*}
  f(\lambda, 1) = \frac{8\lambda}{(\lambda+2-\sqrt{\lambda^2+4\lambda})^2 (\lambda^2+4\lambda)^{3/2}} = \frac{8\lambda^{-4}}{(1 + 2/\lambda - \sqrt{1 + 4/\lambda})^2 (1+4/\lambda)^{3/2}}.
\end{align*}
Make the change of variable $t=\sqrt{1+4/\lambda}$ so that $\lambda^{-1} = (t^2-1)/4$, minimizing the above expression is equivalent to minimizing
\begin{align*}
  \frac{(t^2-1)^4/32}{(t^2/2-t+1/2)^2t^3} = \frac{(t+1)^4}{8t^3}
\end{align*}
over $t>1$. It is straightforward to check (by computing the first and second derivatives) that the above quantity is minimized at $t=3$ with value $32/27$. In other words, we have shown
\begin{align*}
  \inf_{\lambda>0} f(\lambda, 1) = \frac{32}{27} = \max\set{1 + \frac{5}{27}\gamma, \frac{5}{27}+\gamma}\bigg|_{\gamma=1},
\end{align*}
that is, the equality holds at $\gamma=1$. 

\paragraph{Part II: Optimal constant for $L^{\sp}$}
We now prove the result on $L^{\sp}$, that is,
\begin{align*}
  & \quad \inf_{\lambda>0, s\in(0,1)} \lim_{d,n\to\infty, d/n=\gamma} C^{\sp}_{d,ns,n(1-s),\lambda} \\
  & \stackrel{(i)}{=} \lim_{d,n\to\infty, d/n=\gamma} \underbrace{\inf_{\lambda>0, n_1+n_2=n} C^{\sp}_{d,n_1,n_2,\lambda}  }_{\frac{d+n+1}{n}}
    \stackrel{(ii)}{=} 1+\gamma.
\end{align*}
First, equality (ii) follows from Lemma~\ref{lemma:optimally-tuned-rates} and the fact that $(d+n+1)/n\to 1+\gamma$. Second, the ``$\ge$'' direction of equality (i) is trivial (since we always have ``$\inf \lim \ge \lim\inf$''). Therefore we get the ``$\ge$'' direction of the overall equality, and it remains to prove the ``$\le$'' direction.

For the ``$\le$'' direction, we fix any $\lambda>0$, and bound $C^{\sp}_{d,n_1,n_2,\lambda}$ (and consequently its limit as $d,n\to\infty$.) We have by Theorem~\ref{theorem:concentration-mse} (with the eigenvalue-based expressions in Lemma~\ref{lemma:moments}) that
\begin{align*}
  & \quad C^{\sp}_{d,n_1,n_2,\lambda} = \frac{d}{n_2} \cdot \frac{\E\brac{ \paren{\sum_{i=1}^d \lambda^2/(\sigma_i^{(n_1)} + \lambda)^2}^2 + (n_2+1)\sum_{i=1}^d \lambda^4/(\sigma_i^{(n_1)} + \lambda)^4}}{\paren{ \E\brac{ \sum_{i=1}^d \lambda^2/(\sigma_i^{(n_1)} + \lambda)^2 } }^2} \\
  & \le \frac{d}{n_2} \cdot \frac{d^2+(n_2+1)d}{ \paren{ \E\brac{ \sum_{i=1}^d \lambda^2/(\sigma_i^{(n_1)} + \lambda)^2 } }^2 } \\
  & = \frac{d+n_2+1}{n_2} \cdot \frac{1}{ \paren{ \E\brac{ \frac{1}{d}\sum_{i=1}^d \lambda^2 / (\sigma_i^{(n_1)} +\lambda)^2 } }^2 } \\
\end{align*}
Observe that
\begin{align*}
  & \quad \E\brac{ \frac{1}{d}\sum_{i=1}^d \frac{\lambda^2}{ (\sigma_i^{(n_1)} +\lambda)^2} }
    \stackrel{(i)}{\ge} \E\brac{ \frac{\lambda^2}{ \paren{ \sum_{i=1}^d \sigma_i^{(n_1)}/d + \lambda }^2} } \\
  & \stackrel{(ii)}{\ge} \frac{\lambda^2}{ \paren{\E\brac{\sum_{i=1}^d \sigma_i^{(n_1)}/d} + \lambda}^2}
    \stackrel{(iii)}{=} \frac {\lambda^2}{(1+\lambda)^2},
\end{align*}
where (i) follows from the convexity of $t\mapsto \lambda^2/(t+\lambda)^2$ on $t\ge 0$; (ii) follows from the same convexity and Jensen's inequality, and (iii) is since $\E\brac{\sum_{i=1}^d \sigma_i^{(n_1)}} = \E\brac{\tr(\frac{1}{n_1}\Xb_t^\top\Xb_t)} = \E\brac{\lfro{\Xb_t}^2/n_1} = d$. Applying this in the preceeding bound yields
\begin{align*}
  C^{\sp}_{d,n_1,n_2,\lambda} \le \frac{d+n_2+1}{n_2} \cdot \frac{(1+\lambda)^2}{\lambda^2}.
\end{align*}
Further plugging in $n_1=ns$ and $n_2=n(1-s)$ for any $s\in(0,1)$ yields
\begin{align*}
  \lim_{d,n\to\infty,d/n\to\gamma}  C^{\sp}_{d,ns,n(1-s),\lambda}  \le \frac{\gamma+1-s}{1-s} \cdot \frac{(1+\lambda)^2}{\lambda^2}.
\end{align*}
Finally, the right-hand side is minimized at $\lambda\to\infty$ and $s=0$, from which we conclude that
\begin{align*}
  \inf_{\lambda>0,s\in(0,1)} \lim_{d,n\to\infty,~d/n\to\gamma}  C^{\sp}_{d,ns,n(1-s),\lambda} \le 1+\gamma,
\end{align*}
which is the desired ``$\le$'' direction.
\qed

\section{Connections to Bayesian estimator}
\label{appendix:bayesian}
Here we discuss the relationship between our \trtr meta-learining estimator using ridge regression solvers and a Bayesian estimator under a somewhat natural hierarchical generative model for the realizable setting in Section~\ref{section:realizable}. We show that these two estimators are not equal in general, albeit they have some similarities in their expressions.

We consider the following hierarchical probabilitistic model:
\begin{align*}
  \wb_{0,\star} \sim \normal\paren{0, \frac{\sigma_w^2}{d}\Ib_d},~~~\wb_t | \wb_{0,\star}\simiid \normal\paren{\wb_{0,\star}, \frac{R^2}{d}\Ib_d},~~~\yb_t = \Xb_t\wb_t + \sigma \zb_t~~{\rm where}~\zb_t\simiid \normal(0, \Ib_n).
\end{align*}
This model is similar to our realizable linear model~\eqref{equation:realizable-model}, except that $\wb_0$ has a prior and that there is observation noise in the data (such that data likelihoods and posteriors are well-defined). We also note that the Gaussian distribution assumption (with $R^2/d$ variance) for $\wb_t$ is consistent with our Assumption~\ref{assumption:realizable}.

\paragraph{Bayesian estimator}
We now derive the Bayesian posterior mean estimator of $\wb_{0,\star}$, which requires us to compute the posterior distribution of $\wb_{0,\star}$ given the data $\set{(\Xb_t, \yb_t)}_{t=1}^T$\footnote{Hereafter we treat $\Xb_t$ as fixed, as the density of $\Xb_t$ won't affect the Bayesian calculation.}.

We begin by computing the likelihood of one task by marginalizing over $\wb_t$:
\begin{align*}
  & \quad p(\Xb_t, \yb_t | \wb_{0,\star}) \propto \int p(\wb_t | \wb_{0,\star}) \cdot p(\yb_t | \Xb_t, \wb_t) d\wb_t \\
  & \propto \int \exp\paren{-\frac{\norm{\wb_t - \wb_{0,\star}}^2}{2R^2/d}} \cdot \exp\paren{ -\frac{\norm{\yb_t - \Xb_t\wb_t}^2}{2\sigma^2} } d\wb_t \\
  & \stackrel{(i)}{\propto} \exp\paren{-\frac{\norm{\wb_{0,\star}}^2}{2R^2/d} + \frac{1}{2}\paren{\frac{\wb_{0,\star}}{R^2/d} + \frac{\Xb_t^\top\yb_t}{\sigma^2}}^\top \paren{ \frac{\Xb_t^\top\Xb_t}{\sigma^2} + \frac{\Ib_d}{R^2/d} }^{-1} \paren{\frac{\wb_{0,\star}}{R^2/d} + \frac{\Xb_t^\top\yb_t}{\sigma^2}}  } \\
  & \propto \exp\paren{-\frac{1}{2}\wb_{0,\star}^\top\paren{ \paren{\Xb_t^\top\Xb_t + \frac{d\sigma^2}{R^2}\Ib_d}^{-1}\frac{\Xb_t^\top\Xb_t}{R^2/d} }\wb_{0,\star} + \wb_{0,\star}^\top \paren{\Xb_t^\top\Xb_t + \frac{d\sigma^2}{R^2}\Ib_d}^{-1}\frac{\Xb_t^\top\yb_t}{R^2/d}},
\end{align*}
where (i) is obtained by integrating a multivariate Gaussian density over $\wb_t$, and ``$\propto$'' drops all the terms that do not depend on $\wb_{0,\star}$. Therefore, by the Bayes rule, the overall posterior distribution of $\wb_{0,\star}$ is given by
\begin{align*}
  & \quad p\paren{\wb_{0,\star} | \set{(\Xb_t, \yb_t)}_{t=1}^T} \propto p(\wb_{0,\star}) \cdot \prod_{t=1}^T p(\Xb_t, \yb_t | \wb_{0,\star}) \\
  & \propto \exp\paren{-\frac{\norm{\wb_{0,\star}}^2}{2\sigma_w^2/d}} \cdot \\
  & \qquad \prod_{t=1}^T \exp\paren{-\frac{1}{2}\wb_{0,\star}^\top\paren{ \paren{\Xb_t^\top\Xb_t + \frac{d\sigma^2}{R^2}\Ib_d}^{-1}\frac{\Xb_t^\top\Xb_t}{R^2/d} }\wb_{0,\star} + \wb_{0,\star}^\top \paren{\Xb_t^\top\Xb_t + \frac{d\sigma^2}{R^2}\Ib_d}^{-1}\frac{\Xb_t^\top\yb_t}{R^2/d}}.
\end{align*}
This means that the posterior distribution of $\wb_{0,\star}$ is Gaussian, with mean , i.e. the Bayesian estimator, equal to\footnote{Any density $p(\wb)\propto \exp(-\wb^\top \Ab\wb/2 + \wb^\top\cbb)$ specifies a Gaussian distreibution $\normal(\bmu, \bSigma)$, where $\Ab=\bSigma^{-1}$ and $\cbb=\bSigma^{-1}\bmu$, so that $\bmu=\Ab^{-1}\cbb$.}
\begin{align*}
  \hat{\wb}_{0,T}^{\sf Bayes}\defeq \E\brac{\wb_{0,\star} \mid \set{(\Xb_t, \yb_t)}_{t=1}^T} = (\Ab_T^{\sf Bayes})^{-1} \cbb_T^{\sf Bayes},
\end{align*}
where
\begin{align*}
  & \Ab_T^{\sf Bayes} \defeq \frac{d}{\sigma_w^2} \Ib_d + \sum_{t=1}^T \paren{\Xb_t^\top\Xb_t + \frac{d\sigma^2}{R^2}\Ib_d}^{-1}\frac{\Xb_t^\top\Xb_t}{R^2/d}, \\
  & \cbb_T^{\sf Bayes} \defeq \sum_{t=1}^T \paren{\Xb_t^\top\Xb_t + \frac{d\sigma^2}{R^2}\Ib_d}^{-1}\frac{\Xb_t^\top\yb_t}{R^2/d}.
\end{align*}
We note that $\hat{\wb}_{0,T}^{\sf Bayes}$ has a similar form as our \trtr estimator, but is not exactly the same. Indeed, recall the closed form of our \trtr estimator is (cf.~\eqref{equation:l1-global-min})
\begin{align*}
  \hat{\wb}^{\nosp}_{0,T} = (\Ab_T^{\nosp})^{-1}\cbb_T^{\nosp},
\end{align*}
where
\begin{align*}
  & \Ab_T^{\nosp} = \sum_{t=1}^T \paren{\Xb_t^\top\Xb_t + n\lambda \Ib_d}^{-2} \Xb_t^\top\Xb_t, \\
  & \cbb_T^{\nosp} = \sum_{t=1}^T \paren{\Xb_t^\top\Xb_t + n\lambda \Ib_d}^{-2} \Xb_t^\top\yb_t.
\end{align*}
As $\hat{\wb}^{\sf Bayes}_{0,T}$ uses the {\bf inverse} and $\hat{\wb}^{\nosp}_{0,T}$ uses the {\bf squared inverse}, these two sets of estimators are not the same in general, no matter how we tune the $\lambda$ in the \trtr estimator. This is true even if we set $\sigma_w=\infty$ so that the prior of $\wb_{0,\star}$ becomes degenerate (and the Bayesian estimator reduces to the MLE).

\section{Details on the few-shot image classification experiment}
\label{appendix:deep}
Here we provide additional details of the few-shot image classification experiment in Section~\ref{section:deep}.

\paragraph{Optimization and architecture}
For both methods, we run a few gradient steps on the inner argmin problem to obtain (an approximation of) $\wb_t$, and plug $\wb_t$ into the $\grad_{\wb_0} \ell_t^{\set{\sp,\nosp}}(\wb_0)$ (which involves $\wb_t$ through implicit function differentiation) for optimizing $\wb_0$ in the outer loop. 

For both \trtr and \trval methods, we use the standard $4$-layer convolutional network in~\citep{finn2017model,zhou2019efficient} as the backbone (i.e. the architecture for $\wb_t$). We further tune their hyper-parameters, such as the regularization constant $\lambda$, the learning rate (initial learning rate and its decay strategy), and the gradient clipping threshold. %

\paragraph{Dataset and evaluation}
MiniImageNet consists of $100$ classes of images from ImageNet~\citep{krizhevsky2012imagenet} and each class has $600$ images of resolution $84\times84\times 3$. We use $64$ classes for training, $16$ classes for validation, and the remaining $20$ classes for testing~\citep{ravi2016optimization}. TieredImageNet consists of $608$ classes from the ILSVRC-$12$ data set~\citep{russakovsky2015imagenet} and each image is also of resolution $84\times 84\times 3$. TieredImageNet groups classes into broader hierarchy categories corresponding to higher-level nodes in the ImageNet. Specifically, its top hierarchy has $20$ training categories ($351$ classes), $6$ validation categories ($97$ classes) and $8$ test categories (160 classes). This structure ensures that all training classes are distinct from the testing classes, providing a more realistic few-shot learning scenario.

We evaluate both methods under the transduction setting where the information is shared between the test data via batch normalization.

\subsection{Effect of the split ratio for the \trval method}
\label{appendix:split-ratio}
We further tune the split $(n_1, n_2)$ in the \trval method and report the results in Table~\ref{comparisontable2}. As can be seen, as the number of test samples $n_2$ increases, the percent classification accuracy on both the miniImageNet and tieredImageNet datasets becomes higher. This testifies our theoretical affirmation in Lemma~\ref{lemma:optimally-tuned-rates}. However, note that even if we take the best split  $(n_1,n_2)=(5,25)$ (and compare again with Table~\ref{comparisontable}), the \trval method still performs worse than the \trtr method.

We remark that our theoretical results on \trtr performing better than \trval (in Section~\ref{section:realizable}) rely on the assumptions that the data can be exactly realized by the representation and contains no label noise. Our experimental results here may suggest that the miniImageNet and tieredImageNet few-shot tasks may have a similar structure (there exists a NN representation that almost perfectly realizes the label with no noise) that allows the \trtr method to perform better than the \trval method.

\begin{table*}[h]
  \centering
  \caption{Investigation of the effects of  training/validation splitting ratio in  the \trval method (iMAML) to the few-shot classification accuracy ($\%$)  on  miniImageNet and tieredImageNet.}
  \label{comparisontable2}
  \begin{tabular}{c|ccc}
    \bottomrule 
    datasets  & $n_1=25, n_2=5$ &  $n_1=15, n_2=15$  &  $n_1=5, n_2=25$   \\
    \midrule
    miniImageNet &   62.09 $\pm$ 0.97   &	63.56 $\pm$ 0.95 &  63.92 $\pm$ 1.04   \\
    tieredImageNet &  66.45 $\pm$ 1.05   &67.30	 $\pm$ 0.98  &  67.50 $\pm$ 0.94    \\
    \bottomrule
  \end{tabular}
\end{table*}

\section{Comparison with Cross-Validation on Synthetic Data}
\label{appendix:cross-validation}

We test the effect of using cross-validation for the \trval method on the same synthetic data (realizable linear centroid meta-learning) as in Section \ref{sec:synthetic}.

\paragraph{Method}
We fix the number of per-task data $n = 20$, and use $4$-fold cross validation in the following two settings: $(n_1,n_2)=(5,15)$, and $(n_1,n_2)=(15,5)$. In both cases, we partition the data into 4 parts each with 5 data points, and we roulette over $4$ possible partitions of which one as train and which one as validation. The estimated optimal $\hat{\wb}_0^{\textrm{cv}}$ is obtained by minimize the averaged \trval loss over the 4 partitions:
\begin{align*}
\ell^{\textrm{cv}}_t(\wb_0) & \defeq \frac{1}{4} \sum_{j=1}^4 \frac{1}{2n_{\textrm{\val}}} \norm{ \yb_t^{\val, j} - \Xb_t^{\val, j} \cA_\lambda(\wb_0; \Xb_t^{\train, j}, \yb_t^{\train, j})}^2, \\
\hat{\wb}_0^{\textrm{cv}} & = \argmin_{\wb_0} \frac{1}{T}\sum_{t=1}^T \ell^{\textrm{cv}}_t(\wb_0),
\end{align*}
where superscript $j$ denotes the index of the cross-validation. The performance is depicted in Figure \ref{fig:cv}.
\begin{figure}[h]
\centering
\includegraphics[width = 0.5\textwidth]{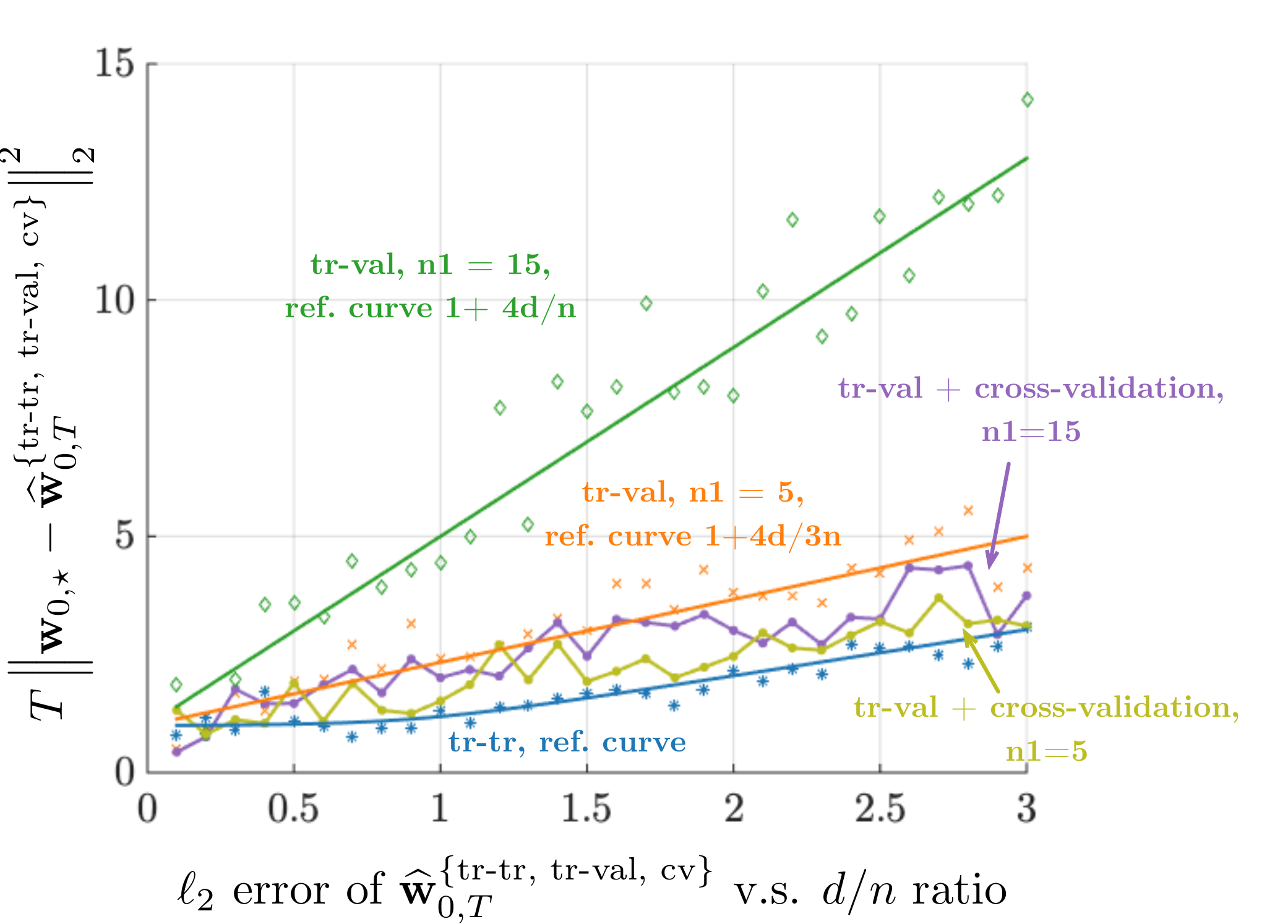}
\caption{The scaled (by $T$) $\ell_2$-error of $\hat{\wb}_{0, T}^{\{\nosp, \sp, \textrm{cv}\}}$ as the ratio $d / n$ varies from $0$ to $3$ ($n=20$ and $T = 1000$ are fixed). For the cross-validation method, the regularization coefficient $\lambda = 0.5$ is tuned.}
\label{fig:cv}
\end{figure}

\paragraph{Result}
As showin in Figure~\ref{fig:cv}, for both $(n_1,n_2)=(15,5)$ and $(n_1,n_2)=(5,15)$, using cross-validation consistently beats the performance of the \trval method. This demonstrates the variance-reduction effect of cross-validation. Note that the best performance (among the cross-validation methods) is still achieved at $n_1=5$, similar as for the vanilla \trval method. However, numerically, the best cross-validation performance is still not as good as the \trtr method.

\paragraph{Leave-one-out cross-validation}
Figure~\ref{fig:loo_cv} left further tests with an increased number of per-task samples $n = 40$, and incorporates the \trval method with the leave-one-out cross-validation, i.e., $(n_1, n_2) = (39, 1)$ and $(n_1, n_2) = (1, 39)$. We repeat the experiment $10$ times for plotting the error bar (shaded area). We see that the \trtr method still outperforms the \trval method with leave-one-out validation.

We further increase the per-task sample size $n$ to $200$, and test the leave-one-out method with a sample split of $(n_1, n_2) = (1, 199)$. We adopt a matrix inverse trick to mitigate the computational overhead of finding $\cA_\lambda(\wb_0; \Xb_t^{\train, j}, \yb_t^{\train, j})$. To ease the computation, we also vary $d$ from $0$ to $400$ on a coarse grid (with an increment of $80$). From Figure \ref{fig:loo_cv} right, we see that the leave-one-out method can slightly beat the \trtr method for some $d/n$ values. Compared to $n = 20$ and $n = 40$ experiments, this is the first time of seeing leave-one-out method outperforms the \trtr method. We suspect that the per-task sample size $n$ plays a vital role in the power of the leave-one-out method: a large $n$ tends to have a strong variance reduction effect in the leave-one-out method, so that the performance can be improved. Yet using the leave-one-out method with a large $n$ invokes a high computational burden.
\begin{figure}[h]
\centering
\includegraphics[width = 0.44\textwidth]{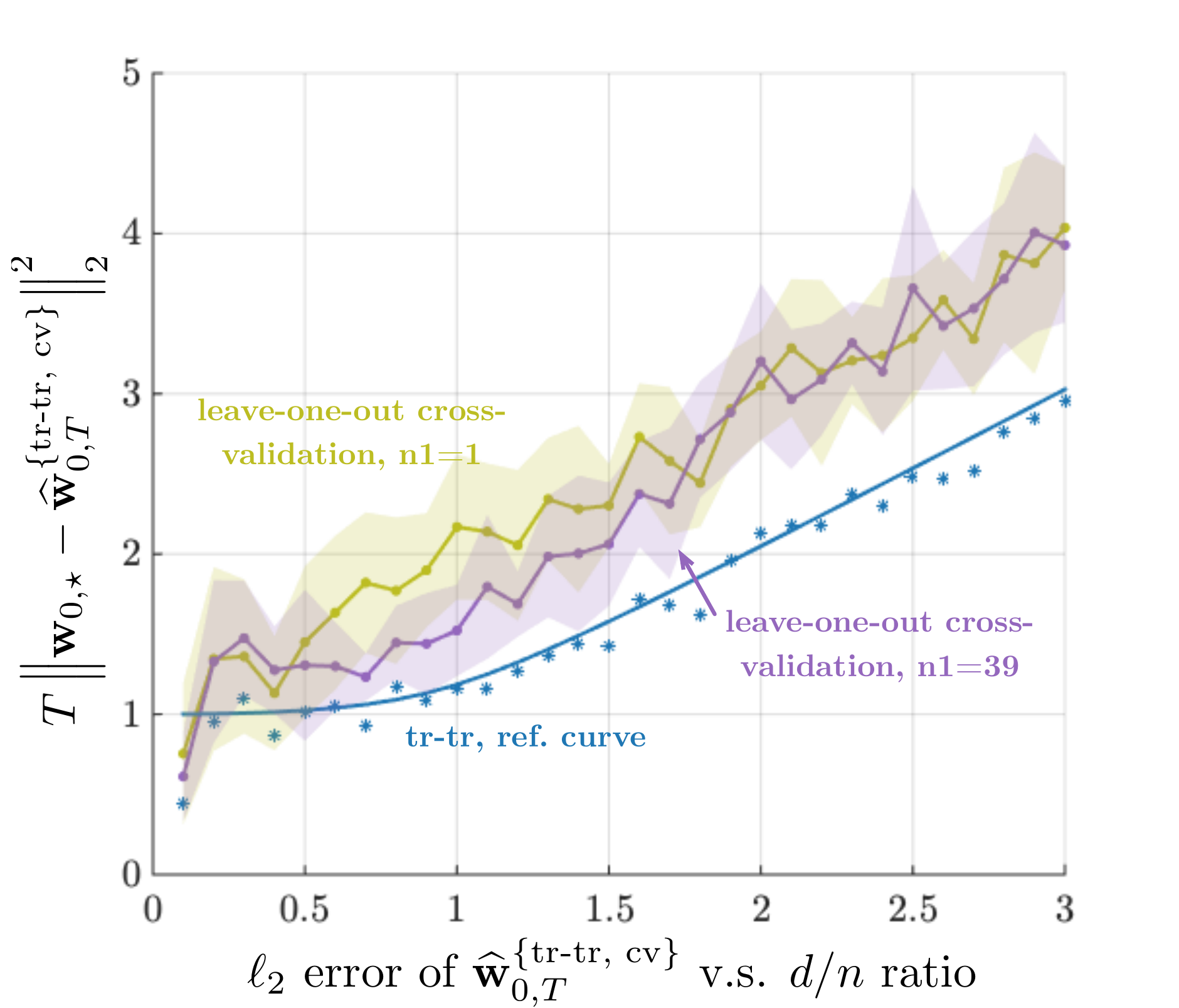}
~
\includegraphics[width = 0.48\textwidth]{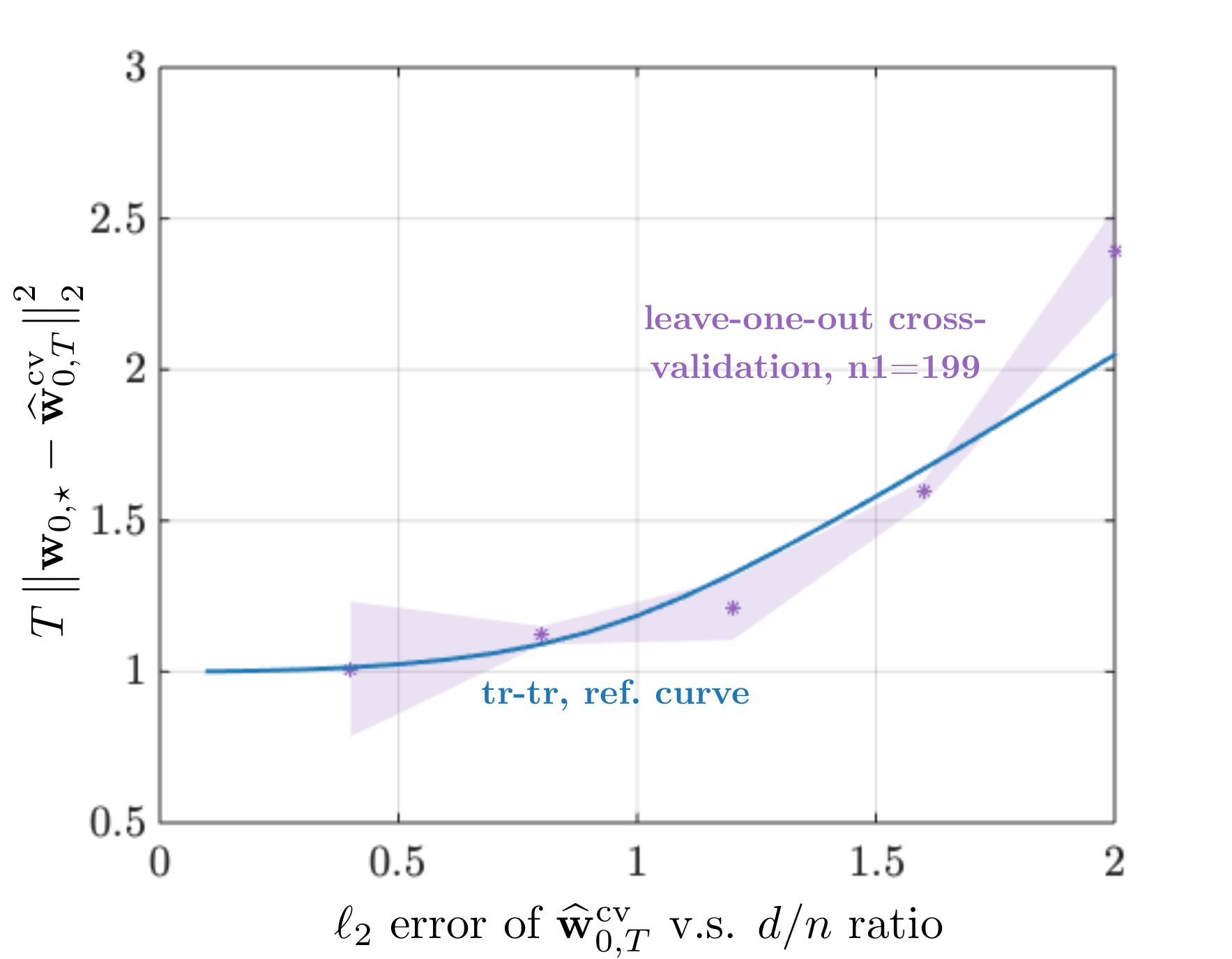}
\caption{The scaled (by $T$) $\ell_2$-error of $\hat{\wb}_{0, T}^{\{\nosp, \textrm{cv}\}}$ as the ratio $d / n$ varies from $0$ to $3$ ($n\in\set{40, 200}$ and $T = 1000$ are fixed). For the cross-validation method, the regularization coefficient $\lambda = 0.5$. Left: $n=40$. Leave-out-out CV performs worse than the \trtr method. Right: $n=200$. Leave-one-out CV appears better than the \trtr method for $d/n\in\set{1.2, 1.6}$.}
\label{fig:loo_cv}
\end{figure}

\newpage

\end{document}